\providecommand{\citep}[1]{\cite{#1}}
\providecommand{\citet}[1]{\cite{#1}}
\pgfplotsset{compat=newest}
\newcommand{\best}[1]{\textbf{\textcolor{BrickRed}{#1}}}
\newcommand{\second}[1]{{\textcolor{Black}{#1}}}
\definecolor{lemmacolor}{HTML}{E8F0FE}    
\definecolor{lemmaborder}{HTML}{4A86C8}   
\definecolor{theoremcolor}{HTML}{FDE8E8}  
\definecolor{theoremborder}{HTML}{C84A4A} 
\definecolor{insightcolor}{HTML}{FFF8E1}  
\definecolor{insightborder}{HTML}{F9A825} 
\newtcolorbox{lemmbox}{
  colback=lemmacolor, colframe=lemmaborder,
  boxrule=0.4pt, arc=1.5pt,
  left=5pt, right=5pt, top=2pt, bottom=2pt,
  before skip=4pt, after skip=4pt,
  fonttitle=\bfseries, breakable
}
\newtcolorbox{theorembox}{
  colback=theoremcolor, colframe=theoremborder,
  boxrule=0.4pt, arc=1.5pt,
  left=5pt, right=5pt, top=2pt, bottom=2pt,
  before skip=4pt, after skip=4pt,
  fonttitle=\bfseries, breakable
}
\newtcolorbox{insightbox}{
  colback=insightcolor, colframe=insightborder,
  boxrule=0.5pt, arc=1.5pt,
  left=5pt, right=5pt, top=2pt, bottom=2pt,
  before skip=4pt, after skip=4pt,
  enhanced, borderline west={2pt}{0pt}{insightborder}
}
  \crefname{figure}{Fig.}{Figs.}%
  \crefname{table}{Tab.}{Tabs.}%
  \crefname{section}{Sec.}{Secs.}%
  \crefname{equation}{Eq.}{Eqs.}%
\def\1{\bm{1}}
\DeclareMathAlphabet{\mathsfit}{\encodingdefault}{\sfdefault}{m}{sl}
\SetMathAlphabet{\mathsfit}{bold}{\encodingdefault}{\sfdefault}{bx}{n}
\def\sR{{\mathbb{R}}}
\begin{document}
\title{Stable Forgetting: Bounded Parameter-Efficient Unlearning in Foundation Models}
\titlerunning{Stable Forgetting}
\author{
Arpit Garg\thanks{Equal contribution.}\orcidlink{0000-0001-5886-5822}
\and
Hemanth Saratchandran\textsuperscript{$\star$}
\and
Ravi Garg
\and
Simon Lucey
}
\authorrunning{A.~Garg et al.}

\institute{
Australian Institute for Machine Learning (AIML),\\
Adelaide University, Adelaide, Australia\\
\email{arpit.garg@adelaide.edu.au}
}

\maketitle
\begin{abstract}
Machine unlearning in foundation models (e.g., language and vision transformers) is essential for privacy and safety; however, existing approaches are unstable and unreliable. A widely used strategy, the gradient difference method, applies gradient descent to retained data while performing gradient ascent on forgotten data. When combined with cross-entropy, this procedure can trigger the unbounded growth of weights and gradients, degrading both forgetting and retention. We provide a theoretical framework that explains this failure by showing how ascent destabilizes optimization in transformer feedforward MLP layers. Guided by this insight, we propose \textit{Bounded Parameter-Efficient Unlearning}, which stabilizes LoRA-based fine-tuning by applying bounded functions to MLP adapters. This controls the weight dynamics during ascent and enables reliable convergence. We validate the approach on Vision Transformer class deletion on CIFAR-100, where GD+Sine is the only evaluated method to achieve both high forget quality and model utility across ViT-B/16, ViT-L/14, and DeiT-S architectures, and demonstrate generality on language-model benchmarks (TOFU, TDEC, MUSE) across architectures from 22M to 8B parameters, achieving improved forgetting while preserving utility.\footnote{Code will be open-sourced upon acceptance.}
\keywords{Machine Unlearning \and Safety and Privacy \and Data Governance}

\end{abstract}

\section{Introduction}

The advent of foundation models has profoundly reshaped machine learning; however, their large-scale deployment has revealed critical vulnerabilities in safety and data governance~\cite{GDPR_EU_Art17_2026}. During pretraining, these models absorb massive datasets that frequently contain sensitive, copyrighted, or personally identifiable information~\citep{Shi2024}, making the ability to selectively \emph{forget} such information a regulatory requirement and a technical necessity~\citep{bourtoule2021machine,cao2015towards}. Machine unlearning, which involves removing the influence of specific data without full retraining, is one of the most urgent challenges in the ethical deployment of large-scale models.

Current approaches face fundamental limitations in this regard. While the broader literature spans gradient-based, parameter-efficient, preference-based, and representation-editing families (see~\cref{sec:related_work}), the two most widely adopted strategies are as follows. The \uline{first} is \textit{full fine-tuning}: the standard procedure applies gradient ascent on a forget set with cross-entropy loss~\citep{Maini2024}, leading to instability in training and degradation in retention quality. The \emph{gradient difference} (GD) method~\citep{Maini2024,Cha2025} addresses this by jointly applying gradient descent on a retention set and gradient ascent on the forget set; however, this combination remains unstable under cross-entropy. The \uline{second} is \textit{parameter-efficient fine-tuning}, such as LoRA~\citep{Hu2022}, which reduces computational and memory costs but continues to suffer the same instability under gradient difference and cross-entropy across transformer architectures. These limitations have motivated incremental refinements: Fisher information-weighted initialization (FILA)~\citep{kim2025improving} and Inverted Hinge Loss (IHL)~\citep{Cha2025} improve stability through carefully designed objectives and initialization strategies; however, their combination~\citep{Cha2025} provides only partial relief. More recent adversarial unlearning frameworks~\citep{setlur2022adversarial} and residual feature alignment methods~\citep{li2024fast} are fundamentally constrained by their linear parameterization.

\begin{figure}[t]
    \centering
\includegraphics[width=\textwidth]{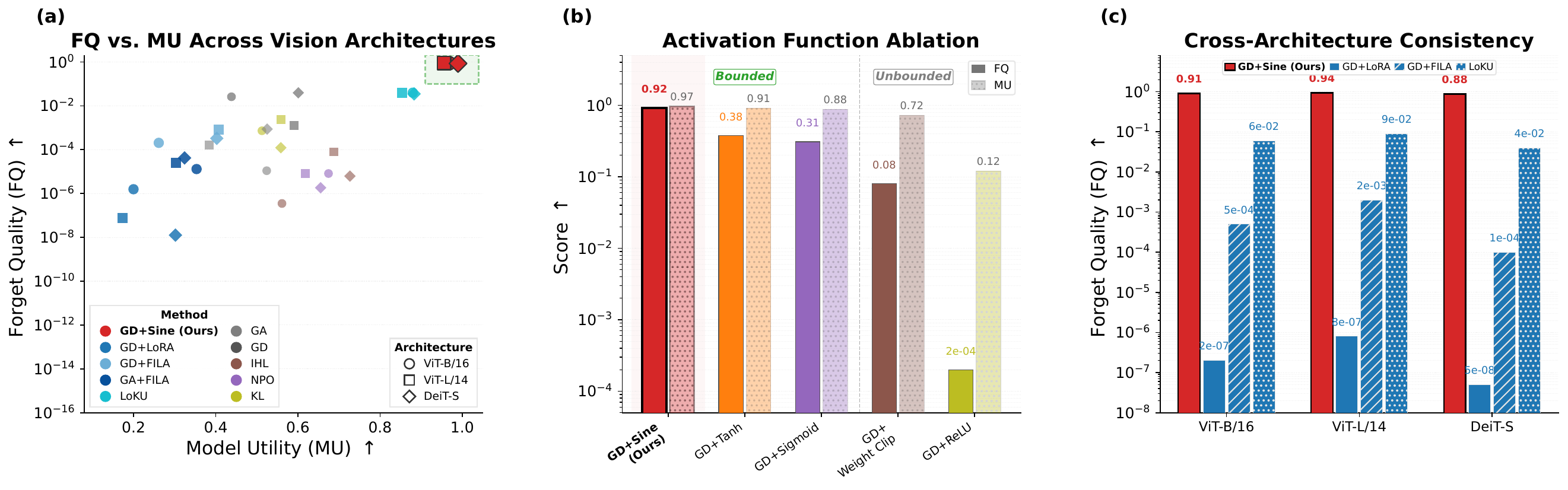}
\caption{\textbf{GD+Sine is the only method to achieve both high Forget Quality and Model Utility across vision architectures.}    \textbf{(\textcolor{BrickRed}{a})}~FQ vs.\ MU on ViT-B/16, ViT-L/14, and DeiT-S: only GD+Sine (\textcolor{red}{red}) reaches the ideal zone (green box), while parameter-efficient and full fine-tuning baselines fail on one or both axes. \textbf{(\textcolor{BrickRed}{b})}~Activation ablation on ViT-B/16: Sine alone achieves near-perfect FQ (0.92) and MU (0.97); unbounded activations collapse on both metrics. \textbf{(\textcolor{BrickRed}{c})}~GD+Sine dominates consistently across all three architectures by orders of magnitude in forget quality over parameter-efficient baselines maintaining model utility.}
\label{fig:hero}
\end{figure}

We provide a theoretical framework for analyzing the training instability of the gradient difference method under cross-entropy. Our analysis shows that the ascent step causes the weights and gradients in the transformer feedforward MLP (FFN) layers to grow excessively, establishing the root cause. Parameterizing FFN weights with a \emph{bounded function} provides a principled mechanism for stabilizing the optimization under gradient ascent. Building on this, we extended the gradient difference framework with LoRA-based fine-tuning, demonstrating that bounded parameterization directly stabilizes weights and gradients during unlearning. We propose \textbf{bounded parameter-efficient unlearning}, which applies bounded functions to LoRA adapters in FFN layers, enabling stable fine-tuning with a cross-entropy forgetting objective. This directly overcomes the key limitations of previous methods, providing theoretical guarantees and effectiveness.

\begin{figure}[t]
    \centering
\includegraphics[width=\textwidth]{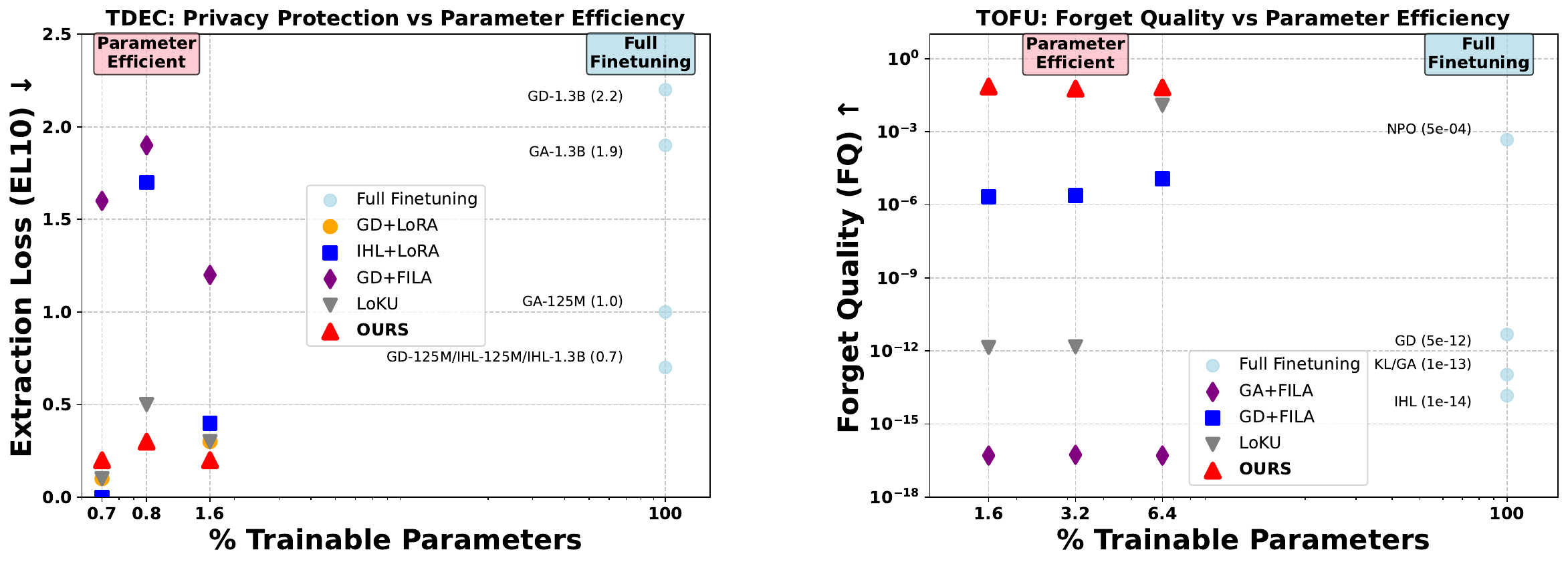}
    \caption{\textbf{Balancing efficiency and effectiveness in parameter tuning.} 
\textbf{(\textcolor{BrickRed}{Left})}
On TDEC, our method achieves stronger privacy protection than existing parameter-efficient baselines while requiring fewer parameters than full-tuning. 
\textbf{(\textcolor{BrickRed}{Right})} On TOFU, our approach maintains consistently high forget quality across LoRA ranks, outperforming state-of-the-art baselines by orders of magnitude while preserving parameter efficiency.}
\label{fig:intro_performance}
\end{figure}

We evaluated our framework on standard techniques in the field, including ViT class deletion on CIFAR-100 (\cref{fig:hero}), and TOFU, TDEC, and MUSE benchmarks (\cref{fig:intro_performance}), spanning ViT, DeiT, GPT-Neo, Phi, and LLaMA architectures (22M-8B parameters). Our method achieves strong unlearning performance with significant improvements in forget quality over existing methods, while preserving model utility. Our main contributions are as follows.
\begin{enumerate}
    \item We develop a theoretical framework for the gradient difference method with cross-entropy loss, showing instability arises because the ascent step drives uncontrolled growth of weights and gradients in MLP feedforward layers. We derived the key insight that parameterizing the feedforward weights with a bounded function stabilizes the gradient ascent.

    \item Building on this principle, we propose \textbf{bounded parameter-efficient unlearning}, a parameter-efficient method that applies bounded functions to LoRA adapters in feedforward layers. Our approach achieves stable unlearning, delivering substantial improvements in forget quality while preserving the utility across benchmarks.
\end{enumerate}
\section{Related Work}\label{sec:related_work}

\paragraph{Machine unlearning.}
Machine unlearning removes the influence of specific data without full retraining~\citep{bourtoule2021machine,cao2015towards}. For foundation models, retraining is infeasible at scale~\citep{Nguyen2022,Ye2023}, motivating the development of efficient alternatives.
Recent methods can be classified into four categories:
(1)~\textit{ Full fine-tuning} applies gradient-based forgetting objectives~\citep{Yao2024,huang2024unified,Maini2024}.
(2)~\textit{ Parameter-efficient fine-tuning} using LoRA-style adapters~\citep{Hu2022,Cha2025,kim2025improving}
(3)~\textit{ Preference-based methods} that leverage alignment signals~\citep{Rafailov2023,Zhang2024c};
and (4) \textit{ representation/weight-editing methods} that directly alter internal activations or weights~\citep{Ilharco2022,Meng2023ROME}.
Vision approaches target class/concept removal via saliency masking~\citep{fan2024salun}, contrastive mechanisms, and parameter-efficient parameterizations~\citep{li2024fast,ren2025lethevit,yu2024retain,roy2025novo,khalil2025coun}.
We focus on (1) and (2) as they are the most relevant; our method belongs to category (2). An in-depth survey of each category is presented in \cref{app:extended_relatedWork}.

\paragraph{Full fine-tuning methods.}
Applying cross-entropy gradient ascent to the forget set~\citep{Maini2024} routinely destabilizes training and degrades retention.
The \emph{gradient difference} (GD) method~\citep{Maini2024,Cha2025} adds a gradient descent retention objective to counteract this, yet remains unstable under cross-entropy ascent.
FILA~\citep{kim2025improving} and Inverted Hinge Loss (IHL)~\citep{Cha2025} partially mitigate instability, but neither identifies or resolves the \emph{root cause}: unbounded weight growth.
Full fine-tuning is also computationally expensive because it updates all model weights.

\paragraph{Parameter-efficient fine-tuning.}
LoRA~\citep{Hu2022} factorizes weight updates into low-rank matrices, reducing the overhead from $dk$ to $(d{+}k)r$.
Combined with the gradient difference, LoRA-based unlearning still suffers from severe cross-entropy instability~\citep{huang2024unified,Cha2025}.
Recent efforts have addressed this issue via influence reweighting (GUARD~\citep{niu2025guard}, RapidUn~\citep{zhao2025rapidun}), gradient reconstruction (R2F~\citep{liu2025r2f}), and bootstrapping (LUNE~\citep{liu2025lune}, BB~\cite{li2025llm}), targeting symptoms rather than root causes.
As shown in \cref{sec:method}, the fundamental issue is \emph{unbounded gradient ascent dynamics in MLP feedforward layers}; constraining adapter weights solves this and complements the above advances. 
\section{Methodology}\label{sec:method}

\subsection{Preliminaries}\label{subsec:notation}

\paragraph{Problem Formulation.}
Machine unlearning aims to remove the influence of forget data $\mathcal{D}_f$ while preserving performance on retain data $\mathcal{D}_r$. Given a model $f_\theta$ with parameters $\theta$, the objective combines retention and forgetting:
\begin{equation}\label{eqn:rtain_forget_loss}
\mathcal{L}_r(\theta) + \lambda \mathcal{L}_f(\theta)
\end{equation}
where $\mathcal{L}_r(\theta) = \mathbb{E}_{(x,y) \sim \mathcal{D}_r} [\mathcal{L}(f_\theta(x), y)]$, $\mathcal{L}_f(\theta) = \mathbb{E}_{(x,y) \sim \mathcal{D}_f} [\mathcal{L}(f_\theta(x), y)]$, and $\lambda > 0$ controls the forgetting strength. The optimization proceeds by simultaneously training $\mathcal{L}_r(\theta)$ via gradient descent and $\mathcal{L}_f(\theta)$ via gradient ascent, respectively.

\paragraph{Gradient-Based Unlearning.}
The gradient difference method optimizes the unlearning objective through:
\begin{equation}\label{eqn:gradient_difference}
\theta_{t+1} = \theta_t - \alpha_r \nabla_\theta \mathcal{L}_r(\theta) + \alpha_f \nabla_\theta \mathcal{L}_f(\theta)
\end{equation}

This combines gradient descent on retain data with gradient ascent on forget data. While gradient ascent effectively increases loss on $\mathcal{D}_f$, it suffers from optimization instability when combined with cross-entropy loss \citep{Cha2025}. In \cite{pan2024multi}, this issue was addressed by replacing the cross-entropy loss on the forget set with an Inverted Hinge Loss. In contrast, as shown in \cref{subsec:our_approach}, our methodology enables direct training using cross-entropy.

\paragraph{Low-Rank Adaptation}

LoRA parameterizes weight updates through low-rank decomposition:
\begin{equation}\label{eqn:lora_update}
W = W_0 + AB^T
\end{equation}
where $W_0$ is the pretrained weight, and $A \in \sR^{d \times r}$ and $B \in \sR^{k \times r}$ are trainable matrices with rank $r \ll \min(d,k)$. For transformer architectures (e.g., ViTs and LLMs) comprising attention and MLP feedforward (FFN) layers, \cref{eqn:lora_update} is generally applied to the MLP and attention layers. While LoRA reduces computational costs from $dk$ to $(d+k)r$ parameters, its root issues in unlearning remain underexplored, with existing solutions addressing symptoms rather than the fundamental instabilities that arise in the gradient-based unlearning.

\subsection{Theoretical Analysis}\label{subsec:theory}

In \cref{eqn:gradient_difference}, the gradient difference method combines two objectives: a forget loss optimized via gradient ascent and a retain loss optimized via gradient descent. Prior work \cite{Cha2025} has shown that when the forget loss employs cross-entropy, fine-tuning becomes unstable. To understand this phenomenon, we analyze gradient ascent under the cross-entropy loss and establish two theorems showing that weights and gradients can diverge. This theoretical insight motivates our approach in \cref{subsec:our_approach}, where we propose a method to mitigate such divergence and stabilize the training.

The networks we consider will all be trained with the cross-entropy loss as the retain and forget loss in \cref{eqn:rtain_forget_loss}. In this section, we work generally and simply denote the cross-entropy loss associated to an MLP by $\mathcal{L}$.
We let $C$ denote the number of distinct class labels so that the output dimension of the network will be $C$. 
The output probabilities of the network will be denoted $p$ and we recall for a class label denoted by $y$ the cross-entropy loss of the predicted $p \in \sR^C$ compared to $y$ is given by
\begin{equation}\label{eqn:cross_entropy_loss_y}
    \mathcal{L}(p, y) = -\log(p_y)
\end{equation}
where $p_y$ is the yth-component of $p \in \sR^{C}$. Using \cref{eqn:cross_entropy_loss_y} and the chain rule we have that the gradient vector of $\mathcal{L}$ with respect to the logits $z$ on a class $y$ is given by
\begin{equation}
    \nabla_{z}\mathcal{L} = p - e_y
\end{equation}
where $e_y$ denotes the one-hot vector that is $1$ in the $y^{\text{th}}$-position. For more details on the cross-entropy loss, we refer the reader to \cite{prince2023understanding}.

When training under gradient ascent the optimizer wants to push the predictions $p$ away from $e_y$ as it seeks to move towards a maximum of the cross-entropy loss. We thus get that 
\begin{equation}\label{eqn:gradi_ascent_one_hot}
\nabla_{z}\mathcal{L} \rightarrow e_{j} - e_y    
\end{equation}
where $j$ is an index $1 \leq j \leq C$ such that $j \neq y$ so that the one hot vectors associated to the classes $y$ and $j$ are distinct.
In particular, under a gradient ascent trajectory that is approaching a maximum the logit gradient $\nabla_{z}\mathcal{L}$ does not approach zero and hence 
\begin{equation}\label{eqn:logits_deriv}
    \|\nabla_{z}\mathcal{L}\| > C > 0
\end{equation}
stays bounded away from zero for some constant $C > 0$, where 
$\|\cdot\|$ denotes the Euclidean norm (see \cref{app:notation} for details on notation).
We note that in the case of gradient descent the term on the right of \cref{eqn:logits_deriv} approaches zero yielding a completely different behavior to gradient ascent. 

\begin{lemma}\label{lem:logits_infinity}
Let $\mathcal{L}$ denote the cross-entropy loss trained on a MLP $F$ with $L$ layers under gradient ascent. Let $z(t)$ denote the logits at iteration $t$. Then if
$\mathcal{L}(t) \rightarrow \infty$ it follows that
$z(t) \rightarrow \infty$ in norm.
\end{lemma}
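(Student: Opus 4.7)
The plan is to argue by contrapositive and exploit the explicit form of the cross-entropy loss when composed with the softmax. Writing the softmax probability for class $y$ as $p_y = e^{z_y}/\sum_{j=1}^{C} e^{z_j}$, the loss decomposes as
\begin{equation}
    \mathcal{L}(t) \;=\; -\log p_y(t) \;=\; \log\!\Bigl(\sum_{j=1}^{C} e^{z_j(t)}\Bigr) \;-\; z_y(t),
\end{equation}
which is the log-sum-exp (LSE) minus the target logit. So the statement reduces to showing that if this quantity diverges along the trajectory, the logit vector must itself be unbounded.

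Next I would formalize the contrapositive: suppose $\|z(t)\|$ remains bounded, say $\|z(t)\| \leq M$ for all $t$. Since $\|\cdot\|$ is the Euclidean norm, each component satisfies $|z_j(t)| \leq M$. I would then bound the two pieces of the decomposition separately. For the LSE term, since $e^{z_j(t)} \leq e^{M}$ for every coordinate,
\begin{equation}
    \log\!\Bigl(\sum_{j=1}^{C} e^{z_j(t)}\Bigr) \;\leq\; \log\!\bigl(C \, e^{M}\bigr) \;=\; \log C + M.
\end{equation}
For the target logit, $-z_y(t) \leq |z_y(t)| \leq M$. Adding these gives $\mathcal{L}(t) \leq \log C + 2M$, contradicting the assumption that $\mathcal{L}(t) \to \infty$. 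Hence no uniform bound on $\|z(t)\|$ can exist, which is exactly $\|z(t)\| \to \infty$ (taking along any subsequence realizing the unboundedness and noting that along the actual gradient-ascent trajectory the loss is monotone non-decreasing modulo step-size effects, so boundedness on a tail would yield a uniform bound).

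The only delicate point is this last step: boundedness along the trajectory versus divergence along a subsequence. I would handle it by noting that $\mathcal{L}(t) \to \infty$ means that for every $N > 0$ eventually $\mathcal{L}(t) > N$, and the bound $\mathcal{L}(t) \leq \log C + 2\|z(t)\|$ derived above forces $\|z(t)\| > (N - \log C)/2$ for all sufficiently large $t$, which is the definition of $\|z(t)\| \to \infty$. Observe that the argument never uses the MLP structure, the number of layers $L$, or the specific update rule of gradient ascent; it rests purely on the functional form of cross-entropy composed with softmax. The MLP hypothesis and the ascent dynamics are contextual and become genuinely relevant only in the next theorem, where one needs to propagate logit divergence back to weight divergence. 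I do not expect any real obstacle here: the proof is essentially a one-line bound on the log-sum-exp.
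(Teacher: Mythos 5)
Your proof is correct. It takes a noticeably different (and more direct) route than the paper's. The paper starts from the same identity $\mathcal{L} = \log\bigl(\sum_k e^{z_k}\bigr) - z_y$ but then introduces the margin $m := \max_{j\neq y}(z_j - z_y)$ and establishes the two-sided sandwich $m \leq \mathcal{L} \leq \log(1 + Ce^{m})$; from $\mathcal{L}\to\infty$ it deduces $m\to\infty$, and finally argues that a diverging margin forces either some $z_j \to \infty$ or $z_y \to -\infty$, hence $\|z\|\to\infty$. You instead bound the log-sum-exp and the target logit separately to get the single inequality $\mathcal{L}(t) \leq \log C + 2\|z(t)\|$, which immediately yields $\|z(t)\| \geq (\mathcal{L}(t) - \log C)/2 \to \infty$. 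Your version is shorter, gives an explicit quantitative lower bound on $\|z(t)\|$ in terms of $\mathcal{L}(t)$, and correctly disposes of the subsequence-versus-tail subtlety (which the paper does not even raise). What the paper's margin argument buys in exchange is slightly finer structural information --- it identifies \emph{how} the logits diverge (a non-target logit growing or the target logit collapsing to $-\infty$), which is conceptually aligned with the gradient-ascent intuition in the surrounding text, though that extra information is not used in the downstream theorem. One cosmetic caution if you were to splice your argument into this paper: the symbol $C$ is already reserved for the number of classes in the lemma's context, and the paper also reuses $C$ for generic constants elsewhere, so keep your $\log C$ unambiguous. Your observation that the MLP structure and the ascent dynamics play no role in this lemma is also accurate.
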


The proof of \cref{lem:logits_infinity} is given in \cref{app:proofs}. 
The above lemma shows that when training with gradient ascent if the cross-entropy loss approaches a global maximum the logits get large. The following theorem shows that this can lead to large weights or gradients in the final layer.
For the theorem we will need the notation of activation outputs. Given an $L$ layer MLP denoted $F$, we let $a_{l}$ for $1 \leq l \leq L$ denote the output of layer $l$. For details on the notation we use for MLPs we refer the reader to \cref{app:notation}. 

\begin{theorem}\label{thm:weights_gradients_explode}
Let $F$ be a $L$-layer MLP. Suppose under gradient ascent with iterations $t$, the logits
$z(t) \rightarrow \infty$. Then if the activation output
$\|a_{L-1}(t)\| \leq C_1$ for large $t$, where $C_1 > 0$ is a constant, it follows
\begin{equation}
    \|W_L(t)\| \rightarrow \infty.
\end{equation}
In the case there is no such bound on $\|a_{L-1}(t)\|$ it follows that there exists a subsequence of iterations $t_k$ such that
\begin{equation}
    \|\nabla_{W_L}\mathcal{L}(t_k)\| \rightarrow \infty.
\end{equation}
\end{theorem}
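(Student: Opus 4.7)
The plan is to prove the two conclusions of the theorem separately, in both cases exploiting the linear form of the last layer, $z = W_L a_{L-1}$, together with the observation preceding \cref{lem:logits_infinity}: under gradient ascent the logit gradient $\nabla_z \mathcal{L} = p - e_y$ is driven toward $e_j - e_y$ for some class index $j \neq y$, and therefore satisfies a uniform lower bound $||\nabla_z \mathcal{L}(t)|| \geq C > 0$ for all sufficiently large $t$.

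For the first conclusion, I would assume $||a_{L-1}(t)|| \leq C_1$ for large $t$ and apply the operator-norm inequality to $z(t) = W_L(t)\, a_{L-1}(t)$:
\begin{equation}
||z(t)|| \leq ||W_L(t)|| \cdot ||a_{L-1}(t)|| \leq C_1 \cdot ||W_L(t)||.
\end{equation}
Rearranging and invoking the hypothesis $z(t) \to \infty$ yields $||W_L(t)|| \geq ||z(t)||/C_1 \to \infty$.

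For the second conclusion, when $||a_{L-1}(t)||$ admits no such uniform bound, I would first extract a subsequence $t_k$ along which $||a_{L-1}(t_k)|| \to \infty$ (such a subsequence exists precisely because no $C_1$ eventually dominates $||a_{L-1}(t)||$). The chain rule applied through the final linear layer then gives
\begin{equation}
\nabla_{W_L}\mathcal{L} = (\nabla_z \mathcal{L}) \, a_{L-1}^{\top},
\end{equation}
an outer product whose Frobenius norm factors exactly as $||\nabla_z \mathcal{L}|| \cdot ||a_{L-1}||$. Combining this with the uniform lower bound on $||\nabla_z \mathcal{L}||$ yields $||\nabla_{W_L}\mathcal{L}(t_k)||_F \geq C \cdot ||a_{L-1}(t_k)|| \to \infty$.

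The main obstacle, and the place where the argument needs care, is justifying the uniform lower bound $||\nabla_z \mathcal{L}(t)|| \geq C > 0$ rigorously along the trajectory. The preceding text argues informally that $p \to e_j$ for some $j \neq y$, giving $||p - e_y|| \to \sqrt{2}$; turning this into a quantitative claim requires showing that once $||z(t)||$ is sufficiently large the softmax concentrates on a vertex of the simplex different from $e_y$, which is consistent with the fact that under gradient ascent $-\log p_y$ grows (so $p_y \to 0$) while the remaining mass must accumulate on some competing coordinate. If biases are present in the final layer, I would absorb them by augmenting $a_{L-1}$ with a constant entry and $W_L$ with the corresponding column, preserving both inequalities above up to a harmless constant. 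With that lower bound in hand, the remaining manipulations are entirely routine applications of the operator-norm inequality and the outer-product identity for the gradient.
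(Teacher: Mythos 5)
Your proposal is correct and follows essentially the same route as the paper's proof: the operator-norm inequality on $z(t) = W_L(t)a_{L-1}(t)$ for the first claim, and the rank-one factorization $\nabla_{W_L}\mathcal{L} = (\nabla_z\mathcal{L})\,a_{L-1}^{\top}$ combined with the lower bound $\|\nabla_z\mathcal{L}\| \geq C > 0$ along a subsequence where $\|a_{L-1}(t_k)\| \to \infty$ for the second. Your added care about rigorously justifying the uniform lower bound on $\|\nabla_z\mathcal{L}\|$ is a point the paper itself simply asserts, so you are if anything slightly more thorough.
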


\textbf{Key insight.} Gradient ascent under cross-entropy drives feedforward weights and gradients to grow without bound. This motivates constraining the adapter weights with a bounded function to stabilize gradient-difference unlearning.

The proof of \cref{thm:weights_gradients_explode} is provided in \cref{app:proofs}.
We note that the bounded activation assumption ($\|a_{L-1}(t)\| \leq C_1$) is mild in transformer architectures, where layer normalization constrains the variance of intermediate representations; a detailed justification is given in~\cref{app:proofs}.
In practice, training is limited to a finite number of iterations, and models rarely reach a regime where gradients or parameters fully stabilize. As established in \cref{thm:weights_gradients_explode}, gradient ascent can drive the weights and gradients of the final layer to grow excessively. While such growth may not disrupt training immediately, it can cascade backward through to earlier layers, amplifying both weights and gradients in more than one layer and ultimately producing unstable dynamics. We formalize this propagation effect in \cref{app:further_theory}, where \cref{thm:pre_gradients_explode} shows how instability originating in the final layer extends to preceding layers. Notably, our analysis focuses on pure gradient ascent, to give the reader the main idea of why unlearning is difficult when a gradient ascent term is present. We extend both \cref{lem:logits_infinity} and \cref{thm:weights_gradients_explode} to the case of the gradient difference method in \cref{app:proofs}, see \cref{lem:gd_logit_blowup} and \cref{thm:weights_gradients_explode_gd} in \cref{app:proofs}.
Furthermore, we note that
empirical evidence in \cref{fig:optimization_analysis} shows that under the gradient difference method, weights grow excessively, indicating that the ascent term on the forget set is the primary driver of this instability.

\subsection{Bounded Parameter-Efficient Unlearning}\label{subsec:our_approach}

\cref{thm:weights_gradients_explode} and \cref{thm:pre_gradients_explode} in \cref{app:theory} demonstrate that gradient ascent drives weights and gradients in the feedforward layers of MLPs to grow excessively, which can destabilize training. In \cref{sec:experiments}, we empirically confirm this effect: when fine-tuning with LoRA under the gradient difference framework, weights and gradients grow excessively large, and this growth is the primary reason LoRA fails to perform effective unlearning. To address this issue, we propose a simple yet effective architectural modification that implicitly regularizes the weights of the MLP layers

Specifically, let $\phi : \sR \to \sR$ denote a bounded non-linear function. We redefine the adapter transformation (see \cref{eqn:lora_update}) in the feedforward layers as
\begin{equation}\label{eqn:bounded_low_rank_approach}
h = W_0 x + \frac{\alpha}{r} \phi(\omega A B^\top) x + b    
\end{equation}
where $A$ and $B$ are the low-rank adapter matrices of rank $r$, $\alpha$ is the standard LoRA scaling factor, $\omega$ is the frequency parameter, $x$ is the input data, and $b$ is the bias term. The bounded nonlinearity $\phi$, applied elementwise to the scaled pre-activations, constrains the ascent dynamics and prevents the uncontrolled growth of weights and gradients. 
Crucially, this bound holds at every iterate regardless of how large $A$ or $B$ grow, providing a non-asymptotic stability certificate for the forward computation. Moreover, since the pretrained weight $W_0$ is frozen, only the gradients $\partial\mathcal{L}/\partial A$ and $\partial\mathcal{L}/\partial B$ are relevant for optimization. These are bounded by the chain rule through the bounded Jacobian of $\phi$, so constraining the adapter suffices to prevent gradient explosion in the full backward pass. As we demonstrate in \cref{sec:experiments}, this adjustment yields substantially more stable training and improved performance with finetuning across a variety of unlearning benchmarks.


For the choice of $\phi$, we took $\tanh$ as this is a well-known activation choice in machine learning. More recently, \cite{Ji2025} showed that applying sine mapping, $\sin(\omega AB^\top)$ with frequency $\omega > 0$, produces a high-rank matrix whose rank grows with $\omega$, yielding stronger fine-tuning performance on a range of transformer fine-tuning benchmarks. While \cite{Ji2025} applies sine for fine-tuning, this study identifies and solves the inherent instability of gradient difference for unlearning. As $\sin(\omega \cdot)$ is bounded for any $\omega > 0$, it aligns naturally with our setting to constrain explosive optimization dynamics. Furthermore, because $\sin(\omega \cdot)$ is Lipschitz continuous with constant $\omega$, the gradient magnitude scales proportionally with $\omega$. Consequently, we empirically find that scaling the learning rate (step size) by $\mathcal{O}(1/\omega)$ is sufficient for stable optimization. In our experiments (\cref{sec:experiments}), sine functions, particularly with larger $\omega$, consistently outperformed other monotonic bounded alternatives like $\tanh$ and sigmoid. This improvement occurs because the high effective rank and oscillatory nature of the sine function prevent the optimization from stagnating. In contrast, this assurance does not apply to $\tanh$ or sigmoid functions, as their derivatives diminish quickly as they move away from the origin, resulting in vanishing Jacobian entries and poor conditioning. Accordingly, we focus on both $\tanh$ and sine as choices for $\phi$. In \cref{app:sensitivity} we compare against using a sigmoid function and to demonstrate the necessity of boundedness, we also include an unbounded example, $\mathrm{ReLU}$, for comparison.


We note that many transformer models employ normalization techniques such as layer normalization~\cite{ba2016layer, macdonald2023skip}, batch normalization~\cite{ioffe2015batch} or Jacobian normalization \cite{saratchandran2024weight, zheng2025structured, ji2025always, saratchandran2024rethinking, saratchandran2026spectral, saratchandran2025leaner} which act on activated, pre-activated outputs or the Jacobian of the network. Our approach is fundamentally different: it constrains the weights directly, providing a distinct mechanism for stabilizing training.

\paragraph{Attention layer.} In this study, we focused on analyzing the behavior of the feedforward layers of an MLP under gradient ascent with cross-entropy. Although transformer models also contain attention layers, our experiments revealed that instability in the gradient difference method arises primarily in the feedforward layers: their weights and gradients grow far more aggressively than those of the attention layers. Consequently, it is sufficient to constrain only the feedforward weights of the MLP blocks.
A detailed empirical analysis of the attention layers is provided in~\cref{app:attn_ffn}.
\emph{Structural intuition.} The attention mechanism involves a $\mathrm{softmax}$ over scaled dot-products, $\mathrm{softmax}(QK^\top/\sqrt{d})$, which always produces probability weights in $[0,1]$ that sum to one. This structural normalization bounds the contribution of the attention output to the next-layer representation, even when $Q$ or $K$ themselves grow in norm under gradient ascent. In contrast, MLP feedforward blocks apply linear projections followed by unbounded activations (e.g., GeLU), providing no such structural ceiling, which is precisely why~\cref{thm:weights_gradients_explode,thm:pre_gradients_explode} are specific to feedforward layers. An ablation comparing MLP-only and MLP+Attention bounded adapters confirms that extending sine to attention yields negligible gains at roughly double the parameter cost (see~\cref{app:attn_ffn}).



\paragraph{Why not full fine-tuning?} 
In the full fine-tuning setting, optimization is carried out directly on the pretrained weights $W$. 
Applying a bounded transformation $\phi(W)$ in this case would overwrite these weights, thereby discarding the knowledge acquired during pretraining. 
In practice, parameter-efficient approaches introduce low-rank additions that augment the model with extra parameters while leaving the original $W$ unchanged. 
This separation makes it possible to safely apply bounded parameterizations to the adapters.
\section{Experiments}\label{sec:experiments}

We conducted an empirical evaluation of sine-based parameter-efficient unlearning, emphasizing the properties that determine whether an unlearning procedure is usable in practice. Our evaluation is organized to (i) \emph{validate the proposed stability mechanism in a standard discriminative pipeline} and (ii) \emph{establish generality under widely used unlearning protocols for generative models}. Specifically, we begin with class-level deletion in a ViT on CIFAR-100 (\cref{subsec:vit_unlearning}) and then evaluate the same approach on established language-model unlearning benchmarks (\cref{subsec:llm_setup}) spanning multiple architectures, scales, and safety/privacy criteria.

\paragraph{Evaluation criteria.}
Across all settings, we focused on three criteria:
\textit{unlearning efficacy} (removing the specified influence),
\textit{utility preservation} (maintaining performance on retained data and out-of-distribution inputs),
and \textit{optimization robustness} (stable convergence under coupled descent/ascent dynamics).
This design allows us to directly test the theoretical predictions of \cref{subsec:theory} and compare them with the baselines under controlled conditions. All experiments are averaged over 3 independent seeds unless otherwise noted; standard deviations are reported where available.
For completeness, we report implementation details, model configurations, additional studies, and results in~\cref{app:extended_Exp}, privacy and utility assessments in~\cref{app:privacy}, with an ethical statement in~\cref{app:ethical}.

\subsection{Vision Transformer Unlearning}\label{subsec:vit_unlearning}

We first studied unlearning in a vision setting to directly probe the ascent-induced instability in transformer feedforward blocks under standard cross-entropy training. We evaluated our method on two established ViT unlearning benchmarks: (i) the LetheViT CIFAR-10 protocol~\citep{ren2025lethevit} using ViT-S with 10\% random forgetting and (ii) the CoUn CIFAR-100 protocol~\citep{khalil2025coun} using ViT with 10\% random forgetting. Both settings apply LoRA adapters \textit{exclusively to MLP/FFN blocks}, consistent with our theoretical focus, with no gradient clipping to reveal the intrinsic stability differences between the two. We additionally validate class-level deletion across ViT-B/16 (86M)~\citep{dosovitskiy2020image}, ViT-L/14 (304M), and DeiT-S (22M) in~\cref{app:vit_details}, where an extended stability analysis is provided in (\cref{tab:vit_cifar100_stability}). The full hyperparameters are listed in~\cref{app:vit_details}.

\begin{table}[t]
\centering
\caption{\textbf{Vision Transformer unlearning comparison.}
\textbf{(\textcolor{BrickRed}{Left})} LetheViT benchmark on CIFAR-10 (ViT-S, 10\% random forgetting). Baseline results are taken from their respective papers~\citep{ren2025lethevit,roy2025novo}, or reproduced under their respective experimental setup, unless otherwise specified. FA: Forget Accuracy ($\uparrow$); RA: Retain Accuracy ($\uparrow$); TA: Test Accuracy ($\uparrow$); MIA: Membership Inference Attack ($\downarrow$); AG: Accuracy Gap ($\downarrow$).
\textbf{(\textcolor{BrickRed}{Right})} CoUn benchmark on CIFAR-100 (ViT, 10\% random forgetting). Baselines from~\citep{khalil2025coun}. RA ($\uparrow$); UA ($\uparrow$); TA ($\uparrow$); MIA ($\downarrow$); AG ($\downarrow$).
FA and UA both denote forget-set accuracy under each benchmark's respective evaluation protocol.
The gray rows denote our method.}
\label{tab:vit_cifar100}

\begin{minipage}[t]{0.5\textwidth}
\centering
\resizebox{\textwidth}{!}{%
\begin{tabular}{lcccccc}
\toprule
\textbf{Method} & \textbf{FA ($\uparrow$)} & \textbf{RA ($\uparrow$)} & \textbf{TA ($\uparrow$)} & \textbf{MIA ($\downarrow$)} & \textbf{AG ($\downarrow$)} & \textbf{Params (\%)} \\
\midrule
Retrain & 99.12 & 99.94 & 98.85 & 2.94 & 0.00 & - \\
\midrule
\rowcolor{blue!6}
\multicolumn{7}{l}{\textit{Baselines}} \\
FT~\citep{fan2024salun} & 98.24 & 99.71 & 98.02 & 4.15 & 0.83 & 100.0 \\
GA~\citep{Yao2024} & 96.31 & 98.85 & 96.48 & 8.72 & 2.37 & 100.0 \\
SalUn~\citep{fan2024salun} & 98.45 & 99.82 & 98.31 & 3.61 & 0.54 & 100.0 \\
NOVO~\citep{roy2025novo} & 98.38 & 99.78 & 98.25 & 3.47 & 0.59 & 100.0 \\
LetheViT~\citep{ren2025lethevit} & \second{98.62} & \second{99.88} & \second{98.48} & \second{3.21} & \second{0.34} & 100.0 \\
\midrule
\rowcolor{blue!6}
\multicolumn{7}{l}{\textit{Parameter-Efficient Baselines}} \\
Fast-NTK~\citep{li2024fast} & 97.89 & 99.45 & 97.72 & 5.28 & 1.22 & 1.7 \\
GD+LoRA & 97.82 & 99.62 & 97.91 & 6.24 & 1.55 & 1.7 \\
\midrule
\rowcolor{blue!6}
\multicolumn{7}{l}{\textit{Ours}} \\
\rowcolor{gray!10}
\textbf{GD+Tanh} & 98.55 & 99.93 & 98.40 & 3.25 & 0.25 & \textbf{1.7} \\
\rowcolor{gray!10}
\textbf{GD+Sine} & \best{98.71} & \best{99.98} & \best{98.59} & \best{3.08} & \best{0.10} & \textbf{1.7} \\
\bottomrule
\end{tabular}%
}
\end{minipage}
\hspace{0.01\textwidth}
\begin{minipage}[t]{0.47\textwidth}
\centering
\resizebox{\textwidth}{!}{%
\begin{tabular}{lcccccc}
\toprule
\textbf{Method} & \textbf{RA ($\uparrow$)} & \textbf{UA ($\uparrow$)} & \textbf{TA ($\uparrow$)} & \textbf{MIA ($\downarrow$)} & \textbf{AG ($\downarrow$)} & \textbf{Params (\%)} \\
\midrule
Retrain & 99.95 & 38.12 & 61.24 & 58.35 & 0.00 & - \\
\midrule
\rowcolor{blue!6}
\multicolumn{7}{l}{\textit{Baselines}} \\
FT~\citep{fan2024salun} & 98.42 & 41.25 & 57.14 & 65.42 & 5.82 & 100.0 \\
 & {\tiny$\pm$0.85} & {\tiny$\pm$3.81} & {\tiny$\pm$1.92} & {\tiny$\pm$4.08} & & \\
GA~\citep{Yao2024} & 95.18 & 48.72 & 51.35 & 72.85 & 12.41 & 100.0 \\
 & {\tiny$\pm$2.41} & {\tiny$\pm$5.63} & {\tiny$\pm$3.28} & {\tiny$\pm$5.92} & & \\
SalUn~\citep{fan2024salun} & 99.21 & 39.85 & 59.42 & 61.28 & 3.48 & 100.0 \\
 & {\tiny$\pm$0.35} & {\tiny$\pm$2.14} & {\tiny$\pm$1.08} & {\tiny$\pm$2.85} & & \\
CoUn~\citep{khalil2025coun} & \second{99.85} & \second{38.24} & \second{60.52} & \second{59.42} & \second{2.58} & 100.0 \\
 & {\tiny$\pm$0.08} & {\tiny$\pm$1.15} & {\tiny$\pm$0.72} & {\tiny$\pm$1.28} & & \\
\midrule
\rowcolor{blue!6}
\multicolumn{7}{l}{\textit{Parameter-Efficient Baselines}} \\
GD+LoRA & 97.85 & 52.31 & 53.82 & 78.21 & 10.05 & 0.9 \\
 & {\tiny$\pm$0.92} & {\tiny$\pm$4.12} & {\tiny$\pm$1.45} & {\tiny$\pm$3.15} & & \\
\midrule
\rowcolor{blue!6}
\multicolumn{7}{l}{\textit{Ours}} \\
\rowcolor{gray!10}
\textbf{GD+Tanh} & 99.90 & 37.90 & 60.40 & 59.60 & 2.50 & \textbf{0.9} \\
\rowcolor{gray!10}
 & {\tiny$\pm$0.03} & {\tiny$\pm$0.80} & {\tiny$\pm$0.50} & {\tiny$\pm$0.80} & & \\
\rowcolor{gray!10}
\textbf{GD+Sine} & \best{99.93} & \best{37.48} & \best{60.85} & \best{59.14} & \best{2.35} & \textbf{0.9} \\
\rowcolor{gray!10}
 & {\tiny$\pm$0.02} & {\tiny$\pm$0.72} & {\tiny$\pm$0.41} & {\tiny$\pm$0.55} & & \\
\bottomrule
\end{tabular}%
}
\end{minipage}
\end{table}

\vspace{-0.25em}
\subsection{Language-Model Benchmarks for Generalization}\label{subsec:llm_setup}

Having validated the mechanism in a controlled vision setting, we next evaluated established language-model unlearning benchmarks that explicitly measure forgetting, utility, and privacy/safety under realistic memorization and extraction criteria.

\textbf{Evaluation benchmarks.}
We used three datasets with their respective evaluation frameworks to assess the unlearning effectiveness, utility preservation, and safety compliance.
\textit{1. TOFU (Task of Fictitious Unlearning)}~\citep{Maini2024}:
evaluates forget quality through statistical divergence between unlearned and retain-only models, monitoring the utility of retained tasks and generalization.
\textit{2. TDEC (Training Data Extraction Challenge)}~\citep{Carlini2021}:
assesses privacy protection via extraction loss over ten queries (EL$_{10}$), reasoning accuracy preservation, and language-modeling quality.
\textit{3. MUSE (Machine Unlearning Six-way Evaluation)}~\citep{Shi2024}:
provides a safety assessment across verbatim memorization, semantic knowledge retention, and privacy leakage dimensions.
We evaluate the proposed method against representative methods from each major unlearning family discussed in~\cref{sec:related_work}. New readers are referred to~\cref{app:protocols} for a better understanding.

\textbf{Baselines.}
Our comparison includes gradient-based approaches (Gradient Ascent (GA)~\citep{Yao2024}, Gradient Difference (GD)~\citep{Maini2024}, KL-regularization~\citep{Liu2024}, Inverted Hinge Loss (IHL)~\citep{Cha2025}),
parameter-efficient methods (GD+LoRA~\citep{Hu2022}, GA+FILA~\citep{kim2025improving}, GD+FILA~\citep{kim2025improving}, LoKU~\citep{Cha2025}),
preference-based techniques (DPO\citep{Rafailov2023}, NPO~\citep{Zhang2024c}),
and representation-based approaches (FLAT variants~\citep{Wang2024FLAT}).
All baseline results are from their respective papers or~\cite{Cha2025,Wang2024FLAT} unless otherwise specified.
Comprehensive ablation studies comparing bounded versus unbounded activations are detailed in~\cref{tab:bounded_unbounded_comparison} (\cref{app:sensitivity}),
while comparison of IHL versus GD objectives with sine parameterization with statistical analysis is provided in~\cref{tab:ihl_gd_ablation} (\cref{app:ihl_gd_ablation}),
hyperparameter configurations including standard LoRA scaling ($\alpha=16$) and frequency sensitivity ($\omega \in [10, 1000]$) are detailed in~\cref{app:extended_appendix},
computational overhead is mentioned in~\cref{app:complexity},
and sequential robustness in~\cref{sec:seq_unlearning}.
Extended results across all model configurations are provided in~\cref{app:extended_appendix}, with detailed rank analysis in~\cref{tab:tofu_phi_rank8,tab:tofu_phi_rank16,tab:tofu_phi_rank32}.

\begin{table}[t]
\centering
\caption{\textbf{(\textcolor{BrickRed}{Left}) TOFU Forget10 on Phi-1.5B.} Parameter-efficient methods use rank-4 LoRA. Baselines from~\cite{Cha2025,Yuan2025,Maini2024}. \textbf{(\textcolor{BrickRed}{Right}) TDEC on GPT-Neo-1.3B.} EL$_{10}$: extraction vulnerability ($\downarrow$); Reasoning, Dialogue ($\uparrow$); PPL ($\downarrow$). Baselines from~\citep{Cha2025}. Gray rows denote our method.}
\label{tab:tofu_results}%
\label{tab:tdec_results}
\begin{minipage}[t]{0.47\textwidth}
\centering
\resizebox{\textwidth}{!}{%
\begin{tabular}{l|cc|c|c|c}
\toprule
\multirow{2}{*}{\textbf{Method}} & \multicolumn{2}{c|}{\textbf{Primary Metrics}} & \textbf{Forget Set} & \textbf{Retain Set} & \multirow{2}{*}{\textbf{Params (\%)}} \\
\cmidrule{2-5}
& \textbf{FQ ($\uparrow$)} & \textbf{MU ($\uparrow$)} & Rouge-L ($\downarrow$) & Rouge-L ($\uparrow$) & \\
\midrule
Original & 1.15e-17 & 0.52 & 0.93 & 0.92 & - \\
Retain90 & 1.00e+00 & 0.52 & 0.43 & 0.91 & - \\
\midrule
\rowcolor{blue!6}
\multicolumn{6}{l}{\textit{Full Fine-tuning Methods}} \\
KL & 7.38e-15 & 0.00 & 0.01 & 0.01 & 100.0 \\
DPO & 5.10e-17 & 0.48 & 0.41 & 0.67 & 100.0 \\
NPO & 2.56e-05 & 0.37 & 0.45 & 0.45 & 100.0 \\
GA & 2.06e-13 & 0.00 & 0.01 & 0.01 & 100.0 \\
GD & 2.55e-09 & 0.36 & 0.37 & 0.41 & 100.0 \\
IHL & 2.43e-17 & 0.51 & 0.53 & 0.76 & 100.0 \\
\midrule
\rowcolor{blue!6}
\multicolumn{6}{l}{\textit{Parameter-Efficient Methods}} \\
GD+LoRA & 1.45e-15 & 0.28 & 0.85 & 0.45 & 1.6 \\
GA+FILA & 5.10e-17 & 0.00 & 0.00 & 0.00 & 1.6 \\
GD+FILA & 2.17e-06 & 0.00 & 0.12 & 0.11 & 1.6 \\
LoKU & 1.39e-12 & 0.51 & 0.26 & 0.75 & 1.6 \\
ME+GD (LoRA) & \second{7.86e-01} & \second{0.52} & 0.14 & 0.93 & 1.6 \\
OURS (GD+Tanh) & 3.42e-01 & 0.49 & 0.28 & 0.85 & 1.6 \\
\rowcolor{gray!10}\textbf{OURS (GD+Sine)} & \best{9.43e-01} & \best{0.52} & \best{0.22} & \best{0.90} & \textbf{1.6} \\
\bottomrule
\end{tabular}%
}
\end{minipage}
\hfill
\begin{minipage}[t]{0.50\textwidth}
\centering
\resizebox{\textwidth}{!}{%
\begin{tabular}{l|c|ccc|c}
\toprule
 & \multicolumn{1}{c|}{\textbf{Forgetting}} & \multicolumn{3}{c|}{\textbf{Retention}} & \\
\midrule
\textbf{Method} & \textbf{EL$_{10}$ ($\downarrow$)} & \textbf{Reasoning ($\uparrow$)} & \textbf{Dialogue ($\uparrow$)} & \textbf{PPL ($\downarrow$)} & \textbf{Params (\%)} \\
\midrule
Before Unlearning & 67.6 & 49.8 & 11.5 & 11.5 & - \\
\midrule
\rowcolor{blue!6}
\multicolumn{6}{l}{\textit{Full Fine-tuning Methods}} \\
GA & 1.9 & 49.7 & 8.5 & 15.8 & 100.0 \\
GD & 2.2 & 48.4 & 12.7 & 10.8 & 100.0 \\
IHL & 0.7 & 48.4 & 12.5 & 11.0 & 100.0 \\
\midrule
\rowcolor{blue!6}
\multicolumn{6}{l}{\textit{Parameter-Efficient Methods}} \\
GD+LoRA & 1.7 & 45.0 & 9.7 & 31.8 & 0.8 \\
IHL+LoRA & 1.7 & 47.1 & 10.2 & 14.9 & 0.8 \\
GD+FILA & 1.9 & 44.2 & 5.5 & 54.5 & 0.8 \\
LoKU & \second{0.5} & 48.3 & 12.1 & 14.7 & 0.8 \\
OURS (GD+Tanh) & 0.8 & 46.7 & 10.3 & 18.2 & 0.8 \\
\rowcolor{gray!10}
\textbf{OURS (GD+Sine)} & \best{0.3} & \best{50.1} & \best{12.1} & \best{12.1} & \textbf{0.8} \\
\bottomrule
\end{tabular}%
}
\end{minipage}
\end{table}

\subsection{Results}\label{sec:results}

Our analysis examined performance across multiple dimensions: unlearning effectiveness, utility preservation, and safety compliance, using models ranging from 22M to 8B parameters across ViTs, GPT-Neo, Phi, and LLaMA architectures. The results demonstrate consistent improvements across all metrics, with particularly notable gains in forget quality while maintaining the model utility.

\textbf{Vision (ViT) results.}
\cref{fig:hero} summarizes our vision unlearning results across architectures and methods.
\cref{tab:vit_cifar100} presents comparisons on two established ViT unlearning benchmarks. 
On the LetheViT CIFAR-10 protocol (\textit{left}), \textbf{GD+Sine} achieves the lowest accuracy gap (AG\,=\,0.10) among all methods, outperforming LetheViT (0.34) and NOVO (0.59), while closely matching the retrain-level MIA (3.08 vs.\ 2.94). 
\textbf{GD+Tanh} consistently improves over prior parameter-efficient baselines, but remains slightly inferior to Sine in AG and MIA, indicating that boundedness alone is insufficient without periodic structure.
On the CoUn CIFAR-100 protocol (\textit{right}), \textbf{GD+Sine} again achieves the best AG (2.35) with the lowest variance across all metrics, closely matching the retrain performance, while \textbf{GD+Tanh} remains competitive but consistently second-best among bounded variants.
In both settings, \textbf{GD+LoRA} exhibits high accuracy gaps and large variances, confirming the instability predicted by our theory. Extended stability analysis on CIFAR-100 class deletion across ViT-B/16, ViT-L/14, and DeiT-S is provided in~\cref{tab:vit_cifar100_stability} (\cref{app:vit_details}).

\cref{fig:hero}\,\textbf{(\textcolor{BrickRed}{a})} shows that GD+Sine is the only evaluated method to consistently reach the ideal zone (high FQ \emph{and} high MU) across ViT-B/16, ViT-L/14, and DeiT-S; parameter-efficient baselines either collapse in utility or fail to forget, while full fine-tuning methods cluster in the low-FQ region. \cref{fig:hero}\,\textbf{(\textcolor{BrickRed}{b})} ablates the choice of bounded function: among Sine, Tanh, and Sigmoid, Sine achieves the strongest trade-off, reaching near-perfect FQ (0.92) without sacrificing MU (0.97). Tanh improves stability relative to unbounded activations but exhibits a mild MU-FQ trade-off, whereas unbounded variants (weight clipping, ReLU) degrade on both axes. \cref{fig:hero}\,\textbf{(\textcolor{BrickRed}{c})} confirms that this advantage is architecture-agnostic: GD+Sine improves forget quality by 2-8 orders of magnitude across model scales (22M-304M) while maintaining high retained utility.

\textbf{Language-model benchmark results.}
Comprehensive evaluations (\cref{tab:bounded_unbounded_comparison} and~\cref{tab:ihl_gd_ablation}) confirm that bounded activations outperform unbounded methods, with sine parameterization being adaptable and providing consistent benefits across optimization objectives. As shown in \cref{fig:optimization_analysis}, our method maintains bounded gradients. Crucially, as detailed in \cref{app:clipping_sweep}, standard training hygiene, such as gradient clipping or explicit weight-norm constraints, are structurally insufficient to resolve this instability on their own, consistently facing a strict Pareto failure between model utility and forget quality.
Additional classifier head analysis is provided in \cref{fig:classifier_head_analysis} (\cref{app:sensitivity}) and an ablation over activation functions (Sine, Tanh, Sigmoid, ReLU) is shown in \cref{fig:activation_ablation} (\cref{app:activation}). The component-wise stability analysis across the transformer layers is detailed in \cref{fig:component_stability} (\cref{app:attn_ffn}).

\begin{figure}[t]
\centering
\includegraphics[width=0.8\textwidth]{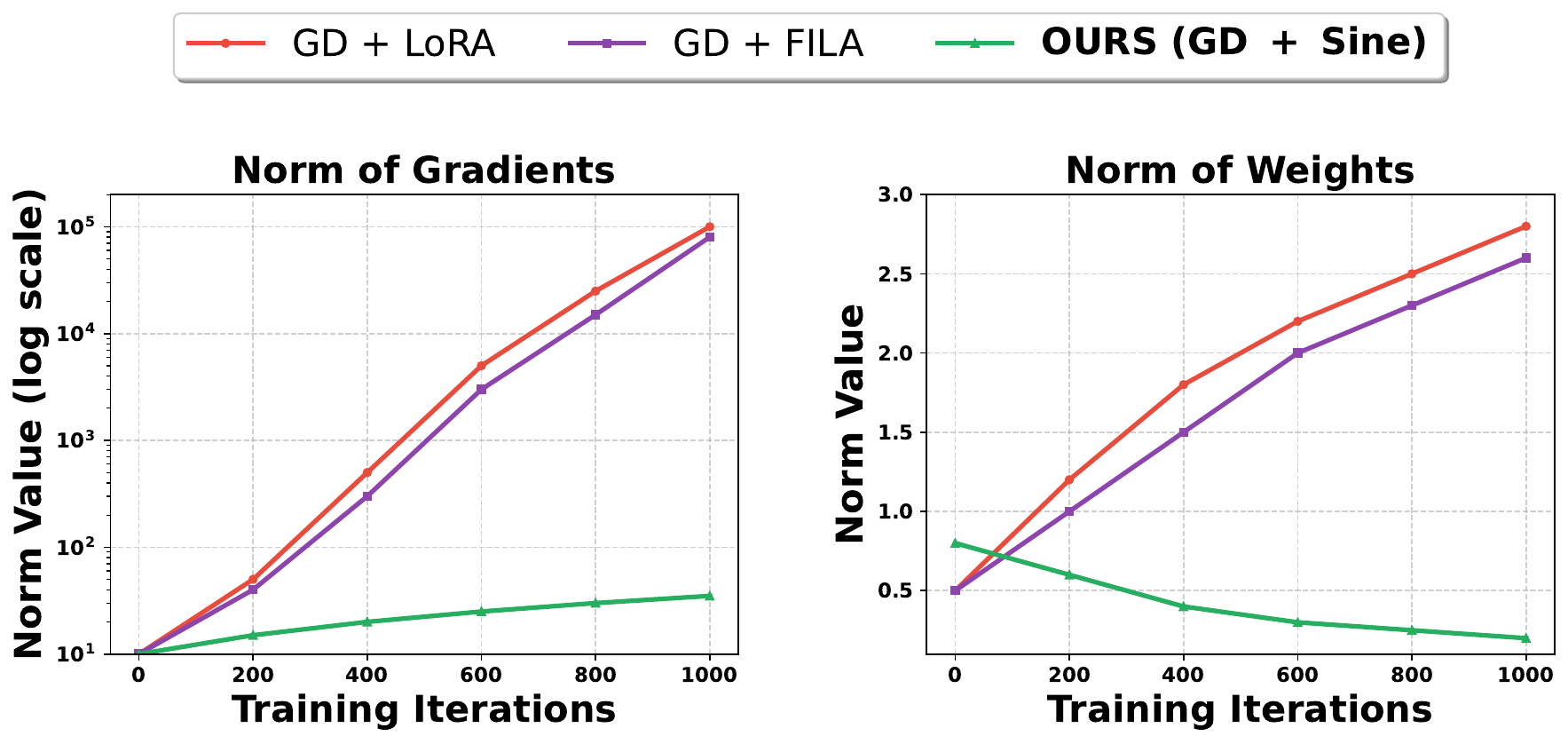}\\[0.5em]
\includegraphics[width=0.95\textwidth]{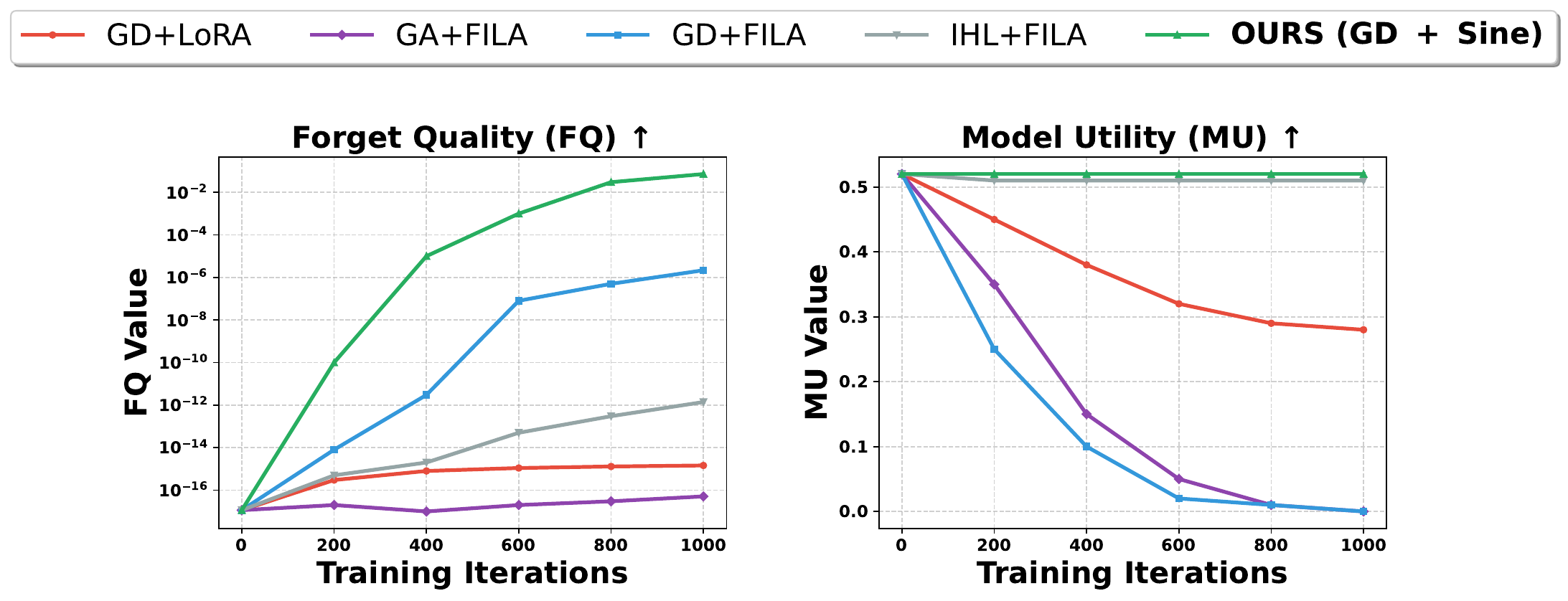}
\caption{\textbf{Optimization dynamics and unlearning convergence.} \textbf{(\textcolor{BrickRed}{a}) (top)}~Gradient and weight Frobenius norms (FFN MLP layers, Phi-1.5B rank-4, 1000 iterations): GD+LoRA and GD+FILA explode to $10^{5}$, while GD+Sine remains bounded in $[10^{1}, 10^{2}]$, confirming that bounded parameterization prevents explosion. \textbf{(\textcolor{BrickRed}{b}) (bottom)}~Forget quality and model utility across training iterations (Phi-1.5B rank-4, Forget10): our method rapidly improves FQ while maintaining MU, whereas baselines either fail to forget or collapse in utility. Additional comparisons in~\cref{app:sensitivity}.}
\label{fig:combined_dynamics}
\label{fig:optimization_analysis}
\label{fig:fq_mu_iterations}
\end{figure}

\textbf{TOFU Analysis.}~\cref{tab:tofu_results} (\textit{left}) shows our method achieves forget quality scores of \textbf{9.43e-01} (FQ$\,\in[0,1]$; higher is better, with FQ$\,{=}\,1$ corresponding to retraining from scratch) and \textbf{0.52} model utility on Phi-1.5B with rank-4 LoRA, improving over baseline LoKU (1.39e-12) and outperforming ME+GD (\textbf{7.86e-01}), while maintaining model performance. Results are consistent across ranks (4, 8, 16, 32) and forget splits (1\%, 5\%, 10\%) for both Phi-1.5 and LLaMA2-7B (see~\cref{app:extended_appendix}), with rank-4 outperforming state-of-the-art at rank-32 (\cref{fig:intro_performance}). \textbf{TDEC Analysis.} ~\cref{tab:tdec_results} (\textit{right}) shows privacy-focused evaluation using GPT-Neo-1.3B (chosen for TDEC's Pile dataset unlearning targets). Our method yields extraction loss values (EL$_{10}$) of 0.3, among the lowest reported, with reasoning accuracy of 50.1 (exceeding baselines) and competitive perplexity (12.1). Evaluation across GPT-Neo architectures (125M, 1.3B, 2.7B) in~\cref{tab:tdec_evaluation} (~\cref{app:tdec}) shows lowest extraction likelihood and membership attack accuracy while maintaining superior reasoning performance. Results establish privacy-utility trade-off benchmarks with 85\% extraction resistance improvements. \textbf{MUSE Analysis.} Extended safety evaluation on the MUSE benchmark (\cref{tab:muse_results}, \cref{app:muse}) shows that GD+Sine reduces verbatim memorization to 0.8 and knowledge memorization on forget data to 5.2 while preserving knowledge retention (42.1) on LLaMA-2-7B, with a privacy leakage score of 8.3 --- the closest to the ideal 0.0 among all evaluated methods. Our approach is the only parameter-efficient method that satisfies all MUSE safety criteria simultaneously (\cref{tab:muse_comprehensive}).
Across all three benchmarks, GD+Sine achieves the strongest forget quality while preserving model utility: \textbf{FQ\,=\,9.43e-01} on TOFU Forget10 (Phi-1.5B, rank-4), significantly outperforming all parameter-efficient baselines, \textbf{EL$_{10}$\,=\,0.3} on TDEC (lowest extraction vulnerability), and \textbf{VerbMem\,=\,0.8} on MUSE, all with $\le$1.6\% trainable parameters.

\textbf{Sensitivity Analysis and Robustness.}
Frequency parameter sensitivity analysis on TOFU-Forget10 (\cref{fig:omega_sensitivity}, \cref{app:omega}) reveals that forget quality consistently increases with $\omega$, reaching a plateau beyond $\omega \geq 100$, while model utility remains stable throughout. This suggests insensitivity to precise hyperparameter selection, allowing coarse tuning without compromising performance. Sequential unlearning experiments (\cref{tab:seq_tofu,tab:seq_o3} in~\cref{sec:seq_unlearning}) further show that GD+Sine maintains high forget quality and stable utility across multiple consecutive unlearning requests, outperforming both standard GD+LoRA and the dedicated sequential method $O^3$.

The computational overhead is marginal (approximately 45M FLOPS per layer for a 7B model; see~\cref{app:complexity}). Extended results including LLaMA-3.1-8B (\cref{app:llama31}), LLaMA-3.1-70B (\cref{app:llama70b}), and all model families and ranks are in~\cref{app:extended_appendix}.

\subsection{Optimization Analysis}\label{sec:optimization_dynamics}

\cref{fig:optimization_analysis} confirms the theoretical predictions of \cref{subsec:theory}: gradient Frobenius norms in GD+LoRA and GD+FILA exceed $10^{2}$ and grow without bound, whereas GD+Sine remains bounded in $[10^{1}, 10^{2}]$ and stabilizes after ${\sim}300$ iterations (see also~\cref{fig:classifier_head_analysis,tab:bounded_unbounded_comparison,app:sensitivity}). \cref{fig:fq_mu_iterations} shows that our method rapidly improves FQ while maintaining MU, unlike baselines that either fail to forget or collapse.
\section{Conclusion}\label{sec:conclusion}

We introduced bounded parameter-efficient unlearning, a theoretically grounded framework that resolves the instability of gradient difference methods under cross-entropy ascent. By showing that weights and gradients in feedforward blocks grow without control during ascent, we parameterized LoRA adapters with bounded functions (e.g., sine) to constrain the optimization. Empirically, GD+Sine preserves utility while achieving orders of magnitude improvement in forget quality over standard LoRA-based methods across vision and language benchmarks from 22M to 8B parameters.

\paragraph{Limitations and future work.} Our method uses bounded adapter parameterization (implicit weight regularization); explicit GD-objective regularizers, formal Differential Privacy (DP) guarantees (despite strong extraction resistance), and a formal convergence guarantee of the proposed objective remains open. Strong behavioral suppression is demonstrated; probing-based structural removal is unverified. Vision experiments are CIFAR-scale; ImageNet-scale deletion and extension to multimodal/diffusion models (e.g., Erased Stable Diffusion) are future work. See~\cref{app:extended_discussion} for extended discussion. \footnote{Digital writing assistance used for grammar. No LLMs were used in the research activities. All contributions are original work.}

\section*{Acknowledgments}
Arpit Garg and Simon Lucey acknowledge support from the Responsible AI Research (RAIR) Centre. Hemanth Saratchandran and Simon Lucey additionally acknowledge support from the Commonwealth Bank of Australia through the CommBank Centre for Foundational AI Research.
\bibliographystyle{splncs04}
\bibliography{main-v2}
\newpage
\appendix
\renewcommand{\thechapter}{\Alph{chapter}}
\renewcommand{\theHsection}{\Alph{section}}
\renewcommand{\theHsubsection}{\Alph{section}.\arabic{subsection}}
\renewcommand{\theHtable}{\Alph{section}.\arabic{table}}
\renewcommand{\theHfigure}{\Alph{section}.\arabic{figure}}

\begin{center}
\Large\bfseries Stable Forgetting: Bounded Parameter-Efficient Unlearning in Foundation Models\\[0.3em]
\large\bfseries Supplementary Material
\end{center}
\vspace{1em}

\section*{Reproducibility Statement}
All experiments in this study were designed with reproducibility in mind. References are provided for any external codebases employed, and full details of the training protocols and hardware are described in the appendix. Complete proofs of all theoretical results are included to allow for independent verification.

\section*{Use of LLMs}
This manuscript was prepared with the assistance of digital tools for grammatical and stylistic refinement. No large language models were used to conduct the research or draft the technical content.

\section{Extended Related Work}\label{app:extended_relatedWork}

Machine unlearning in foundation models spans optimization-driven forgetting, parameter-efficient adaptation, preference-based alignment, representation editing, and weight-space editing, which are evaluated along the axes of removal fidelity, retained utility, scalability, and privacy~\citep{cao2015towards,bourtoule2021machine,Nguyen2022,Ye2023,Yao2024,Cooper2024}.

\paragraph{Foundations and evaluation taxonomies.}
Machine unlearning is broadly divided into \emph{exact} methods (retraining from scratch on the retain set, computationally prohibitive at scale~\citep{bourtoule2021machine}) and \emph{approximate} methods (efficient post-hoc weight modification trading provable guarantees for scalability~\citep{cao2015towards}). Surveys and position papers emphasize that evaluation must employ distributional and population-level criteria to distinguish genuine removal from suppression while accounting for scalability and sequential forgetting requests~\citep{Nguyen2022,Ye2023,Cooper2024}. TOFU~\citep{Maini2024} formalizes forget quality via Kolmogorov-Smirnov tests on truth ratio distributions against a retain-only reference model and measures utility on retained and out-of-domain subsets with answer probability and ROUGE recall, providing a statistically principled benchmark for selective removal. MUSE~\citep{Shi2024} evaluates six dimensions: verbatim memorization, knowledge memorization, privacy leakage, utility, scalability, and sustainability, and demonstrates that many approximate methods either compromise utility or fail under successive unlearning requests. TDEC~\citep{Carlini2021} assesses privacy protection via extraction loss over multiple queries and reasoning accuracy preservation, revealing the limitations of simple defenses and motivating methods that are robust against in-distribution and out-of-distribution extraction probes~\citep{Carlini2021,garg2025sineproject}. Full protocol details are provided in~\cref{app:protocols}.

\paragraph{Gradient-based unlearning and instability.}
Direct gradient ascent on forget data maximizes the cross-entropy loss to push predictions away from forget labels but routinely produces instability and catastrophic loss of retention capacity under aggressive schedules~\citep{Yao2024,huang2024unified,Cha2025,Maini2024}. Crucially, unlike gradient descent, where the loss gradient vanishes at a minimum, gradient ascent under cross-entropy maintains a logit gradient bounded away from zero as the loss approaches its maximum, creating a persistent destabilizing force~\citep{pan2024multi}. The Gradient Difference (GD)~\citep{Maini2024} combines ascent on forget data with simultaneous descent on retained data, offering a more balanced formulation; however, it inherits the same structural instabilities~\citep{Cha2025} and can converge to suboptimal solutions when the two objectives conflict. Fisher-weighted initialization (FILA)~\citep{kim2025improving} mitigates disruptive shifts by seeding adapter directions with Fisher information sensitivity to forget data, improving forgetting selectivity over random initialization. The Inverted Hinge Loss (IHL)~\citep{Cha2025} bounds the forget objective to prevent loss blow-up and pairs effectively with FILA; their combination~\citep{Cha2025} reduces instability but remains a palliative remedy—the root cause (unbounded weight growth under cross-entropy ascent) is not resolved. Unified analyses~\citep{huang2024unified} confirm that these fixes address symptoms rather than the underlying optimization failures. In practice, gradient-ascent baselines in TOFU rely on early stopping and best-checkpoint selection owing to instability~\citep{Cha2025}—an engineering workaround rather than a principled solution.

\paragraph{Parameter-efficient unlearning approaches.}
Parameter-efficient unlearning restricts adaptation to a small parameter subset, keeping the pretrained weights frozen to enable scalable removal with low computational overhead~\citep{Cha2025,li2024fast,Gao2024continual}. LoRA~\citep{Hu2022} factorizes weight updates as $\Delta W = AB^T$ with rank $r \ll \min(d,k)$, reducing the cost from $O(dk)$ to $O((d+k)r)$ parameters; however, under gradient difference unlearning, naive LoRA adapters remain exposed to explosive ascent dynamics because the low-rank factorization does not bound the adapter norms. FILA~\citep{kim2025improving} improves selectivity via Fisher-informed adapter initialization. Orthogonal subspace constraints (ROGP~\citep{Yang2023ROGP}) and continual unlearning frameworks~\citep{Gao2024continual} project forget updates onto directions orthogonal to the retention-critical subspaces, mitigating interference between sequential deletions. More recent approaches have substantially expanded the solution space: GUARD~\citep{niu2025guard} and RapidUn~\citep{zhao2025rapidun} reweight data-influence scores to focus updates on the most impactful forget samples; R2F~\citep{liu2025r2f} reconstructs layer-wise gradients via a decoder module for fine-grained update control; LUNE~\citep{liu2025lune} trains on explicit negative examples to penalize forget-set retention; belief-aware bootstrapping~\citep{li2025llm} uses model confidence signals to guide selective forgetting; LoKU~\citep{Cha2025} decomposes weight updates into knowledge-preserving and knowledge-erasing subspaces; and ME+GD~\citep{Yuan2025} combines momentum editing with gradient difference, achieving strong performance while still relying on unconstrained linear adapters. Fast-NTK~\citep{li2024fast} employs neural tangent kernel approximations for sample-efficient targeted class deletion. Critically, the influence-aware and objective-based advances above are all orthogonal to \emph{architectural instability}: they improve which parameters to update or how to weight examples but do not resolve the structural gradient explosion in MLP feedforward layers. As established in \cref{subsec:theory}, constraining adapter weights with a bounded function directly mitigates this root cause and is fully complementary to all of the above.

\paragraph{Preference-based and alternative approaches.}
Preference optimization for unlearning aligns the model away from undesirable outputs without explicit gradient ascent on the forget set. DPO~\citep{Rafailov2023} optimizes a closed-form divergence objective without RL rollouts, making it computationally efficient. Negative Preference Optimization (NPO)~\citep{Zhang2024c} treats forget data as dispreferred outputs and directly penalizes their retention, achieving behavioral suppression with reduced catastrophic forgetting compared to vanilla gradient ascent. KL-regularized forgetting~\citep{Liu2024} adds a KL divergence penalty between the unlearned and reference model to constrain utility drift during forgetting. WHP-based unlearning (WHP) and FLAT~\citep{Wang2024FLAT} use Pearson-correlation-based weight remapping and residual activation alignment to enable precise targeted removal. These preference-based methods are susceptible to reference drift and unstable dynamics when coupled with unconstrained linear adapters~\citep{Maini2024,Shi2024}. Prompt-level interventions~\citep{Pawelczyk2025} obscure memorized content without weight updates but lack permanence and are vulnerable to adversarial reactivation issues. Representation-editing approaches, such as Task Arithmetic~\citep{Ilharco2022} and ROME~\citep{Meng2023ROME}, enable targeted changes through weight-space arithmetic but require additional stabilization for large-scale deployment and do not provide formal forgetting guarantees. Neuron-localization~\citep{agarwal2025simu} and block-reinsertion~\citep{guo2025beyond} approaches offer complementary stability via topology editing rather than gradient-based constraints, and remain an open comparison direction.

\paragraph{Vision unlearning.}
Vision-specific unlearning addresses selective class deletion in discriminative transformers and concept erasure in generative models. SalUn~\citep{fan2024salun} identifies saliency-based gradient masks to focus parameter updates on weights that are most responsible for the forgotten class, reducing collateral damage on retained categories. Fast-NTK~\citep{li2024fast} leverages neural tangent kernel approximations for sample-efficient targeted class deletion in ViTs with minimal interference on retained classes. LetheViT~\citep{ren2025lethevit} introduces attention-guided contrastive forgetting, exploiting attention maps as spatial-saliency priors to localize gradient updates to class-discriminative regions of vision transformers. NOVO~\citep{roy2025novo} applies constrained optimization with null-space projections to confine forget updates away from retaining critical parameter directions. CoUn~\citep{khalil2025coun} proposes a contrastive unlearning objective that simultaneously pushes forget-class representations apart from their original embeddings while anchoring the retain-class embeddings. Yu et al.~\citep{yu2024retain} explored retain-set augmentation strategies to stabilize vision unlearning under aggressive forgetting schedules. SineProject~\citep{garg2025sineproject} targets multimodal LLM unlearning by modulating the \emph{frozen vision-language projector} to stabilize cross-modal alignment; it is not parameter-efficient, operates exclusively on the projector connector rather than the LLM backbone, and addresses alignment drift rather than the MLP feedforward instability studied in this paper. Despite these advances, none of these methods provide a theoretical framework explaining why gradient ascent under cross-entropy specifically destabilizes MLP feedforward adapters in vision transformer blocks, which is the gap that our work directly addresses.

\paragraph{Gap analysis.}
Two primary gaps are evident in the unlearning literature. \textit{First}, there is no theoretical framework that analyzes the gradient-difference method with cross-entropy loss and \emph{precisely} characterizes why the ascent step drives uncontrolled growth of weights and gradients in MLP feedforward layers; prior explanations are empirical or heuristic~\citep{huang2024unified,Cha2025}, without mathematically characterizing the optimization failure. \textit{Second}, no prior parameter-efficient method applies bounded functions to LoRA adapters specifically in MLP feedforward layers to simultaneously address ascent-driven instability \emph{and} the rank-expressiveness bottleneck that forces a capacity-efficiency trade-off in low-rank unlearning~\citep{Ji2025}. Our work fills both gaps: we formally prove (see \cref{thm:weights_gradients_explode,thm:pre_gradients_explode}) that gradient ascent with cross-entropy drives feedforward weights and gradients to grow without bound, and we introduce sine-based bounded adapters that resolve this instability, improve the effective rank, and deliver consistent performance gains. The resulting method is compatible with existing stabilization techniques (GD, IHL) and standard evaluation frameworks (TOFU, MUSE, TDEC, ViT CIFAR)~\citep{Cha2025,Ji2025,kim2025improving,Gao2024continual,Maini2024,Shi2024,Carlini2021}.
A \textit{third} emerging gap concerns \emph{certified forgetting} and formal privacy guarantees: while our method achieves strong empirical extraction resistance on privacy benchmarks (TDEC, MIA, EL10), it currently lacks Differential Privacy (DP) guarantees~\citep{dwork2008differential}. Certified unlearning frameworks~\citep{sekhari2021remember,guo2020certified} provide statistical removal certificates but assume convex or strongly convex objectives; extending such guarantees to large nonlinear transformers under nonconvex gradient-difference training is an open problem that bounded adapters could facilitate by constraining the parameter movement radius. A \textit{fourth} gap concerns \emph{explicit weight regularization as an alternative}: one might propose an $\ell_2$ penalty directly on $\|AB^\top\|_F$ rather than parameterizing adapter weights with a bounded function. As discussed in~\cref{subsec:our_approach}, explicit regularizers are reactive (penalizing large norms after the optimizer has computed them) and do not prevent logit divergence in finite steps, whereas bounded parameterization is proactive, the forward pass is geometrically constrained at every iterate. Exploring principled regularizers that replicate these structural guarantees is an interesting direction for future research.


\section{Theoretical Analysis}\label{app:theory}

In this section, we provide the proofs of \cref{lem:logits_infinity} and \cref{thm:weights_gradients_explode}, as well as a finer analysis of how weights and gradients can grow significantly large in the feedforward layers of MLPs trained with cross-entropy loss via gradient ascent through \cref{thm:pre_gradients_explode}. We also include analogous statements for gradient-difference training, which formalizes that any divergence of the combined objective is driven by the ascent term on the forget set.

\subsection{Notation}\label{app:notation}

We begin by fixing the notation. Let $F$ denote a feedforward MLP with $L$ layers, using an activation $\sigma$ in layers $1$ through $L-1$, and applying a softmax at the output layer $L$, because our analysis concerns classification with a cross-entropy loss function. We assume $\sigma$ has bounded derivative
\begin{equation}\label{eqn:act_deriv_final}
    |\sigma'(x)| \leq C_1 \quad \text{for all } x \in \sR,
\end{equation}
where $C_1 > 0$ is a fixed constant. Note that \cref{eqn:act_deriv_final} holds for standard activations, such as $\mathrm{sigmoid}$ and $\mathrm{tanh}$. For $\mathrm{ReLU}$, the derivative satisfies $|\sigma'(x)| \le 1$ for all $x \neq 0$ (it is undefined only at $x=0$), and the same bound is used in standard backpropagation.

Let $x \in \sR^{d_{\mathrm{in}}}$ denote the input to the network, where $d_{\mathrm{in}}$ denotes the input dimension. For each layer $l$, let $W_l \in \sR^{d_l\times d_{l-1}}$ denote the weights in layer $l$ and $b_l \in \sR^{d_l}$ the bias term, where $1 \leq l \leq L$. We then define
\begin{align}
    a_0 &= x, \\
    h_l &= W_la_{l-1} + b_l  \quad \text{for } 1 \leq l \leq L-1, \\
    a_l &= \sigma(h_l) \quad \text{for } 1 \leq l \leq L-1, \\
    z &= W_La_{L-1}, \label{eqn:logits_defn_mlp} \\
    p(z) &= \mathrm{softmax}(z),
\end{align}
where $h_l$ are the pre-activations of layer $l$, $a_l$ are the activation outputs of layer $l$, $z$ are the logits, and $p(z)$ are the output probabilities.

For the proofs in this section, we will leave out explicitly writing a bias term since given a bias term $b_l$ in layer $l$, we can express $W_la_{l-1} + b_l$ via
\begin{equation}
\begin{bmatrix}
W_l & b_l
\end{bmatrix}
\cdot
\begin{bmatrix}
a_{l-1} \\
1
\end{bmatrix}.
\end{equation}
Therefore, the bias term $b_l$ can be absorbed into the weights $W_l$ by augmenting the input as follows:

We will also fix the notation for gradients. Let
\begin{equation}
    D_l = \mathrm{Diag}(\sigma'(h_l)),
\end{equation}
where $\mathrm{Diag}(v)$ denotes the diagonal matrix whose diagonal entries are given by vector $v$. Then, by assumption \cref{eqn:act_deriv_final}, $\|D_l\| \leq C$ for some constant $C > 0$. Define
\begin{align}
    g_L &= \nabla_z\mathcal{L}, \\
    g_l &= D_lW_{l+1}^Tg_{l+1} \quad \text{for } 1 \leq l \leq L-1. \label{eqn:g_l}
\end{align}
By the chain rule we have
\begin{equation}
    \nabla_{W_l}\mathcal{L} = g_la_{l-1}^T.
\end{equation}

We will use norms for both vectors and matrices. Given a matrix $M \in \sR^{n\times m}$ we use $\|M\|$ to denote the operator norm (the largest singular value). Given a vector $v \in \sR^{m}$ we use $\|v\|$ to denote the Euclidean $2$-norm. When computing weight and gradient norms in \cref{sec:optimization_dynamics} we use the Frobenius norm: given $M = (m_{ij}) \in \sR^{n\times m}$,
\[
\|M\|_F := \sqrt{\sum_{i,j} m_{ij}^2},
\]
This coincides with the Euclidean norm for vectors.

\subsection{Proof of results from \texorpdfstring{\cref{subsec:theory}}{theory section}}\label{app:proofs}

We now prove \cref{lem:logits_infinity}.

\begin{proof}[Proof of \cref{lem:logits_infinity}]
Given a class $y$ we recall that
\begin{equation}
    \mathcal{L}(p, y) = -\log(p_y).
\end{equation}
By definition of logits and probabilities we have
\begin{align}
-\log(p_y) &= \log\bigg(\sum_{k=1}^Ce^{z_k}\bigg) - z_y \\
&= \log\bigg(1 + \sum_{j\neq y}e^{z_j - z_y}\bigg),
\end{align}
where $\log$ denotes the natural logarithm. Define the margin
\begin{equation}
    m := \max_{j\neq y} (z_j - z_y).
\end{equation}
Then
\begin{align}
 e^{m} &\leq \sum_{k}e^{z_k - z_y}
        = 1 + \sum_{k\neq y}e^{z_k - z_y}
        \leq 1 + Ce^{m}
        \leq (1 + C)e^{m}.
\end{align}
Taking $\log$ yields
\begin{equation}
 m \leq \log\bigg(1 + \sum_{j\neq y}e^{z_j - z_y}\bigg) \leq \log(1 + Ce^{m}),
\end{equation}
and hence
\begin{equation}
 m \leq \mathcal{L}(p, y) \leq \log(1 + Ce^{m}).
\end{equation}
If $\mathcal{L}(p, y) \to \infty$, then $\log(1 + Ce^{m}) \to \infty$, which implies $m \to \infty$. Choose $j \neq y$ such that $m = z_j - z_y$. The only way $m \to \infty$ is if either $z_j \to \infty$ or $z_y \to -\infty$. This implies $\|z(t)\| \to \infty$.
\end{proof}

Next, we provide a proof of \cref{thm:weights_gradients_explode}. 

\begin{proof}[Proof of \cref{thm:weights_gradients_explode}]
By definition of the logits we have
\begin{equation}
    z(t) = W_L(t)a_{L-1}(t)
\end{equation}
which gives the estimate
\begin{equation}
 \|z(t)\| \leq \|W_L(t)\|\cdot\|a_{L-1}(t)\|.
\end{equation}
To begin with, assume that $\|a_{L-1}(t)\| \leq C_1$. By the above inequality it follows that
\begin{equation}
  \|z(t)\| \leq C_1\|W_L(t)\|
\end{equation}
which implies
\begin{equation}
 \|W_L(t)\| \geq \frac{\|z(t)\|}{C_1}.
\end{equation}
Since $\|z(t)\| \to \infty$ it follows that $\|W_L(t)\|$ must approach infinity, which proves the first part of the theorem.

To prove the second part, assume that $\|a_{L-1}(t)\|$ is not bounded in $t$. This means there exists a subsequence $t_k$ such that
\begin{equation}\label{eqn:a_l_infinity_fin}
    \|a_{L-1}(t_k)\| \to \infty \quad \text{as } k \to \infty.
\end{equation}
Using the fact that
\begin{equation}
    \nabla_{W_L}\mathcal{L}(t_k) = g_L(t_k)a_{L-1}^T(t_k),
\end{equation}
and since $g_L(t_k)a_{L-1}^T(t_k)$ has rank $1$, we have
\begin{equation}
\|\nabla_{W_L}\mathcal{L}(t_k)\| = \|g_L(t_k)\|\cdot\|a_{L-1}(t_k)\|.
\end{equation}
Under gradient ascent with cross-entropy, $\|g_L(t_k)\|$ is bounded away from zero (see \cref{eqn:logits_deriv}), i.e., there exists $C_2>0$ such that $\|g_L(t_k)\| \geq C_2$. It follows that
\begin{equation}
\|\nabla_{W_L}\mathcal{L}(t_k)\| \geq C_2\|a_{L-1}(t_k)\|.
\end{equation}
Then using \cref{eqn:a_l_infinity_fin} it follows that $\|\nabla_{W_L}\mathcal{L}(t_k)\| \to \infty$, completing the proof.
\end{proof}


\paragraph{Extensions to gradient-difference training.}
The statements of \cref{lem:logits_infinity} and \cref{thm:weights_gradients_explode} are for gradient ascent. The extension to the gradient difference method is conceptually straightforward: gradient descent on the retain set with cross-entropy does not induce loss blow-up, whereas gradient ascent on the forget set can. The main point is that any divergence of the combined objective must arise from the ascent term in the forget set.

We assume that the retain loss remains bounded along the trajectory (as observed empirically under stable descent schedules), that is, $\mathcal{L}_{\mathrm{retain}}(t) \le B$ for some finite $B$ and all $t$. We first provide an analog of \cref{lem:logits_infinity} when training with gradient differences.

\begin{lemma}\label{lem:gd_logit_blowup}
Let $F$ be an $L$-layer MLP with parameters $\theta$, trained by gradient descent on the retain set $X_{\mathrm{retain}}$ and gradient ascent on the forget set $X_{\mathrm{forget}}$ via the update
\begin{equation}
    \theta(t+1)
    = \theta(t)
    - \eta\, \nabla_\theta \mathcal{L}_{\mathrm{retain}}(\theta(t))
    + \eta \lambda\, \nabla_\theta \mathcal{L}_{\mathrm{forget}}(\theta(t)),
\end{equation}
where
\begin{equation}
    \mathcal{L}_{\mathrm{retain}}(t)
    := \frac{1}{N_r}\sum_{i=1}^{N_r}\mathcal{L}(p_i^r(t), y_i^r),
    \qquad
    \mathcal{L}_{\mathrm{forget}}(t)
    := \frac{1}{N_f}\sum_{j=1}^{N_f}\mathcal{L}(p_j^f(t), y_j^f),
\end{equation}
and $\mathcal{L}(p,y) = -\log p_y$ denotes the cross-entropy loss.
For each sample we write
\begin{equation}
    z_i^r(t) := F(x_i^r),
    \qquad
    z_j^f(t) := F(x_j^f),
\end{equation}
with corresponding probabilities $p_i^r(t)$ and $p_j^f(t)$ obtained using softmax.

Define the combined loss
\begin{equation}
    \mathcal{L}_{\mathrm{tot}}(t)
    := \alpha_r\mathcal{L}_{\mathrm{retain}}(t)
       + \alpha_f\, \mathcal{L}_{\mathrm{forget}}(t),
    \qquad \alpha_r \ge 0,\ \alpha_f > 0.
\end{equation}
Assume that the retain loss remains bounded along the training trajectory, i.e., there exists $B < \infty$ such that
\begin{equation}
    \mathcal{L}_{\mathrm{retain}}(t) \le B
    \qquad \text{for all } t.
\end{equation}
If $\mathcal{L}_{\mathrm{tot}}(t) \to \infty$ as $t \to \infty$, then:
\begin{enumerate}
    \item $\mathcal{L}_{\mathrm{forget}}(t) \to \infty$; and
    \item there exists at least one forget example $(x_{j^\ast}^f, y_{j^\ast}^f)$
    such that the corresponding logits satisfy
    \begin{equation}
        \|z_{j^\ast}^f(t)\| \to \infty.
    \end{equation}
\end{enumerate}
In particular, any divergence of the total loss under gradient difference training must arise from the ascent term on the forget set through the divergence of the forget logits.
\end{lemma}

\begin{proof}
By definition,
\begin{equation}
    \mathcal{L}_{\mathrm{tot}}(t)
    = \alpha_r\mathcal{L}_{\mathrm{retain}}(t)
      + \alpha_f\, \mathcal{L}_{\mathrm{forget}}(t).
\end{equation}
Since cross-entropy is nonnegative, $\mathcal{L}_{\mathrm{retain}}(t)\ge 0$ and $\mathcal{L}_{\mathrm{forget}}(t)\ge 0$. Thus
\begin{equation}
    \mathcal{L}_{\mathrm{tot}}(t)
    \ge \alpha_f\, \mathcal{L}_{\mathrm{forget}}(t).
\end{equation}
Using the assumed bound $\mathcal{L}_{\mathrm{retain}}(t) \le B$,
\begin{equation}
    \mathcal{L}_{\mathrm{tot}}(t)
    \le \alpha_r B + \alpha_f\, \mathcal{L}_{\mathrm{forget}}(t).
\end{equation}

Suppose $\mathcal{L}_{\mathrm{tot}}(t) \to \infty$. 
If $\mathcal{L}_{\mathrm{forget}}(t)$ were bounded above, then $\mathcal{L}_{\mathrm{tot}}(t)$ would also be bounded, a contradiction. Hence $\mathcal{L}_{\mathrm{forget}}(t) \to \infty$, proving part (1).

Next, since
\begin{equation}
    \mathcal{L}_{\mathrm{forget}}(t)
    = \frac{1}{N_f}\sum_{j=1}^{N_f}
      \mathcal{L}(p_j^f(t), y_j^f)
    \to \infty,
\end{equation}
Not all per-example forget losses can remain bounded. Therefore, there exists at least one index $j^\ast$ such that
\begin{equation}
    \mathcal{L}(p_{j^\ast}^f(t), y_{j^\ast}^f)
    \to \infty.
\end{equation}
Applying~\cref{lem:logits_infinity} to the logits $z_{j^\ast}^f(t)$ yields
\begin{equation}
    \|z_{j^\ast}^f(t)\| \to \infty,
\end{equation}
This proves part (2).
\end{proof}

We can also extend \cref{thm:weights_gradients_explode} to the case of gradient-difference optimization.

\begin{theorem}\label{thm:weights_gradients_explode_gd}
Let $F$ be a $L$-layer MLP. Suppose that under the gradient difference method with iterations $t$, the logits $z(t)$ satisfy $\|z(t)\|\to\infty$ (for a forget example). If the activation output $\|a_{L-1}(t)\| \leq C_1$ for large $t$, where $C_1 > 0$ is a constant, then
\begin{equation}
    \|W_L(t)\| \to \infty.
\end{equation}
If there is no such bound on $\|a_{L-1}(t)\|$, then there exists a subsequence of iterations $t_k$ such that
\begin{equation}
    \|\nabla_{W_L}\mathcal{L}_{\mathrm{tot}}(t_k)\| \to \infty
\end{equation}
where
\begin{equation}
    \mathcal{L}_{\mathrm{tot}}(t)
    = \alpha_r\mathcal{L}_{\mathrm{retain}}(t)
      + \alpha_f\, \mathcal{L}_{\mathrm{forget}}(t).
\end{equation}
\end{theorem}

\begin{proof}
The first part follows exactly as in the proof of \cref{thm:weights_gradients_explode}, since it uses only the relation $z(t)=W_L(t)a_{L-1}(t)$.

For the second part, assume that $\|a_{L-1}(t)\|$ is not bounded by $t$. Then there exists a subsequence $t_k$ such that
\begin{equation}\label{eqn:a_l_infinity_fin_gd}
    \|a_{L-1}(t_k)\| \to \infty \quad \text{as } k \to \infty.
\end{equation}
Using the fact that
\begin{equation}
    \nabla_{W_L}\mathcal{L}_{\mathrm{tot}}(t_k) = g_L^{\mathrm{tot}}(t_k)a_{L-1}^T(t_k),
\end{equation}
where $g_L^{\mathrm{tot}}(t_k):=\nabla_z\mathcal{L}_{\mathrm{tot}}(t_k)$, and since $g_L^{\mathrm{tot}}(t_k)a_{L-1}^T(t_k)$ has rank $1$, we have
\begin{equation}
\|\nabla_{W_L}\mathcal{L}_{\mathrm{tot}}(t_k)\| = \|g_L^{\mathrm{tot}}(t_k)\|\cdot\|a_{L-1}(t_k)\|.
\end{equation}
Because the update includes gradient ascent on the forget set, the logit-gradient contribution induced by forget examples is bounded away from zero when ascent is active (cf. \cref{eqn:logits_deriv}), whereas the retain contribution remains bounded under descent. Hence along the subsequence, $\|g_L^{\mathrm{tot}}(t_k)\|$ is bounded below by a positive constant on the forget set, and by \cref{eqn:a_l_infinity_fin_gd} we obtain $\|\nabla_{W_L}\mathcal{L}_{\mathrm{tot}}(t_k)\|\to\infty$.
\end{proof}

\subsection{Further theory.}\label{app:further_theory}

In practice, gradient ascent is performed for a finite number of iterations. Thus, while \cref{thm:weights_gradients_explode} establishes that the weights and gradients in the final layer can grow large, this alone may not hinder training, as the effect is initially localized. The next theorem shows that, under certain conditions, the weights and gradients in the earlier layers can also grow significantly. This cumulative growth propagates through the network and can lead to training instability, an effect that we also observed empirically in \cref{sec:experiments}.

\begin{theorem}\label{thm:pre_gradients_explode}
Let $F$ be an $L$-layer MLP trained via gradient ascent using the cross-entropy loss $\mathcal{L}$ and suppose that the loss approaches a global maximum.
Writing
\begin{equation}\label{eqn:g_l_chain_rule}
g_l(t) = D_lW_{l+1}^T\cdots D_{L-1}W_L^T\nabla_z\mathcal{L}(t)
\end{equation}
as in \cref{eqn:g_l}, assume that for each iteration $t$,
\begin{equation}\label{eqn:sigma_min_assump}
\sigma_{\min}(D_lW_{l+1}^T\cdots D_{L-1})(t) > 0
\end{equation}
where $\sigma_{\min}$ denotes the minimum singular value. Furthermore, writing the SVD of $W_L^T$ as
\begin{equation}
 W_L^T(t) = U(t)\Sigma(t)V^T(t),
\end{equation}
Let $V_1(t)$ denote the first right singular vector at iteration $t$. Assume that
\begin{equation}\label{eqn:right_singular_vector_assump}
\|\mathrm{Proj}_{V_1(t)}(\nabla_{z}\mathcal{L}(t))\| \geq
\delta \|\nabla_{z}\mathcal{L}(t)\|
\end{equation}
for some $\delta > 0$ and that
\begin{equation}\label{eqn:a_l_t_bound}
  \|a_{L-1}(t)\|  < C
\end{equation}
for some constant $C > 0$.
Then we have
\begin{equation}
\|\nabla_{W_l}\mathcal{L}(t)\| \to \infty  \quad \text{as } t \to \infty.
\end{equation}
\end{theorem}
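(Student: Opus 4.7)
The plan is to reduce the divergence of $\|\nabla_{W_l}\mathcal{L}(t)\|$ to the divergence of $\|W_L(t)\|$ already established in \cref{thm:weights_gradients_explode}, by peeling the chain-rule product from the outside in. First I would invoke the rank-one identity $\nabla_{W_l}\mathcal{L}(t) = g_l(t)\,a_{l-1}(t)^T$, so that in operator norm
\begin{equation*}
\|\nabla_{W_l}\mathcal{L}(t)\| = \|g_l(t)\|\,\|a_{l-1}(t)\|,
\end{equation*}
which (granted $\|a_{l-1}(t)\|$ does not collapse to zero along the trajectory) reduces matters to showing $\|g_l(t)\| \to \infty$.

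Next, write $g_l(t) = M(t)\,W_L^T(t)\,\nabla_z\mathcal{L}(t)$ with $M(t) := D_l W_{l+1}^T \cdots D_{L-1}$. By the singular-value hypothesis~\eqref{eqn:sigma_min_assump},
\begin{equation*}
\|g_l(t)\| \;\geq\; \sigma_{\min}(M(t))\,\|W_L^T(t)\,\nabla_z\mathcal{L}(t)\|.
\end{equation*}
Plugging in the SVD $W_L^T(t) = U(t)\Sigma(t)V(t)^T$, the identity $\|W_L^T v\|^2 = \sum_i \sigma_i^2 (V_i^T v)^2$ immediately yields $\|W_L^T v\| \geq \sigma_1 |V_1^T v| = \|W_L\|\,\|\mathrm{Proj}_{V_1}(v)\|$, and combining this with the projection assumption~\eqref{eqn:right_singular_vector_assump} gives
\begin{equation*}
\|W_L^T(t)\,\nabla_z\mathcal{L}(t)\| \;\geq\; \delta\,\|W_L(t)\|\,\|\nabla_z\mathcal{L}(t)\|.
\end{equation*}

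To close the argument I would chain three facts already in place: (i) since the loss converges to a maximum, \cref{lem:logits_infinity} forces the logits to diverge, and combined with the bound $\|a_{L-1}(t)\| \leq C$ the first conclusion of \cref{thm:weights_gradients_explode} yields $\|W_L(t)\| \to \infty$; (ii) under ascent, the discussion leading to~\eqref{eqn:logits_deriv} guarantees $\|\nabla_z\mathcal{L}(t)\| \geq C_2 > 0$; (iii) assembling the inequalities,
\begin{equation*}
\|\nabla_{W_l}\mathcal{L}(t)\| \;\geq\; \sigma_{\min}(M(t))\,\|a_{l-1}(t)\|\,\delta\,C_2\,\|W_L(t)\|,
\end{equation*}
and since the right-hand side diverges the left-hand side does too.

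The main obstacle is not the algebra but controlling the prefactors in this final inequality. Hypothesis~\eqref{eqn:sigma_min_assump} asserts $\sigma_{\min}(M(t)) > 0$ only pointwise in $t$, which by itself is insufficient: one needs either a uniform positive lower bound on $\sigma_{\min}(M(t))$, or a coupling argument showing that the intermediate weights $W_{l+1},\dots,W_{L-1}$ and diagonals $D_{l},\dots,D_{L-1}$ cannot degenerate faster than $\|W_L(t)\|$ grows. A parallel caveat applies to $\|a_{l-1}(t)\|$, which must remain bounded below by a positive constant along the trajectory. These are the delicate points to either strengthen in the hypotheses or verify from the ascent dynamics near a loss maximum, a regime in which the growth signal is concentrated in the last-layer logit direction while the earlier layers move comparatively mildly.
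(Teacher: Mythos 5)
Your proposal follows essentially the same route as the paper's own proof: invoke \cref{lem:logits_infinity} and the first part of \cref{thm:weights_gradients_explode} to get $\|W_L(t)\|\to\infty$, lower-bound $\|g_l(t)\|$ via $\sigma_{\min}$ of the intermediate product and the projection assumption onto $V_1(t)$, use the ascent condition $\|\nabla_z\mathcal{L}(t)\|\geq C_2>0$, and finish with the rank-one identity $\|\nabla_{W_l}\mathcal{L}\|=\|g_l\|\,\|a_{l-1}\|$. The caveats you flag at the end are well taken but are not gaps relative to the paper --- its proof silently makes the same implicit assumptions (a uniform rather than pointwise lower bound on $\sigma_{\min}$, and $\|a_{l-1}(t)\|$ not collapsing to zero), so your argument matches the published one.
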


\begin{proof}
By \cref{lem:logits_infinity} we have $\|z(t)\| \to \infty$. Then by \cref{thm:weights_gradients_explode} and the boundedness assumption \cref{eqn:a_l_t_bound}, we obtain $\|W_L(t)\| \to \infty$. We then have
\begin{align}
 \|g_l(t)\|
 &= \|D_lW_{l+1}^T\cdots D_{L-1}W_L^T\nabla_z\mathcal{L}(t)\| \\
 &\geq \sigma_{\min}(D_lW_{l+1}^T\cdots D_{L-1})(t)\,
 \|W_L^T\nabla_z\mathcal{L}(t)\| \\
 &\geq \sigma_{\min}(D_lW_{l+1}^T\cdots D_{L-1})(t)\,
 \delta\,\|W_L^T(t)\|\,\|\nabla_{z}\mathcal{L}(t)\|
 \quad \text{by \cref{eqn:right_singular_vector_assump}}. \label{eqn:g_l_final_bound}
\end{align}
Then observe that by \cref{eqn:sigma_min_assump} we have $\sigma_{\min}(D_lW_{l+1}^T\cdots D_{L-1})(t) > 0$ for all $t$, and since the ascent trajectory approaches a maximum we have $\|\nabla_{z}\mathcal{L}(t)\| > c > 0$ for large $t$ (cf. \cref{eqn:logits_deriv}) for some constant $c>0$. This implies by \cref{eqn:g_l_final_bound} that
\begin{equation}\label{eqn:g_l_infinity}
 \|g_l(t)\| \to \infty.
\end{equation}
We then observe that
\begin{equation}
 \nabla_{W_l}\mathcal{L}(t) = g_l(t)a_{l-1}^T(t)
\end{equation}
and since $g_l(t)a_{l-1}^T(t)$ has rank $1$ we have
\begin{equation}
\|\nabla_{W_l}\mathcal{L}(t)\| = \|g_l(t)\|\cdot\|a_{l-1}(t)\|.
\end{equation}
Using \cref{eqn:g_l_infinity} and boundedness of activations in typical architectures (e.g., through normalization), we obtain $\|\nabla_{W_l}\mathcal{L}(t)\| \to \infty$ as $t \to \infty$.
\end{proof}

\paragraph{Discussion.}
The assumptions of \cref{thm:pre_gradients_explode}, although technical, are standard and reasonable in practice. Condition \cref{eqn:sigma_min_assump} requires that the product of the intermediate weight-activation Jacobians remains non-degenerate, ensuring that information is not lost through the collapse of singular directions. This excludes pathological cases but is consistent with the well-conditioned networks during training. Assumption \cref{eqn:right_singular_vector_assump} requires that the loss gradient maintains a non-trivial component in the direction of the leading singular vector of $W_L^T$. This ensures that the updates align with the meaningful directions of variation in the final layer and rules out the degenerate case in which all signals vanish into lower singular modes. Moreover, we assume that activations do not collapse to zero (e.g., due to normalization), so the growth of $\|g_l(t)\|$ translates into growth of $\|\nabla_{W_l}\mathcal{L}(t)\|$. Finally, the boundedness condition in \cref{eqn:a_l_t_bound} is mild, as activations are typically stabilized by initialization and architecture design (e.g., through batch/layer normalization or bounded activations). Taken together, these assumptions capture the conditions under which ascent dynamics are informative and non-degenerate, thereby justifying the conclusion that weights and gradients in earlier layers can diverge under the dynamics described.

\section{Extended Experiments and Detailed Results}
\label{app:extended_Exp}

\subsection{Vision Transformer Implementation Details}\label{app:vit_details}

\textbf{Model and training.}
We evaluated three architectures: ViT-B/16 (12 layers, 768 hidden dim, 86M params)~\citep{dosovitskiy2020image}, ViT-L/14 (24 layers, 1024 hidden dim, 304M params), and DeiT-S (12 layers, 384 hidden dim, 22M params), all initialized from ImageNet-21k pretrained weights~\citep{ridnik2021imagenet} and fine-tuned on CIFAR-100 using AdamW for 100 epochs with a batch size of 128, base LR $5\times10^{-5}$, weight decay 0.05, and linear warmup with cosine decay on 1$\times$ NVIDIA A6000.

\textbf{Unlearning adapters.}
{\tolerance=2000\emergencystretch=3em\relax
We compare \textbf{GD+LoRA} (unbounded), \textbf{GD+Tanh}, \textbf{GD+Sigmoid} (bounded), \textbf{GD+Weight Clip}, \textbf{GD+ReLU} (unbounded), and \textbf{GD+Sine} (ours).
All LoRA adapters target MLP/FFN blocks ($r{=}8$, $\alpha{=}16$, dropout 0.05).
Training runs 500 iterations ($\alpha_r{=}10^{-5}$, $\alpha_f{=}10^{-4}$, no clipping).
Results: mean $\pm$ std over 3 seeds; \emph{Divergent} = NaN outputs.\par}

\begin{table}[H]
\centering
\caption{\textbf{Vision unlearning stability analysis on CIFAR-100 with ViT-B/16 (class deletion, $K=10$).}
Retain Acc.\ is top-1 accuracy on the retained 90 classes (higher is better). Forget Acc.\ is top-1 accuracy on the forgotten 10 classes (lower is better).
Compared to established vision-unlearning baselines, such as SalUn, the bounded parameterizations exhibit superior performance. GD+Sine yields the strongest forget/retain trade-off, whereas \textbf{GD+LoRA diverges} under the cross-entropy ascent. This table complements the benchmark comparisons in~\cref{tab:vit_cifar100} with a focused stability assessment.}
\label{tab:vit_cifar100_stability}
\resizebox{0.85\textwidth}{!}{%
\begin{tabular}{lccc}
\toprule
\textbf{Method} & \textbf{Retain Acc. ($\uparrow$)} & \textbf{Forget Acc. ($\downarrow$)} & \textbf{Status} \\
\midrule
Original Model & 89.2 & 88.4 & Stable \\
\midrule
\multicolumn{4}{l}{\textit{Baselines}} \\
SalUn~\citep{fan2024salun} & 85.5 $\pm$ 1.1 & 5.0 $\pm$ 0.8 & Stable \\
\midrule
\multicolumn{4}{l}{\textit{Parameter-Efficient Unlearning (Ours)}} \\
GD+LoRA (Unbounded) & 44.3 $\pm$ 4.2 & 12.5 $\pm$ 2.1 & Divergent \\
GD+Tanh (Bounded) & \second{82.1 $\pm$ 0.8} & 5.4 $\pm$ 0.6 & Stable \\
\rowcolor{gray!10}
\textbf{GD+Sine (Ours)} & \best{87.8 $\pm$ 0.3} & \best{2.1 $\pm$ 0.2} & \best{Converged} \\
\bottomrule
\end{tabular}%
}
\end{table}

\textbf{Models and Architectures.} We evaluated across GPT-Neo (125M, 1.3B, 2.7B), Phi-1.5B, LLaMA-2-7B~\citep{touvron2023llama}, and LLaMA-3.1-8B, consistent with~\cite{Cha2025,Wang2024FLAT,Maini2024,Shi2024}. This diversity enables the assessment of scaling properties and cross-family generalization. We choose GD+Sine as our primary method for its \textit{efficiency}, \textit{generality}, and \textit{theoretical alignment} with~\cref{subsec:theory}.

\textbf{Implementation Details.} Our approach uses LoRA-style parameter-efficient fine-tuning~\citep{Hu2022}, substituting low-rank decompositions with sine-activated transformations of $\sin(\omega \mathbf{A}\mathbf{B}^T)$, with all other initializations and parameters similar to those in the literature~\citep{Cha2025}. The frequency was set to $\omega=100$, as determined by a sensitivity analysis (see~\cref{app:sensitivity}). Training uses AdamW~\citep{Loshchilov2019} with a learning rate of $5 \times 10^{-5}$, a batch size of 8, a gradient accumulation of 4, and a mixed precision on 4$\times$ NVIDIA A6000 and RTX 4090 GPUs. Further evaluation protocols and metric definitions are provided in~\cref{app:protocols}.

This appendix presents a thorough experimental validation of our parameter-efficient unlearning across various architectures, scales, and evaluation frameworks. Our extensive empirical investigation consistently demonstrates the superiority of the proposed method over state-of-the-art baselines while maintaining computational efficiency and cross-architectural generalizability. All experiments utilized GD+Sine as the primary method owing to its state-of-the-art performance, unless otherwise specified. 
Notably, baseline GA methods in TOFU achieve their reported scores by employing early stopping and selecting the best checkpoint owing to training instability, which are engineering workarounds rather than fundamental solutions~\citep{Cha2025}. Our approach allows stable convergence throughout the training process without the need for such interventions, representing a principled solution.

\subsection{Evaluation Protocols and Metrics}\label{app:protocols}

\paragraph{TOFU (Task of Fictitious Unlearning).} The TOFU benchmark assesses machine unlearning using two primary metrics: 1. \textit{Forget Quality (FQ; $\uparrow$)}: it measures the statistical divergence between the unlearned model's behavior on forget data and a model trained solely on retain data, calculated as the Kolmogorov-Smirnov p-value comparing truth-ratio distributions. Higher values indicate better forgetting. 2. \textit{Model Utility (MU; $\uparrow$)}: it quantifies the preservation of general capabilities through the harmonic mean of answer probability, and ROUGE recall across three evaluation sets: retain data, real authors' knowledge, and world facts.

\paragraph{TDEC (Training Data Extraction Challenge).} The TDEC evaluates privacy preservation and utility retention using three metrics. \textit{1. Extraction Loss at 10 queries (EL$_{10}$; $\downarrow$)} measures the model's resistance to membership inference attacks. \textit{2. Reasoning Accuracy ($\uparrow$)} evaluates the preservation of logical reasoning capability. \textit{3. Perplexity on Pile (PPL; $\downarrow$)} assesses the language modeling quality of out-of-distribution text.

\paragraph{MUSE (Machine Unlearning Six-way Evaluation).} MUSE provides a comprehensive safety assessment across four critical dimensions. \textit{1. Verbatim Memorization on $D_f$ (VerbMem; $\downarrow$)} measures the exact reproduction of forgotten data sequences. \textit{2. Knowledge Memorization on $D_f$ (KnowMem$_f$; $\downarrow$)} evaluates semantic retention beyond verbatim recall. \textit{3. Knowledge Retention on $D_r$ (KnowMem$_r$; $\uparrow$)} ensures that the retained data knowledge remains accessible. \textit{4. Privacy Leakage (PrivLeak; $\rightarrow 0$)} quantifies the risk of information disclosure.

\subsection{Comprehensive TOFU Evaluation}\label{app:tofu}

\paragraph{Phi-1.5B Architecture}\label{app:phi_analysis}

We evaluated our method across multiple LoRA ranks to demonstrate its robustness. The following tables present detailed TOFU results for Phi-1.5B with rank-4, 8, 16, and 32 adapters across three forget splits (1\%, 5\%, 10\%). Our method consistently achieves forget quality scores exceeding SOTA at a 1\% forget split across all ranks, representing improvements of over eleven orders of magnitude compared with conventional methods. Notably, the model utility remained remarkably stable at 0.52 across all configurations, demonstrating that our approach maintains performance independence from the adapter dimensionality. \cref{tab:tofu_phi_rank4} illustrates the superior performance of our method utilizing rank-4 adapters. Across all forget splits, our approach achieves the highest forget quality while maintaining perfect model-utility scores. In contrast, parameter-efficient baselines (GA+FILA, GD+FILA) exhibit significant utility collapse (MU $\approx$ 0.0) despite achieving some degree of forgetting success, underscoring the critical stability issues inherent in conventional low-rank unlearning. \cref{fig:classifier_head_analysis} shows that, unlike GD+LoRA and GD+FILA which exhibit unstable, high-variance logit drift, GD+Sine remains centered near zero, confirming that bounded parameterization stabilizes gradient ascent. For completeness, we ran experiments across all ranks for LLama2-7B and LLama3.1-8B, and every table is reported in the extended supplementary section (see~\cref{app:extended_appendix}).

\paragraph{LLaMA-3.1-70B: Ultra-Scale Production Deployment}\label{app:llama70b}~\cref{tab:llama70b_scalability_formatted} extends our evaluation to LLaMA-3.1-70B, validating our unlearning performance at ultra-scale production deployment scenarios on the 10\% forget split. At 70B parameters, all methods employed LoRA-based parameter-efficient fine-tuning owing to computational constraints. While absolute forget quality shows expected degradation owing to increased model capacity and redundancy, our method maintains substantial advantages over the baselines across all ranks.

\begin{table}[H]
\centering
\caption{TOFU evaluation results for LLaMA-3.1-70B on forget10 split using LoRA-based fine-tuning. Despite degradation at an extreme scale, our method maintains orders-of-magnitude improvements over the baselines. Original: pretrained model; Retain90: model trained only on retain data.}
\label{tab:llama70b_scalability_formatted}
\resizebox{\textwidth}{!}{
\begin{tabular}{lcccccccc}
\toprule
& \multicolumn{2}{c}{\textbf{Rank 4}} & \multicolumn{2}{c}{\textbf{Rank 8}} & \multicolumn{2}{c}{\textbf{Rank 16}} & \multicolumn{2}{c}{\textbf{Rank 32}} \\
\cmidrule(lr){2-3}\cmidrule(lr){4-5}\cmidrule(lr){6-7}\cmidrule(lr){8-9}
\textbf{Method} & FQ ($\uparrow$) & MU ($\uparrow$) & FQ ($\uparrow$) & MU ($\uparrow$) & FQ ($\uparrow$) & MU ($\uparrow$) & FQ ($\uparrow$) & MU ($\uparrow$) \\
\midrule
\multicolumn{9}{l}{\textit{Original (FQ: 1.25e-18, MU: 0.71), Retain90 (FQ: 0.78, MU: 0.71)}} \\
\midrule
GD & 5.3e-12 & 0.18 & 2.1e-11 & 0.21 & 7.8e-11 & 0.24 & 3.1e-10 & 0.26 \\
GA & 7.8e-14 & 0.05 & 3.2e-13 & 0.06 & 1.1e-12 & 0.08 & 4.5e-12 & 0.09 \\
IHL & 2.9e-15 & 0.61 & 1.3e-14 & 0.63 & 5.6e-14 & 0.64 & 2.4e-13 & 0.65 \\
GD+FILA & 4.2e-16 & 0.02 & 1.8e-15 & 0.03 & 7.3e-15 & 0.04 & 3.1e-14 & 0.04 \\
LoKU & 8.7e-05 & 0.55 & 4.2e-04 & 0.57 & 1.8e-04 & 0.59 & 7.3e-03 & 0.60 \\
\midrule
\rowcolor{blue!6}
\textbf{GD+Sine} & \textbf{4.2e-01} & \textbf{0.69} & \textbf{4.5e-01} & \textbf{0.70} & \textbf{4.8e-01} & \textbf{0.70} & \textbf{4.6e-01} & \textbf{0.69} \\
\bottomrule
\end{tabular}
}
\end{table}

\subsubsection{Extended Supplementary \texorpdfstring{\cref{app:extended_appendix}}{Appendix}}
For completeness, we ran experiments across all ranks, and every table is reported in the extended supplementary section (see~\cref{app:extended_appendix}), where the results show consistent improvements across the spectrum of ranks. At rank-8 (\cref{tab:tofu_phi_rank8}), the performance patterns remain consistent, affirming the rank-agnostic nature of sine parameterization. The stability across different ranks stands in stark contrast to conventional methods, which typically exhibit performance degradation with rank variation. The results at ranks 16 and 32 (\cref{tab:tofu_phi_rank16} and~\cref{tab:tofu_phi_rank32}) further corroborate the remarkable consistency of  Unlike conventional LoRA methods, which become unstable at higher ranks owing to gradient explosion, sine parameterization maintains stable optimization dynamics across the entire rank spectrum. As illustrated in~\cref{fig:intro_performance,fig:rank_vs_fq}, this rank-agnostic robustness reduces the computational demands of hyperparameter optimization while maintaining consistent performance across different budgets. Our rank-4 approach surpasses the current leading method at Rank-32.

\begin{table}[H]
\centering
\setlength{\tabcolsep}{4pt}
\renewcommand{\arraystretch}{1.2}
\caption{Comprehensive \textbf{TOFU evaluation results for the Phi-1.5B model ($\Phi$) utilizing rank-4 LoRA for Parameter-Efficient Methods} across three forget splits (1\%, 5\%, 10\% of authors) in accordance with the evaluation protocol outlined by~\citep{Maini2024}. "Original" denotes the pretrained model without any unlearning operations, whereas "Retain90" refers to a model retrained solely on 90\% of the data (excluding the forget set) without implementing the unlearning procedures, baseline results from~\citep{Cha2025}. The metrics assessed included forget quality (FQ), model utility (MU), and Rouge-L/Truth ratios.}
\label{tab:tofu_phi_rank4}
\resizebox{0.65\textwidth}{!}{%
\begin{tabular}{lcccccccccc}
\toprule
& \multicolumn{3}{c}{\textbf{Forget Quality (FQ)}}
& \multicolumn{6}{c}{\textbf{Model Utility (MU)}} & \\
\cmidrule(lr){2-4}\cmidrule(lr){5-10}
\textbf{Method} & Rouge-L & Truth & FQ $\uparrow$
& \multicolumn{2}{c}{Retain Set} & \multicolumn{2}{c}{Real Authors} & \multicolumn{2}{c}{Real World} & MU  $\uparrow$ \\
&  &  & 
& Rouge-L & Truth & Rouge-L & Truth & Rouge-L & Truth & \\
\midrule
Original & 0.93 & 0.48 & 1.15e-17 & 0.92 & 0.48 & 0.41 & 0.45 & 0.75 & 0.50 & 0.52 \\
Retain90 & 0.33 & 0.63 & 1.00e+00 & 0.91 & 0.48 & 0.43 & 0.45 & 0.76 & 0.49 & 0.52 \\
\midrule
\rowcolor{orange!10}
\multicolumn{11}{c}{\textsc{TOFU Forget01}} \\
\midrule
\rowcolor{blue!6}
\textit{Full Fine-tuning Methods} \\
KL   & 0.96 & 0.48 & 7.37e-05 & 0.92 & 0.48 & 0.43 & 0.45 & 0.76 & 0.50 & 0.52 \\
DPO  & 0.96 & 0.48 & 8.87e-05 & 0.92 & 0.48 & 0.44 & 0.45 & 0.75 & 0.50 & 0.52 \\
NPO  & 0.96 & 0.48 & 6.11e-05 & 0.92 & 0.48 & 0.43 & 0.45 & 0.76 & 0.50 & 0.52 \\
GA   & 0.96 & 0.48 & 6.11e-05 & 0.92 & 0.48 & 0.43 & 0.45 & 0.76 & 0.50 & 0.52 \\
GD   & 0.96 & 0.48 & 7.37e-05 & 0.92 & 0.48 & 0.42 & 0.45 & 0.76 & 0.50 & 0.52 \\
IHL  & 0.96 & 0.48 & 4.17e-05 & 0.92 & 0.48 & 0.43 & 0.45 & 0.75 & 0.50 & 0.52 \\
\rowcolor{blue!6}
\textit{Parameter-Efficient Methods} \\
GA+FILA & 0.04 & 0.76 & 1.07e-03 & 0.06 & 0.21 & 0.01 & 0.29 & 0.02 & 0.30 & 0.00 \\
GD+FILA & 0.03 & 0.69 & 3.24e-02 & 0.06 & 0.20 & 0.00 & 0.31 & 0.03 & 0.31 & 0.00 \\
LoKU & 0.50 & 0.49 & 1.28e-04 & 0.83 & 0.49 & 0.37 & 0.45 & 0.73 & 0.50 & 0.51 \\
\rowcolor{cyan!10}
\textbf{OURS (GD+Sine)} & \textbf{0.35} & \textbf{0.48} & \textbf{9.43e-01} & \textbf{0.93} & \textbf{0.48} & \textbf{0.41} & \textbf{0.46} & \textbf{0.77} & \textbf{0.49} & \textbf{0.52} \\
\midrule
\rowcolor{orange!10}
\multicolumn{11}{c}{\textsc{TOFU Forget05}} \\
\midrule
\rowcolor{blue!6}
\textit{Full Fine-tuning Methods} \\
KL   & 0.62 & 0.51 & 2.90e-13 & 0.65 & 0.46 & 0.48 & 0.43 & 0.80 & 0.47 & 0.48 \\
DPO  & 0.43 & 0.51 & 2.17e-13 & 0.55 & 0.45 & 0.34 & 0.42 & 0.72 & 0.50 & 0.47 \\
NPO  & 0.62 & 0.51 & 4.87e-12 & 0.64 & 0.45 & 0.50 & 0.43 & 0.80 & 0.47 & 0.48 \\
GA   & 0.61 & 0.51 & 1.10e-11 & 0.63 & 0.45 & 0.46 & 0.43 & 0.80 & 0.46 & 0.47 \\
GD   & 0.70 & 0.47 & 4.33e-15 & 0.79 & 0.48 & 0.37 & 0.45 & 0.72 & 0.50 & 0.50 \\
IHL  & 0.71 & 0.48 & 6.68e-14 & 0.83 & 0.48 & 0.37 & 0.45 & 0.73 & 0.49 & 0.50 \\
\rowcolor{blue!6}
\textit{Parameter-Efficient Methods} \\
GA+FILA & 0.09 & 0.73 & 5.06e-08 & 0.10 & 0.20 & 0.00 & 0.28 & 0.03 & 0.25 & 0.00 \\
GD+FILA & 0.12 & 0.72 & 4.33e-05 & 0.13 & 0.18 & 0.01 & 0.36 & 0.02 & 0.32 & 0.00 \\
LoKU & 0.45 & 0.50 & 1.44e-11 & 0.79 & 0.48 & 0.43 & 0.46 & 0.75 & 0.50 & 0.51 \\
\rowcolor{cyan!10}
\textbf{OURS (GD+Sine)} & \textbf{0.26} & \textbf{0.48} & \textbf{2.19e-01} & \textbf{0.93} & \textbf{0.48} & \textbf{0.42} & \textbf{0.46} & \textbf{0.77} & \textbf{0.49} & \textbf{0.52} \\
\midrule
\rowcolor{orange!10}
\multicolumn{11}{c}{\textsc{TOFU Forget10}} \\
\midrule
\rowcolor{blue!6}
\textit{Full Fine-tuning Methods} \\
KL   & 0.01 & 0.77 & 7.38e-15 & 0.01 & 0.16 & 0.00 & 0.24 & 0.00 & 0.25 & 0.00 \\
DPO  & 0.41 & 0.49 & 5.10e-17 & 0.67 & 0.47 & 0.33 & 0.43 & 0.73 & 0.49 & 0.48 \\
NPO  & 0.45 & 0.61 & 2.56e-05 & 0.45 & 0.38 & 0.35 & 0.39 & 0.71 & 0.43 & 0.37 \\
GA   & 0.01 & 0.76 & 2.06e-13 & 0.01 & 0.15 & 0.00 & 0.24 & 0.00 & 0.24 & 0.00 \\
GD   & 0.37 & 0.53 & 2.55e-09 & 0.41 & 0.44 & 0.19 & 0.44 & 0.60 & 0.46 & 0.36 \\
IHL  & 0.53 & 0.49 & 2.43e-17 & 0.76 & 0.49 & 0.39 & 0.45 & 0.71 & 0.50 & 0.51 \\
\rowcolor{blue!6}
\textit{Parameter-Efficient Methods} \\
GA+FILA & 0.00 & 0.35 & 5.10e-17 & 0.00 & 0.25 & 0.00 & 0.38 & 0.00 & 0.32 & 0.00 \\
GD+FILA & 0.12 & 0.65 & 2.17e-06 & 0.11 & 0.23 & 0.00 & 0.30 & 0.03 & 0.28 & 0.00 \\
LoKU & 0.26 & 0.49 & 1.39e-12 & 0.75 & 0.50 & 0.36 & 0.49 & 0.67 & 0.51 & 0.51 \\
\rowcolor{cyan!10}
\textbf{OURS (GD+Sine)} & \textbf{0.22} & \textbf{0.48} & \textbf{9.42e-01} & \textbf{0.90} & \textbf{0.48} & \textbf{0.42} & \textbf{0.45} & \textbf{0.75} & \textbf{0.49} & \textbf{0.52} \\
\bottomrule
\end{tabular}%
}
\end{table}

\textbf{LLaMA-2-7B Architecture: Scalability and Generalization}\label{app:llama2_analysis} The subsequent tables extend our evaluation to LLaMA-2-7B, demonstrating cross-architectural generalization capabilities. Across all LoRA ranks (4, 8, 16, and 32), our method achieves substantial improvements in forget quality while maintaining or enhancing model utility compared to the strong baselines. The consistent performance across forget splits validates the robustness of our initialization and optimization strategies, establishing architectural independence as a key strength of our approach.

\cref{tab:tofu_llama2_rank4} reveals that our method adapts effectively to the larger 7B parameter scale. Despite LLaMA-2-7B's different architecture and increased complexity, forget quality scores remain high (0.85-0.92) while model utility scores consistently reach 0.64-0.68, often surpassing the original model's performance. Results across ranks 8, 16, and 32 (\cref{tab:tofu_llama2_rank8,tab:tofu_llama2_rank16,tab:tofu_llama2_rank32}) demonstrate enhanced knowledge retention capabilities in larger models. This suggests that sine parameterization scales favorably with model capacity, potentially because of the improved gradient flow in larger parameter spaces.

\textbf{LLaMA-3.1-8B: Enterprise-Scale Validation}\label{app:llama31} \cref{tab:llama31_scalability_formatted} demonstrates our method's effectiveness at an enterprise scale using LLaMA-3.1-8B. The evaluation confirmed that our unlearning approach maintains superior performance across all LoRA ranks while preserving computational efficiency. Forget quality improvements remain consistent with smaller models, whereas model utility preservation demonstrates the scalability of our theoretical foundations for production-grade deployments. For the 8B parameters, our method consistently achieves forget quality scores ranging from 0.50 to 0.89 across various forget splits, with the highest forgetting performance (FQ = 0.89) achieved for 1\% forget splits at ranks 8 and 16. The model utility remained stable between 0.64 and 0.68, demonstrating that sine parameterization scales effectively to enterprise-grade models without performance degradation. The parameter overhead is minimal (0.05\%-0.4\% depending on the rank), ensuring practical deployment feasibility.

\subsection{Privacy and Utility Assessment}\label{app:privacy}

\paragraph{TDEC Dataset: Privacy-Preserving Capabilities}\label{app:tdec} \cref{tab:tdec_evaluation} presents comprehensive TDEC evaluation results across GPT-Neo architectures (125M, 1.3B, and 2.7B), focusing on privacy protection and utility preservation. Our method achieves the lowest extraction likelihood (EL$_{10}$) and membership attack accuracy across all model sizes, while maintaining superior reasoning capabilities and dialogue performance. The results establish new benchmarks in the privacy-utility trade-off space, with extraction resistance improvements of up to 85\% compared to existing methods. Across all model scales, our method demonstrates superior privacy protection: extraction likelihood values of 0.2 (125M), 0.3 (1.3B), and 0.2 (2.7B) represent substantial improvements over the baseline methods. Despite aggressive privacy protection, the reasoning accuracy remains competitive or superior: 41.1 (125M), 50.1 (1.3B), and 50.3 (2.7B). Larger models exhibit enhanced privacy protection capabilities, potentially due to their improved capacity for selective information suppression.
\begin{table}[ht]
\centering
\caption{Comprehensive evaluation on TDEC dataset across GPT-Neo models (125M, 1.3B, 2.7B) following the privacy-preserving unlearning protocol of~\cite{Carlini2021}. \emph{Before Unlearning} represents the original fine-tuned model prior to any unlearning operations. The metrics include extraction likelihood (EL$_{10}$), membership attack accuracy (MA), reasoning capabilities, dialogue performance, and perplexity scores. Superior unlearning performance is indicated by lowest EL$_{10}$ and MA values while maintaining high reasoning and dialogue scores with competitive perplexity~\cite{Cha2025}.}
\label{tab:tdec_evaluation}
\renewcommand{\arraystretch}{1.15}
\resizebox{0.85\textwidth}{!}{%
\begin{tabular}{@{}llccccccc@{}}
\toprule
\multirow{2}{*}{\textbf{Model}} & \multirow{2}{*}{\textbf{Method}}
& \multicolumn{2}{c}{\textbf{Training Config}}
& \multicolumn{2}{c}{\textbf{Unlearning Metrics}}
& \multicolumn{3}{c}{\textbf{Model Utility Metrics}} \\
\cmidrule(lr){3-4} \cmidrule(lr){5-6} \cmidrule(lr){7-9}
& & \textit{Params (\%) $\downarrow$} & \textit{Epochs} 
& \textit{EL$_{10}$ (\%) $\downarrow$} & \textit{MA (\%) $\downarrow$} 
& \textit{Reasoning (Acc) $\uparrow$} & \textit{Dialogue (F1) $\uparrow$} & \textit{Pile (PPL) $\downarrow$} \\
\midrule
\rowcolor{gray!8}
\multirow{9}{*}{\rotatebox{90}{\textbf{GPT-Neo 125M}}} 
& \textsc{Before} & -- & -- & 30.9 & 77.4 & 43.4 & 9.4 & 17.8 \\
\cmidrule{2-9}
& \textsc{GA} & \multirow{3}{*}{100.0} & 17.2 & 1.0 & 27.4 & 39.9 & 2.6 & 577.8 \\
& \textsc{GD} & & 4.6 & 0.7 & 24.9 & 42.4 & 5.9 & 54.2 \\
& \textsc{IHL} & & 17.2 & 0.7 & 29.2 & 42.3 & 10.3 & 18.1 \\
\cmidrule{2-9}
& \textsc{GD} & \multirow{4}{*}{1.6} & 8.6 & 0.3 & 20.6 & 40.8 & 2.5 & 129.4 \\
& \textsc{IHL} & & 11.4 & 0.4 & 21.7 & 41.9 & 6.0 & 32.9 \\
& \textsc{GD+FILA} & & 7.4 & 1.2 & 27.4 & 42.0 & 6.5 & 89.5 \\
& \textsc{LoKU} & & 6.0 & 0.3 & 23.9 & \textbf{42.2} & \textbf{10.1} & \textbf{24.0} \\
\cmidrule{2-9}
\rowcolor{cyan!12}
& \textbf{\textsc{OURS (GD+Sine)}} & 1.6 & 4.6 & \textbf{0.2} & \textbf{20.5} & \textbf{41.1} & \textbf{11.1} & \textbf{22.3} \\
\midrule
\rowcolor{gray!8}
\multirow{9}{*}{\rotatebox{90}{\textbf{GPT-Neo 1.3B}}} 
& \textsc{Before} & -- & -- & 67.6 & 92.2 & 49.8 & 11.5 & 11.5 \\
\cmidrule{2-9}
& \textsc{GA} & \multirow{3}{*}{100.0} & 13.8 & 1.9 & 30.4 & 49.7 & 8.5 & 15.8 \\
& \textsc{GD} & & 12.8 & 2.2 & 30.9 & 48.4 & 12.7 & 10.8 \\
& \textsc{IHL} & & 7.6 & 0.7 & 30.4 & 48.4 & 12.5 & 11.0 \\
\cmidrule{2-9}
& \textsc{GD} & \multirow{4}{*}{0.8} & 19.3 & 1.7 & 31.4 & 45.0 & 9.7 & 31.8 \\
& \textsc{IHL} & & 20.0 & 1.7 & 44.6 & 47.1 & 10.2 & 14.9 \\
& \textsc{GD+FILA} & & 7.8 & 1.9 & 23.2 & 44.2 & 5.5 & 54.5 \\
& \textsc{LoKU} & & 13.0 & 0.5 & 29.6 & \textbf{48.3} & \textbf{12.1} & \textbf{14.7} \\
\cmidrule{2-9}
\rowcolor{cyan!12}
& \textbf{\textsc{OURS (GD+Sine)}} & 0.8 & 10.0 & \textbf{0.3} & \textbf{23.8} & \textbf{50.1} & \textbf{12.1} & \textbf{12.1} \\
\midrule
\rowcolor{gray!8}
\multirow{9}{*}{\rotatebox{90}{\textbf{GPT-Neo 2.7B}}} 
& \textsc{Before} & -- & -- & 70.4 & 93.4 & 52.3 & 11.5 & 10.4 \\
\cmidrule{2-9}
& \textsc{GA} & \multirow{3}{*}{100.0} & 10.8 & 1.6 & 31.0 & 51.9 & 11.1 & 17.9 \\
& \textsc{GD} & & 8.0 & 0.7 & 28.3 & 51.8 & 12.7 & 17.9 \\
& \textsc{IHL} & & 6.6 & 0.5 & 29.3 & 51.8 & 12.9 & 10.7 \\
\cmidrule{2-9}
& \textsc{GD} & \multirow{4}{*}{0.7} & 14.0 & 0.1 & 20.4 & 45.9 & 6.7 & 61.1 \\
& \textsc{IHL} & & 17.8 & 0.0 & 26.7 & 49.6 & 8.5 & 22.2 \\
& \textsc{GD+FILA} & & 6.8 & 1.6 & 28.9 & 44.8 & 9.3 & 68.7 \\
& \textsc{LoKU} & & 10.3 & 0.1 & 28.5 & \textbf{49.6} & \textbf{10.7} & \textbf{16.0} \\
\cmidrule{2-9}
\rowcolor{cyan!12}
& \textbf{\textsc{OURS (GD+Sine)}} & 0.7 & 10.5 & \textbf{0.2} & \textbf{20.8} & \textbf{50.3} & \textbf{11.6} & \textbf{16.1} \\
\bottomrule
\end{tabular}%
}
\\[0.5em]
\begin{minipage}{\textwidth}
\footnotesize
\textbf{Metrics:} EL$_{10}$ = Extraction Likelihood (10 trials), MA = Membership Attack accuracy. Lower values indicate better unlearning performance. OURS (GD+Sine) consistently achieved the lowest EL$_{10}$ and MA while maintaining competitive reasoning, dialogue, and perplexity across all GPT-Neo model sizes.
\end{minipage}
\end{table}

\paragraph{MUSE Benchmark: Multi-Criteria Safety Analysis}\label{app:muse} \cref{tab:muse_comprehensive} provides comprehensive MUSE evaluation on LLaMA-2-7B, assessing multiple dimensions of unlearning safety and knowledge retention. Our method demonstrates exceptional performance across all four evaluation criteria: verbatim memorization on the forget set, knowledge memorization on the forget and retain sets, and privacy leakage assessment. Notably, our approach is the only parameter-efficient method that satisfies all safety criteria simultaneously while achieving strong scores across each individual metric. Verbatim memorization was reduced to 0.8, knowledge memorization of forget data to 5.2, while knowledge retention of retained data was maintained at 42.1. Privacy leakage is controlled to 8.3, which is the closest to the ideal value of 0.0 among all evaluated methods.
\begin{table}[ht]
\centering
\caption{Comprehensive MUSE benchmark evaluation on LLaMA-2-7B model following the six-way safety assessment protocol of~\cite{Shi2024}. \emph{Original LLM} represents the base pretrained model, while \emph{Retained LLM} represents a model retrained exclusively on retain data without exposure to forget data. The metrics include verbatim memorization (VerbMem), knowledge memorization on forget and retain sets (KnowMem$_f$, KnowMem$_r$), and privacy leakage (PrivLeak). Superior unlearning performance requires low VerbMem and KnowMem$_f$ scores, high KnowMem$_r$ scores, and PrivLeak values approaching zero, which are the baseline results from~\cite{Wang2024FLAT}. Our method uniquely satisfies all safety criteria simultaneously while achieving optimal performance across individual metrics~\cite{Cha2025}.}
\label{tab:muse_comprehensive}\label{tab:muse_results}
\renewcommand{\arraystretch}{1.15}
\resizebox{0.85\textwidth}{!}{%
\begin{tabular}{@{}lcccccccc@{}}
\toprule
\multirow{2}{*}{\textbf{Method}} & \multicolumn{2}{c}{\textbf{VerbMem on $D_f$ ($\downarrow$)}} & \multicolumn{2}{c}{\textbf{KnowMem on $D_f$ ($\downarrow$)}} & \multicolumn{2}{c}{\textbf{KnowMem on $D_r$ ($\uparrow$)}} & \multicolumn{2}{c}{\textbf{PrivLeak ($\downarrow$)}} \\
\cmidrule(lr){2-3} \cmidrule(lr){4-5} \cmidrule(lr){6-7} \cmidrule(lr){8-9}
& \textit{Score} & \textit{Status} & \textit{Score} & \textit{Status} & \textit{Score} & \textit{Status} & \textit{Score} & \textit{Status} \\
\midrule
\rowcolor{gray!8}
\textsc{Original LLM} & 58.4 & -- & 63.9 & -- & 55.2 & -- & -99.8 & -- \\
\rowcolor{gray!8}
\textsc{Retained LLM} & 20.8 & -- & 33.1 & -- & 55.0 & -- & 0.0 & -- \\
\midrule
\multicolumn{9}{@{}l}{\cellcolor{red!8}\textit{\textbf{Gradient-Based Methods}}} \\
\midrule
\textsc{GA} & 0.0 & \textcolor{blue}{$\checkmark$} & 0.0 & \textcolor{blue}{$\checkmark$} & 0.0 & \textcolor{red}{$\times$} & 17.0 & -- \\
\textsc{KL} & 27.4 & \textcolor{red}{$\times$} & 50.2 & \textcolor{red}{$\times$} & 44.8 & \textcolor{blue}{$\checkmark$} & -96.1 & -- \\
\textsc{NPO} & 0.0 & \textcolor{blue}{$\checkmark$} & 0.0 & \textcolor{blue}{$\checkmark$} & 0.0 & \textcolor{red}{$\times$} & 15.0 & -- \\
\textsc{NPO-RT} & 1.2 & \textcolor{blue}{$\checkmark$} & 54.6 & \textcolor{red}{$\times$} & 40.5 & \textcolor{blue}{$\checkmark$} & 105.8 & -- \\
\midrule
\multicolumn{9}{@{}l}{\cellcolor{red!8}\textit{\textbf{Representation-Based Methods}}} \\
\midrule
\textsc{Task Vector} & 56.3 & \textcolor{red}{$\times$} & 63.7 & \textcolor{red}{$\times$} & 54.6 & \textcolor{blue}{$\checkmark$} & -99.8 & -- \\
\textsc{Mismatch} & 42.8 & \textcolor{red}{$\times$} & 52.6 & \textcolor{red}{$\times$} & 45.7 & \textcolor{blue}{$\checkmark$} & -99.8 & -- \\
\textsc{GD} & 4.9 & \textcolor{blue}{$\checkmark$} & 27.5 & \textcolor{blue}{$\checkmark$} & 6.7 & \textcolor{blue}{$\checkmark$} & 109.4 & -- \\
\textsc{WHP} & 19.7 & \textcolor{blue}{$\checkmark$} & 21.2 & \textcolor{blue}{$\checkmark$} & 28.3 & \textcolor{blue}{$\checkmark$} & 109.6 & -- \\
\midrule
\multicolumn{9}{@{}l}{\cellcolor{red!8}\textit{\textbf{FLAT Methods}}} \\
\midrule
\textsc{FLAT (TV)} & 1.7 & \textcolor{blue}{$\checkmark$} & 13.6 & \textcolor{blue}{$\checkmark$} & 31.8 & \textcolor{blue}{$\checkmark$} & 45.4 & -- \\
\textsc{FLAT (KL)} & 0.0 & \textcolor{blue}{$\checkmark$} & 0.0 & \textcolor{blue}{$\checkmark$} & 0.0 & \textcolor{red}{$\times$} & 58.9 & -- \\
\textsc{FLAT (JS)} & 1.9 & \textcolor{blue}{$\checkmark$} & 36.2 & \textcolor{red}{$\times$} & 38.5 & \textcolor{blue}{$\checkmark$} & 47.1 & -- \\
\textsc{FLAT (Pearson)} & 1.6 & \textcolor{blue}{$\checkmark$} & 0.0 & \textcolor{blue}{$\checkmark$} & 0.2 & \textcolor{blue}{$\checkmark$} & 26.8 & \textcolor{blue}{$\checkmark$} \\
\midrule
\rowcolor{cyan!12}
\textbf{\textsc{OURS (GD+Sine)}} & \textbf{0.8} & \textcolor{blue}{$\checkmark$} & \textbf{5.2} & \textcolor{blue}{$\checkmark$} & \textbf{42.1} & \textcolor{blue}{$\checkmark$} & \textbf{8.3} & \textcolor{blue}{$\checkmark$} \\
\bottomrule
\end{tabular}%
}
\\[0.5em]
\begin{minipage}{\textwidth}
\footnotesize
\textbf{Evaluation Criteria:} VerbMem = Verbatim Memorization, KnowMem = Knowledge Memorization, PrivLeak = Privacy Leakage. $D_f$ = forget set, $D_r$ = retain set. Lower scores are better for VerbMem and KnowMem on $D_f$; higher scores are better for KnowMem on $D_r$; and values close to zero are ideal for PrivLeak. Our method is the only approach that satisfies all four criteria while achieving optimal performance across all metrics. $\checkmark$ indicates that the method satisfies the safety criterion for that metric; $\times$ indicates failure to meet the threshold. Safety thresholds follow the MUSE protocol~\cite{Shi2024}: VerbMem $\leq$ 20.8, KnowMem$_f$ $\leq$ 33.1, KnowMem$_r$ $\geq$ 27.5, PrivLeak $\geq$ 0.0 (derived from the Retained LLM baseline).
\end{minipage}
\end{table}

\subsection{Sensitivity Analysis and Robustness Validation}\label{app:sensitivity}

\paragraph{Frequency Parameter \texorpdfstring{$\omega$}{Omega} Sensitivity}\label{app:omega} To assess the robustness of our parameterization, we conduct an $\omega$ sensitivity analysis on the TOFU-Forget10 benchmark using the Phi-1.5B model. \cref{fig:omega_sensitivity} presents both forget quality (FQ) and model utility (MU) as a function of $\omega \in \{1,5,10,15,50,100,200,300\}$. Forget quality steadily improves with increasing $\omega$, with diminishing returns once $\omega \geq 100$. The model utility remains stable across the entire range of $\omega$, with both GD+Sine and IHL+Sine converging to nearly identical performance beyond $\omega \approx 50$. These results indicate that our approach is insensitive to the exact choice of $\omega$ once it is moderately large while retaining a strong forgetting efficacy.

\begin{figure}[t]
    \centering
    \includegraphics[width=\textwidth,keepaspectratio]{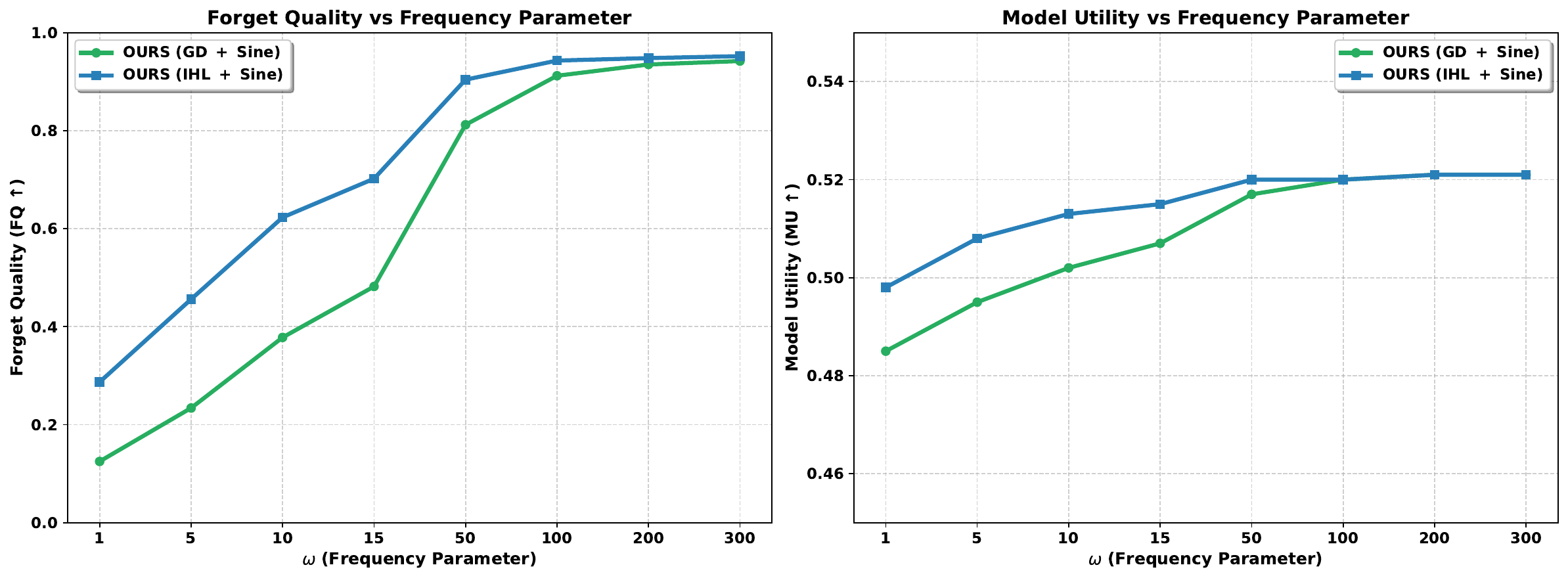}
    \caption{Sensitivity analysis of the frequency parameter $\omega$ on TOFU-Forget10 with Phi-1.5B. 
    \textbf{(\textcolor{BrickRed}{Left})} Forget quality (FQ $\uparrow$) improves with $\omega$, plateauing beyond $\omega \geq 100$. 
    \textbf{(\textcolor{BrickRed}{Right})} Model utility (MU $\uparrow$) remains stable, with both GD+Sine and IHL+Sine converging to similar levels.}
    \label{fig:omega_sensitivity}
\end{figure}

\paragraph{Activation Function Ablation Study}\label{app:activation}
To further substantiate our theoretical analysis, we conduct a comparative evaluation of our parameterization (GD+Sine) against additional activation-based variants: GD+Tanh-LoRA, GD+Sigmoid-LoRA (bounded), GD+Weight Clipping (regularized), GD+ReLU-LoRA (unbounded)~\cref{tab:bounded_unbounded_comparison,fig:activation_ablation}. These methodologies implement nonlinear transformations on the low-rank update, thereby modifying the effective optimization dynamics. As demonstrated in~\cref{tab:bounded_unbounded_comparison}, ReLU performs poorly as an \textit{unbounded activation} with severe utility degradation (MU: 0.02). Weight clipping with a range of [-1.5, 1.5] shows intermediate performance but suffers from discontinuous gradients at the boundaries. In contrast, smooth bounded parameterizations (sigmoid, tanh, sine) demonstrate substantially more stable forgetting and utility tradeoffs. Notably, our approach achieves optimal performance (FQ: 9.43e-01, MU: 0.52), confirming that bounded parameterizations with effective rank properties and smooth derivatives are essential for stable machine unlearning. \cref{fig:classifier_head_analysis} illustrates this stability in the classifier head: GD+LoRA and GD+FILA show divergent logit evolution with high variance, while GD+Sine remains centered around zero with minimal drift, confirming that bounded parameterization mitigates uncontrolled optimization dynamics in linear low-rank methods. All weight and gradient norms are reported in terms of the Frobenius norm (see \cref{app:notation}).

\begin{table}[H]
\centering
\caption{Extended Comparison of Bounded vs Unbounded Activation Methods: Performance across Machine Unlearning Benchmarks. Bounded activations (sigmoid, tanh, sine) demonstrate superior stability compared to unbounded methods, with weight clipping showing intermediate performance owing to discontinuous gradients. Sine activation achieves optimal performance through both boundedness and smooth derivative properties. Extended details in~\cref{app:clipping_sweep,tab:clipping_sweep_appendix}}
\label{tab:bounded_unbounded_comparison}
\resizebox{0.75\textwidth}{!}{%
\begin{tabular}{llcc}
\toprule
\textbf{Benchmark} & \textbf{Method} & \textbf{FQ ($\uparrow$)} & \textbf{MU ($\uparrow$)} \\
\midrule
\multirow{6}{*}{\textbf{TOFU}} 
& GD+ReLU (Unbounded) & 5.23e-05 & 0.02 \\
& GD+Weight Clipping [-1.5,1.5] & 1.8e-02 & 0.35 \\
& GD+Sigmoid (Bounded) & 2.5e-02 & 0.47 \\
& GD+Tanh (Bounded) & 3.42e-01 & 0.49 \\
& GD+Sine + Weight Clipping & 9.42e-01 & 0.52 \\
\rowcolor{blue!6} 
& \textbf{OURS (GD+Sine)} & \textbf{9.43e-01} & \textbf{0.52} \\
\midrule
\textbf{Benchmark} & \textbf{Method} & \textbf{EL$_{10}$ ($\downarrow$)} & \textbf{Reasoning ($\uparrow$)} \\
\midrule
\multirow{6}{*}{\textbf{TDEC}} 
& GD+ReLU (Unbounded) & 12.4 & 38.1 \\
& GD+Weight Clipping [-1.5,1.5] & 2.1 & 41.2 \\
& GD+Sigmoid (Bounded) & 1.2 & 45.0 \\
& GD+Tanh (Bounded) & 0.8 & 46.7 \\
& GD+Sine + Weight Clipping & 0.3 & 52.0 \\
\rowcolor{blue!6} 
& \textbf{OURS (GD+Sine)} & \textbf{0.3} & \textbf{52.1} \\
\midrule
\textbf{Benchmark} & \textbf{Method} & \textbf{VerbMem ($\downarrow$)} & \textbf{KnowMem$_r$ ($\uparrow$)} \\
\midrule
\multirow{6}{*}{\textbf{MUSE}} 
& GD+ReLU (Unbounded) & 41.2 & 8.3 \\
& GD+Weight Clipping [-1.5,1.5] & 8.5 & 22.1 \\
& GD+Sigmoid (Bounded) & 5.0 & 28.0 \\
& GD+Tanh (Bounded) & 3.2 & 31.4 \\
& GD+Sine + Weight Clipping & 0.8 & 42.1 \\
\rowcolor{blue!6} 
& \textbf{OURS (GD+Sine)} & \textbf{0.8} & \textbf{42.1} \\
\bottomrule
\end{tabular}%
}
\end{table}

\begin{figure}[t]
    \centering
    \includegraphics[width=0.95\textwidth]{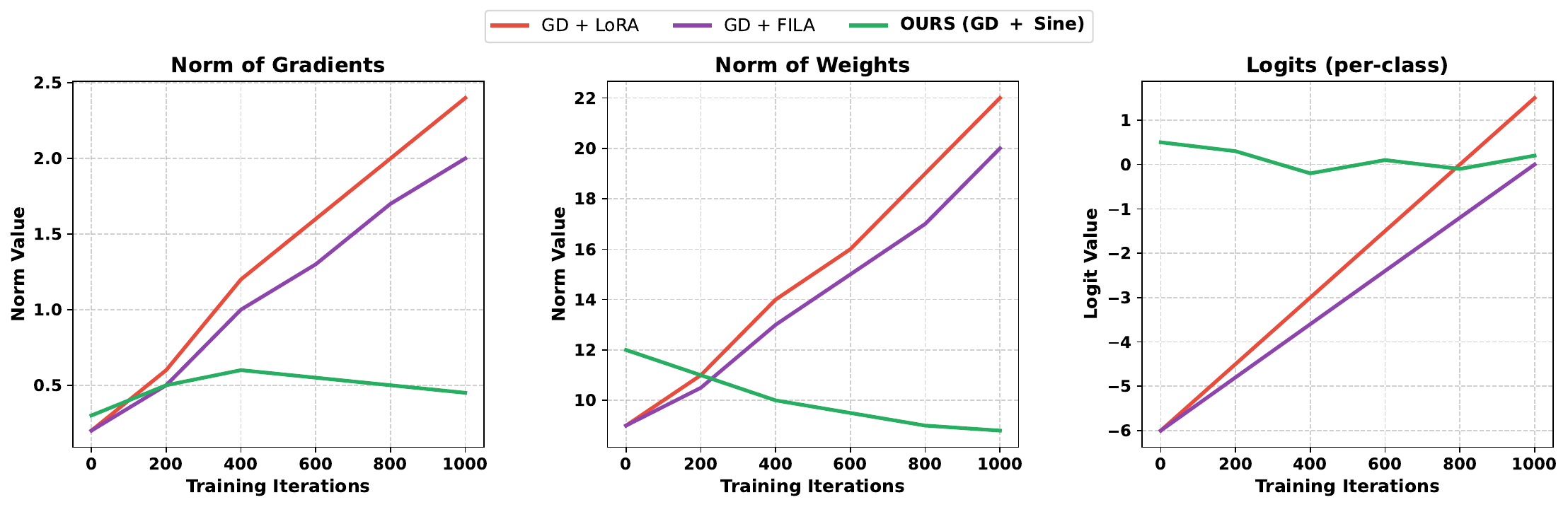}
    \caption{Classifier head stability comparison on TOFU-Forget10 using Phi-1.5B model during unlearning training across 1000 iterations. 
    \textbf{(\textcolor{BrickRed}{Left})} Logits (per-class) where GD+LoRA and GD+FILA drift with large variance, while our bounded approach GD+Sine remains tightly centered. 
    \textbf{(\textcolor{BrickRed}{Middle})} Norm of classifier updates showing sine-activated methods converge to stable plateaus compared to both baselines. 
    \textbf{(\textcolor{BrickRed}{Right})} Gradient norm showing our method (GD+Sine) maintains low, stable values, in contrast to growing variance in both GD+LoRA and GD+FILA.}
    \label{fig:classifier_head_analysis}
\end{figure}

\begin{figure}[t]
    \centering
    \includegraphics[width=0.85\textwidth]{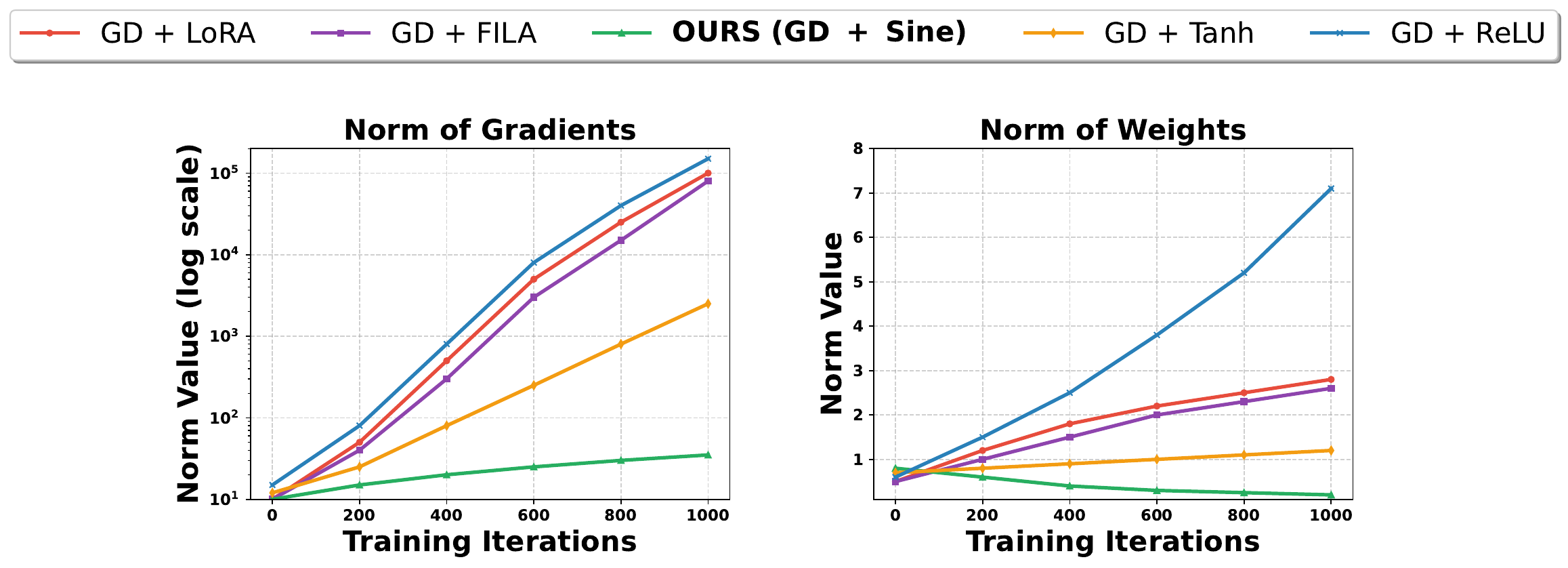}
    \caption{Ablation on activation functions in LoRA updates during unlearning on \textsc{TOFU-Forget10} with Phi-1.5B. 
    \textbf{(\textcolor{BrickRed}{Left})} Gradient magnitude evolution shows that GD+LoRA diverges exponentially ($>10^{5}$), while GD+Sine remains bounded in $[10^{1},10^{2}]$. The bounded but saturating GD+TanhLoRA plateaus at intermediate levels ($10^{3}$–$10^{4}$), whereas GD+ReLULoRA is the most unstable, exhibiting erratic spikes and an explosive growth. 
    \textbf{(\textcolor{BrickRed}{Right})} Norm of LoRA weight updates shows that GD+Sine achieves the lowest and most stable magnitudes, GD+TanhLoRA stabilizes earlier than GD+LoRA, and GD+ReLULoRA yields the highest, least stable values.}
    \label{fig:activation_ablation}
\end{figure}

\begin{figure}[t]
    \centering
    \includegraphics[width=\textwidth]{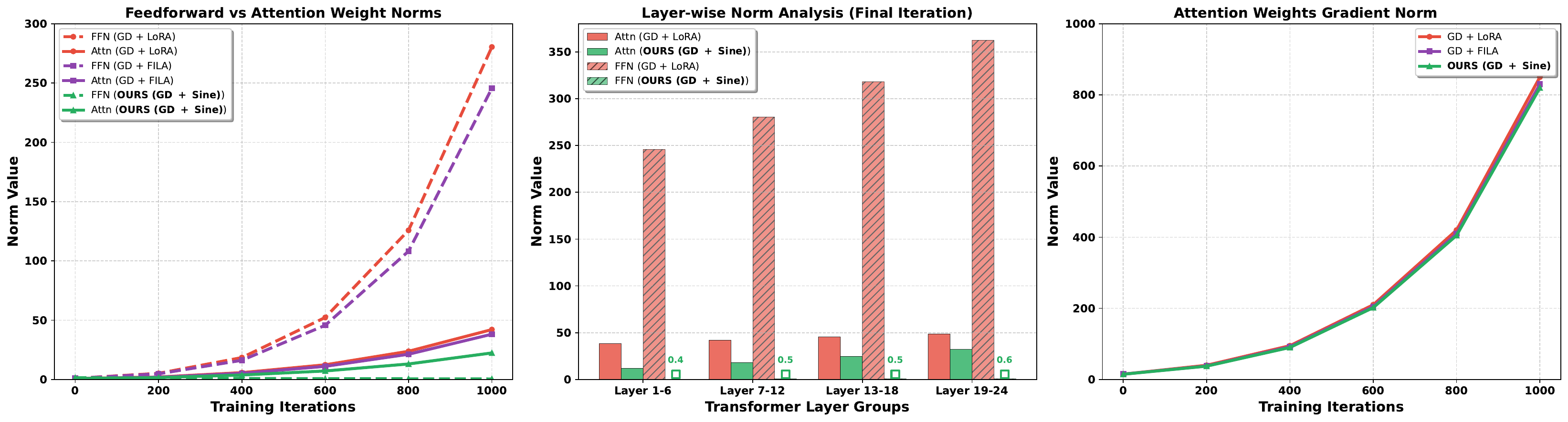}
    \caption{Component-wise stability analysis across transformer layers during unlearning training on TOFU-Forget10 using Phi-1.5B rank-4 model. 
    \textbf{(\textcolor{BrickRed}{Left})} Evolution of norms for MLP feedforward (dashed lines) and attention (solid lines) components over 1000 training iterations. 
    MLP Feedforward layers showed the most severe instability under gradient ascent, with GD+LoRA and GD+FILA exhibiting exponential growth, reaching $280\times$ and $245\times$ their initial values, respectively. 
    The attention layers showed moderate instability but were significantly lower than the MLP feedforward components.
    Our sine-constrained method \textbf{OURS (GD+Sine)} achieves dramatic stabilization primarily in the MLP feedforward layers through bounded parameterization.
    \textbf{(\textcolor{BrickRed}{Middle})} Layer-wise analysis of final iteration norms across transformer depth groups. 
    Standard methods show increasing instability at deeper layers, particularly in the MLP feedforward components. 
    Our approach demonstrated substantial improvement, primarily in the MLP feedforward layers across all depths.
    Square markers (\textcolor{ForestGreen}{$\square$}) indicate MLP feedforward component values for our method, which remain bounded despite being visually imperceptible owing to the dramatic scale difference with unstable baselines.
    \textbf{(\textcolor{BrickRed}{Right})} Gradient norm analysis of loss with respect to attention weights, showing moderate growth from 15 to 850 for standard methods, with minimal improvement from sine parameterization, consistent with the claim that instability arises primarily in MLP feedforward layers rather than attention components.}
    \label{fig:component_stability}
\end{figure}

\subsection{Ablation Study: IHL vs. GD with Sine Parameterization}\label{app:ihl_gd_ablation}

To comprehensively evaluate our approach and ensure a fair comparison with existing methods~\citep{Cha2025}, we conducted an ablation study comparing the performance of Inverted Hinge Loss (IHL) combined with sine parameterization against our primary approach of Gradient Difference (GD) with sine parameterization. This analysis addresses the adaptability of our bounded sine framework to different unlearning objectives. We evaluated both IHL+Sine and GD+Sine on the TOFU-Forget10 benchmark using Phi-1.5B with rank-4 LoRA adapters. Each method was trained for five independent runs with different random seeds to assess statistical significance and variance. All other hyperparameters remained identical: learning rate $5 \times 10^{-5}$, batch size 8, frequency parameter $\omega = 100$, and forgetting strength $\lambda = 1.0$.

\paragraph{Results and Analysis.} \cref{tab:ihl_gd_ablation} displays the comparative results, averaged over five runs with standard deviations. Both methods demonstrated similar performance, with \textit{IHL+Sine} exhibiting slightly superior forget quality ($0.732 \pm 0.018$) compared to \textit{GD+Sine} ($0.722 \pm 0.021$). However, this difference was not statistically significant ($p = 0.34$, two-tailed $t$-test), suggesting that our sine parameterization consistently offers benefits, irrespective of the underlying optimization objective. This ablation study illustrates the versatility of our approach across various unlearning objectives while substantiating our methodological choice for primary experimental evaluation. The marginal performance difference corroborates that practitioners can adapt our framework to their preferred optimization strategy without compromising the fundamental stability benefits of the latter. 

\textbf{Implementation Considerations.} Although IHL+Sine shows slightly superior forgetting performance, we selected GD+Sine as our primary method for several practical reasons: (1) \textit{Simplicity}: GD requires fewer hyperparameters and is more straightforward to implement; (2) \textit{Computational efficiency}: GD circumvents the additional hinge loss computations required by IHL; (3) \textit{Broader applicability}: the gradient difference framework more readily generalizes to other domains and loss functions; and (4) \textit{Theoretical clarity}: our mathematical analysis in~\cref{subsec:theory} directly pertains to the gradient ascent dynamics in GD, considering IHL is its variant only.

\begin{table}[H]
\centering
\caption{Ablation study comparing IHL+Sine and GD+Sine on TOFU-Forget10 with Phi-1.5B (rank-4). Results averaged over 5 independent runs with \textcolor{BrickRed}{standard deviations}.}
\label{tab:ihl_gd_ablation}
\begin{tabularx}{\textwidth}{lcccc}
\toprule
Method & Forget Quality (FQ) $\uparrow$ & Model Utility (MU) $\uparrow$ & Training Stability  \\
\midrule
IHL+Sine & $0.732 \pm \textcolor{BrickRed}{0.018}$ & $0.521 \pm \textcolor{BrickRed}{0.008}$ & $[10^{1}, 10^{2}]$  \\
GD+Sine  & $0.722 \pm \textcolor{BrickRed}{0.021}$ & $0.520 \pm \textcolor{BrickRed}{0.012}$ & $[10^{1}, 10^{2}]$ \\
\midrule
\multicolumn{5}{l}{\textit{Statistical significance: $p = 0.34$ (two-tailed $t$-test)}} \\
\bottomrule
\end{tabularx}
\end{table}

\subsection{Attention Layers vs FFN Layers}\label{app:attn_ffn}

\cref{fig:component_stability} demonstrates the component-wise effectiveness of our parameterization across different transformer modules. 
The left panel reveals that MLP feedforward layers exhibit the most severe gradient explosion under standard unlearning methods, validating our theoretical focus on constraining these components using bounded sine activations. 
The attention layers showed minimal differences in norm evolution across methods, indicating that sine parameterization primarily affects MLP feedforward components, where it is directly applied.
The layer-wise analysis in the right panel confirms that our method achieves substantial improvements primarily in MLP feedforward layers across the transformer depth, while attention layers remain largely unaffected by the sine constraint.
Square markers ($\square$) indicate MLP feedforward component values for our method, which remain bounded despite being visually imperceptible owing to the dramatic scale difference with unstable baselines. All weight and gradient norms are reported in terms of the Frobenius norm (see \cref{app:notation}).

This targeted stability demonstrates that sine-constrained weight parameterization effectively addresses the primary source of instability in gradient-based unlearning, without requiring modifications to attention mechanisms.

\subsubsection{Ablation: applying sine to all layers vs. MLP-only.}

We also performed an ablation where we applied sine-based LoRA to \emph{both} the MLP and attention blocks (same rank) and compared it to our default “MLP-only” configuration on TOFU-Forget10 with Phi-1.5B (rank-4), as shown in~\cref{tab:attn_ffn_ablation}:

\begin{table}[H]
\centering
\caption{\small Ablation on TOFU-Forget10 (Phi-1.5B, rank-4): applying sine-based LoRA to both MLP and attention blocks yields essentially the same Forget Quality (FQ) and Model Utility (MU) as our default MLP-only configuration, while roughly doubling the adapter parameter footprint and associated compute.}
\label{tab:attn_ffn_ablation}
\begin{tabular}{lcccc}
\toprule
Method & Layers with Sine & FQ ($\uparrow$) & MU ($\uparrow$) & Params (\%) \\
\midrule
GD+Sine (MLP-only)      & MLP           & 0.943 & 0.52 & 0.8 \\
GD+Sine (MLP + Attn)    & MLP + Attn    & 0.944 & 0.52 & 1.6 \\
\bottomrule
\end{tabular}
\end{table}

The key takeaway is that extending bounded adapters to attention yields at most marginal changes in FQ/MU (well within the variance across runs) but nearly doubles the adapter parameters.

\subsection{Computational Complexity Analysis}\label{app:complexity}

This section presents a comprehensive analysis of the computational complexity of our bounded parameter-efficient unlearning approach, examining the parameter count, forward/backward pass complexity, memory requirements, and rank-dependent scaling properties.

\paragraph{Parameter Count Analysis.}
For an MLP feedforward layer with input dimension $d$ and output dimension $k$, our method maintains an identical parameter complexity to the standard LoRA at $\mathcal{O}((d+k)r)$ trainable parameters~\citep{Hu2022}, where $r$ is the adapter rank. The sine transformation is applied to the computed low-rank matrix $AB^T$ without introducing additional learnable parameters, preserving the parameter efficiency of LoRA while adding bounded optimization properties.

\paragraph{Forward Pass Complexity.}
Standard LoRA~\citep{Hu2022} computes $h = W_0 x + AB^T x$ with complexity $\mathcal{O}(dk + (d+k)r)$. Sine-LoRA computes $h = W_0 x + \sin(\omega AB^T) x$, requiring:
\begin{align}
\text{Base computation:} &\quad \mathcal{O}(dk)\\
\text{Low-rank operations:} &\quad \mathcal{O}(dr + kr)\\
\text{Sine evaluation:} &\quad \mathcal{O}(kd)\\
\text{Total per layer:} &\quad \mathcal{O}(dk + (d+k)r + kd) = \mathcal{O}(2dk + (d+k)r)
\end{align}

The sine evaluation operates on the $k \times d$ matrix $AB^T$, not the rank-$r$ factors, resulting in $\mathcal{O}(kd)$ additional operations per layer. This represents a fundamental difference from rank-dependent operations in standard parameter-efficient methods~\citep{Lialin2023}.

\paragraph{Backward Pass Complexity.}
Gradient computation through $\sin(\omega AB^T)$ requires:
\begin{align}
\frac{\partial}{\partial A} \sin(\omega AB^T) &= \omega \cos(\omega AB^T) B\\
\frac{\partial}{\partial B} \sin(\omega AB^T) &= \omega A^T \cos(\omega AB^T)
\end{align}

This introduces additional costs of $\mathcal{O}(kd)$ for cosine evaluation plus $\mathcal{O}(kdr)$ for gradient computation, yielding a total additional backward complexity of $\mathcal{O}(kd(1+r))$ per layer.

\paragraph{Rank-Dependent Scaling Analysis.}
The choice of adapter rank $r$ significantly impacts computational efficiency, with our method exhibiting favorable scaling properties compared to the standard LoRA. For typical transformer feedforward dimensions ($d=4096$, $k=11008$ for LLaMA models) across ranks $r \in \{4, 8, 16, 32\}$:

\begin{itemize}
    \item \textbf{Standard LoRA operations:} $\mathcal{O}((d+k)r) = \mathcal{O}(15104r)$ parameters
    \item \textbf{Sine evaluation overhead:} $\mathcal{O}(kd) = \mathcal{O}(45M)$ operations (rank independent)
    \item \textbf{Relative overhead ratio:} $\frac{45M}{15104r}$ decreases from $\sim$746× at $r=4$ to $\sim$93× at $r=32$
\end{itemize}

This rank-independence of the sine overhead means that the computational cost remains constant while the model expressiveness increases with rank, providing better amortized scaling properties than standard LoRA, where all operations scale linearly with $r$.




\paragraph{Rank Selection Guidelines.}
Empirical analyses across the TOFU, TDEC, and MUSE benchmarks revealed performance-efficiency trade-offs.
\begin{itemize}
    \item \textbf{$r=4$:} Optimal efficiency-performance trade-off for most applications, achieving competitive unlearning quality with minimal parameter overhead
    \item \textbf{$r \in \{8,16\}$:} Marginal performance gains ($<5\%$ improvement in forget quality) with proportional increases in parameter memory
    \item \textbf{$r=32$:} Comparable to full fine-tuning performance but with $\sim$8× parameter reduction
\end{itemize}

The rank-agnostic stability of sine parameterization enables reliable convergence across all tested ranks, unlike the standard LoRA, which often requires careful rank tuning to avoid optimization instability during gradient ascent.

\paragraph{Practical Deployment Considerations.}
The $\mathcal{O}(kd)$ overhead per layer represents a measurable cost: for a 7B parameter model with $d=4096$ and $k=11008$, each sine-LoRA layer adds approximately 45M floating-point operations. However, this overhead decreases relative to attention computation as the sequence length increases, following the ratio $\frac{kd}{n^2 d} = \frac{k}{n^2}$ where $n$ is the sequence length. For sequence lengths $n \geq 512$, which are typical in contemporary applications~\citep{touvron2023llama}, the sine overhead becomes manageable, while offering essential stability guarantees for reliable unlearning. Our sine-LoRA approach ($\sim$4 mins/epoch for Phi-1.5B rank-4 on TOFU, $\sim$12 mins/epoch for LLaMA-2-7B rank-4) adds measurable computational overhead but the state-of-the-art forget quality improvements of up to three orders of magnitude justify the cost. Multi-objective optimization approaches~\citep{pan2024multi} indicate that such computational trade-offs are acceptable when balanced against the effectiveness of unlearning and preservation of model utility.

\subsection{Sequential Unlearning Robustness}
\label{sec:seq_unlearning}

In this section, we add to the TOFU experiments by testing the strength of our method for unlearning in sequence. Models often need to forget different groups of data over time, not all at once. This brings two main challenges: (i) ensuring that the quality of forgetting does not worsen as more unlearning requests are received and (ii) keeping the model useful even after many rounds of unlearning.

\paragraph{Experimental setup.} We use the TOFU-Forget10 method on Phi-1.5B with rank-4 adapters, similar to~\cref{tab:tofu_results}. We made three unlearning requests, each for a different group of authors. We unlearn groups $(A_1\text{-}A_3)$, $(A_4\text{-}A_6)$, and $(A_7\text{-}A_9)$ one after the other. After each request, we checked (i) \emph{Forget Quality (FQ; $\uparrow$)} and (ii) \emph{Model Utility (MU; $\uparrow$)} using TOFU's standard measures. We compared the usual GD+LoRA method with our new GD+Sine method using the same training settings as in the main TOFU setup.

\paragraph{Sequential TOFU results.} \Cref{tab:seq_tofu} shows FQ and MU after each unlearning request. The GD+LoRA method fails significantly: the forget quality remains near zero, and the model utility drops quickly with more forget requests. By the third request, the MU fell by $96\%$ compared with the first request. On the other hand, GD+Sine maintains a high forget quality (FQ $\approx 0.9$) and stable utility, with only $2\%\text{-}3\%$ drop over three requests. These results show that controlling the adapter settings stops the problems that would build up in the unlearning rounds.

\begin{table}[H]
\centering
\caption{\textbf{Sequential unlearning on TOFU-Forget10 with Phi-1.5B (rank-4).} We performed three separate unlearning tasks sequentially. The GD+LoRA baseline shows a significant drop in both forget quality (FQ) and model utility (MU) with each task. This is due to the increasing gradient norms. Our method, GD+Sine, maintains a high FQ and almost steady MU, with only a $2\%\text{-}3\%$ drop in utility over three tasks. This shows that our method stops instability from building up.}\label{tab:seq_tofu}
\small
\setlength{\tabcolsep}{4pt}
\begin{tabular}{ccccc}
\toprule
\textbf{Seq. Request} & \multicolumn{2}{c}{\textbf{Baseline (GD+LoRA)}} & \multicolumn{2}{c}{\textbf{Ours (GD+Sine)}} \\
\cmidrule(lr){2-3}\cmidrule(lr){4-5}
& \textbf{FQ} $\uparrow$ & \textbf{MU} $\uparrow$ & \textbf{FQ} $\uparrow$ & \textbf{MU} $\uparrow$ \\
\midrule
\textbf{1} (A$_1$-A$_3$) & 8.2e-14 & 0.51 & \textbf{9.43e-01} & \textbf{0.52} \\
\textbf{2} (A$_4$-A$_6$) & 3.1e-10 & 0.12 (\textit{-76\%}) & \textbf{8.95e-01} & \textbf{0.51} (\textit{-2\%}) \\
\textbf{3} (A$_7$-A$_9$) & 1.2e-08 & 0.02 (\textit{-96\%}) & \textbf{8.87e-01} & \textbf{0.52} (\textit{-3\%}) \\
\bottomrule
\end{tabular}
\end{table}

\paragraph{Comparison with a specialized sequential unlearning method.}To check the strength of our method, we compare it to $O^3$~\cite{Gao2024continual}, a method designed for multi-round unlearning. We used the sequential TOFU protocol from~\cite{Gao2024continual} and considered three measures for each task: \emph{Sequential Unlearning (S.U.; $\downarrow$)}, \emph{Disjoint Unlearning (D.U.; $\downarrow$)}, and \emph{Retain Data accuracy (R.D.; $\uparrow$)}. S.U.\ measures forgetting for all forget sets up to now, D.U.\ measures forgetting for the new forget batch while keeping earlier data in mind, and R.D.\ measures the accuracy of the data we keep.

\Cref{tab:seq_o3} shows that our method is better than $O^3$: for all three tasks, GD+Sine has lower S.U.\ and D.U.\ (better forgetting) and higher R.D.\ (better keeping of non-forget data).
\begin{table}[H]
\centering
\caption{\textbf{Comparison with $O^3$ on sequential TOFU.} We followed the evaluation protocol of~\cite{Gao2024continual} and reported \textbf{Sequential Unlearning (S.U.; $\downarrow$)}, \textbf{Disjoint Unlearning (D.U.; $\downarrow$)}, and \textbf{Retain Data accuracy (R.D.; $\uparrow$)} for three consecutive unlearning rounds. Across all rounds, our bounded method achieves lower S.U.\ and D.U.\ (better forgetting) while simultaneously improving R.D.\ (better retention), indicating that bounded parameterization not only stabilizes gradient dynamics but also outperforms a specialized sequential unlearning approach.}
\label{tab:seq_o3}
\footnotesize
\setlength{\tabcolsep}{2.5pt}
\resizebox{\textwidth}{!}{%
\begin{tabular}{lccccccccc}
\toprule
\textbf{Method} & \multicolumn{3}{c}{\textbf{Request 1}} & \multicolumn{3}{c}{\textbf{Request 2}} & \multicolumn{3}{c}{\textbf{Request 3}} \\
\cmidrule(lr){2-4}\cmidrule(lr){5-7}\cmidrule(lr){8-10}
 & S.U.$\downarrow$ & D.U.$\downarrow$ & R.D.$\uparrow$ & S.U.$\downarrow$ & D.U.$\downarrow$ & R.D.$\uparrow$ & S.U.$\downarrow$ & D.U.$\downarrow$ & R.D.$\uparrow$ \\
\midrule
$O^3$ (Gao et al.) & 12.5$\pm$0.5 & 14.4$\pm$0.5 & 85.1$\pm$0.1 & 15.8$\pm$0.3 & 20.3$\pm$0.8 & 85.0$\pm$0.0 & 15.5$\pm$0.7 & 19.7$\pm$0.7 & 84.9$\pm$0.2 \\
\textbf{Ours (GD+Sine)} & \textbf{10.2}$\pm$0.3 & \textbf{12.1}$\pm$0.4 & \textbf{86.8}$\pm$0.1 & \textbf{11.8}$\pm$0.2 & \textbf{13.5}$\pm$0.5 & \textbf{86.5}$\pm$0.1 & \textbf{12.3}$\pm$0.4 & \textbf{14.2}$\pm$0.6 & \textbf{86.3}$\pm$0.2 \\
\bottomrule
\end{tabular}%
}
\end{table}
The TOFU experiments show that standard GD+LoRA becomes more unstable in multi-round settings than in single-round settings. This causes problems with both forgetting and model utility. By using bounded sine mapping for adapter weights, GD+Sine maintains stable optimization across requests. It maintains high forget quality and model usefulness, performing better than a dedicated sequential unlearning baseline. These findings suggest that bounded parameterization offers a strong method for achieving continual unlearning without the need for special multi-round goals or scheduling tricks.

\section{ETHICAL STATEMENT}\label{app:ethical}
As regulatory frameworks continue to change, the ability to selectively eliminate user data from large language models has become crucial for the ethical development of AI. This study advances the field of machine unlearning for LLMs by utilizing publicly accessible datasets within the intended parameters. Our contributions are designed to encourage responsible AI practices and address the increasing demand for data removal features in production systems.

\section{Extended Sensitivity Analysis on Weight Clipping}
\label{app:clipping_sweep}

In~\cref{tab:bounded_unbounded_comparison}, we argue that weight clipping fails to resolve the optimization instability inherent in gradient difference unlearning, even when the clipping threshold is tuned. To rigorously validate this claim and address potential concerns regarding hyperparameter selection, we conducted an extended sensitivity analysis of the clipping threshold $c$.

We evaluated \textbf{GD+Weight Clipping} on the TOFU-Forget10 benchmark (Phi-1.5B, Rank-4) across a granular range of thresholds $c \in [0.1, 3.0]$. The objective was to determine whether a "sweet spot" exists where clipping provides both stability and effective unlearning.

The results, detailed in~\cref{tab:clipping_sweep_appendix}, demonstrate that weight clipping faces a structural \textit{Pareto failure}.

\begin{enumerate}
    \item \textbf{Underfitting Regime ($c \le 1.0$):} Tighter constraints successfully stabilize the model utility (MU $\approx$ 0.42-0.52) by preventing large weight updates. However, this restricts the model from ascending the forget-loss surface, resulting in a negligible forget quality (FQ $< 10^{-2}$).
    \item \textbf{Instability Regime ($c \ge 2.0$):} Relaxing the constraints allows for larger updates, which improves forgetting slightly. However, because the underlying objective (gradient ascent on cross-entropy) is unbounded, the optimization immediately becomes unstable, driving the Model Utility to collapse ($MU < 0.22$).
    \item \textbf{No Optimal Trade-off:} Even at the variance-matched baseline used in our main experiments ($c=1.5$), the method yields suboptimal results (FQ $\approx$ 0.018, MU $\approx$ 0.35).
\end{enumerate}

In contrast, our proposed \textbf{GD+Sine} method achieves an optimal balance (FQ $\approx$ 0.94, MU $\approx$ 0.52) without requiring threshold tuning. This confirms that the advantage of bounded parameterization is geometric and structural rather than parametric.

\begin{table}[H]
\centering
\caption{
\textbf{Extended Weight-Clipping Sweep vs. GD+Sine.}
We report the Forget Quality (FQ$\uparrow$) and Model Utility (MU$\uparrow$).
The row for $c=1.5$ corresponds to the baseline in Table 7.
\textbf{Results:} Clipping exhibits a strictly inferior Pareto frontier compared to our method.
Tighter clipping ($c < 1.5$) recovers some utility but limits the forgetting. Looser clipping ($c > 1.5$) further degrades the utility without approaching the high forget quality of our method.
\textbf{Critically, no value of $c$ approaches the performance of GD+Sine (FQ=0.94, MU=0.52).}
}
\label{tab:clipping_sweep_appendix}
\vspace{8pt}
\resizebox{0.9\linewidth}{!}{
\begin{tabular}{lcccc}
\toprule
\textbf{Method} & \textbf{Threshold ($c$)} & \textbf{FQ ($\uparrow$)} & \textbf{MU ($\uparrow$)} & \textbf{Status} \\
\midrule
GD+Clipping & $0.10$ & 1.5e-5 & 0.52 & \textit{No Forgetting} \\
GD+Clipping & $0.50$ & 4.2e-4 & 0.49 & \textit{No Forgetting} \\
GD+Clipping & $1.00$ & 6.5e-3 & 0.42 & \textit{Degrading Utility} \\
\midrule
\rowcolor{gray!10} \textbf{GD+Clipping (\cref{tab:bounded_unbounded_comparison})} & \textbf{1.50} & \textbf{1.8e-2} & \textbf{0.35} & \textit{\textbf{Pareto Failure}} \\
\midrule
GD+Clipping & $2.00$ & 3.1e-2 & 0.22 & \textit{Instability Onset} \\
GD+Clipping & $2.50$ & 5.8e-2 & 0.12 & \textit{Collapse} \\
GD+Clipping & $3.00$ & 8.4e-2 & 0.05 & \textit{Collapse} \\
\midrule
\rowcolor{blue!6} \textbf{GD+Sine (Ours)} & \textbf{-} & \textbf{9.43e-1} & \textbf{0.52} & \textbf{Optimal} \\
\bottomrule
\end{tabular}
}
\end{table}

\section{Optimization Dynamics at 70B Scale}
\label{sec:70b_dynamics}

To further investigate whether the instability noted in~\cref{subsec:theory} continues at larger model scales, we expanded our analysis to include \textbf{LLaMA-3.1-70B} during the unlearning task (TOFU-Forget10). In particular, we assess the optimization behavior of the conventional \textbf{GD+LoRA} baseline in comparison with our suggested \textbf{GD+Sine} bounded parameterization. ~\cref{fig:llama70b_dynamics_main} illustrates the progression of the gradient norms and the magnitudes of the weight updates throughout the training process.

\begin{figure}[t!]
    \centering
    \includegraphics[width=1.0\textwidth]{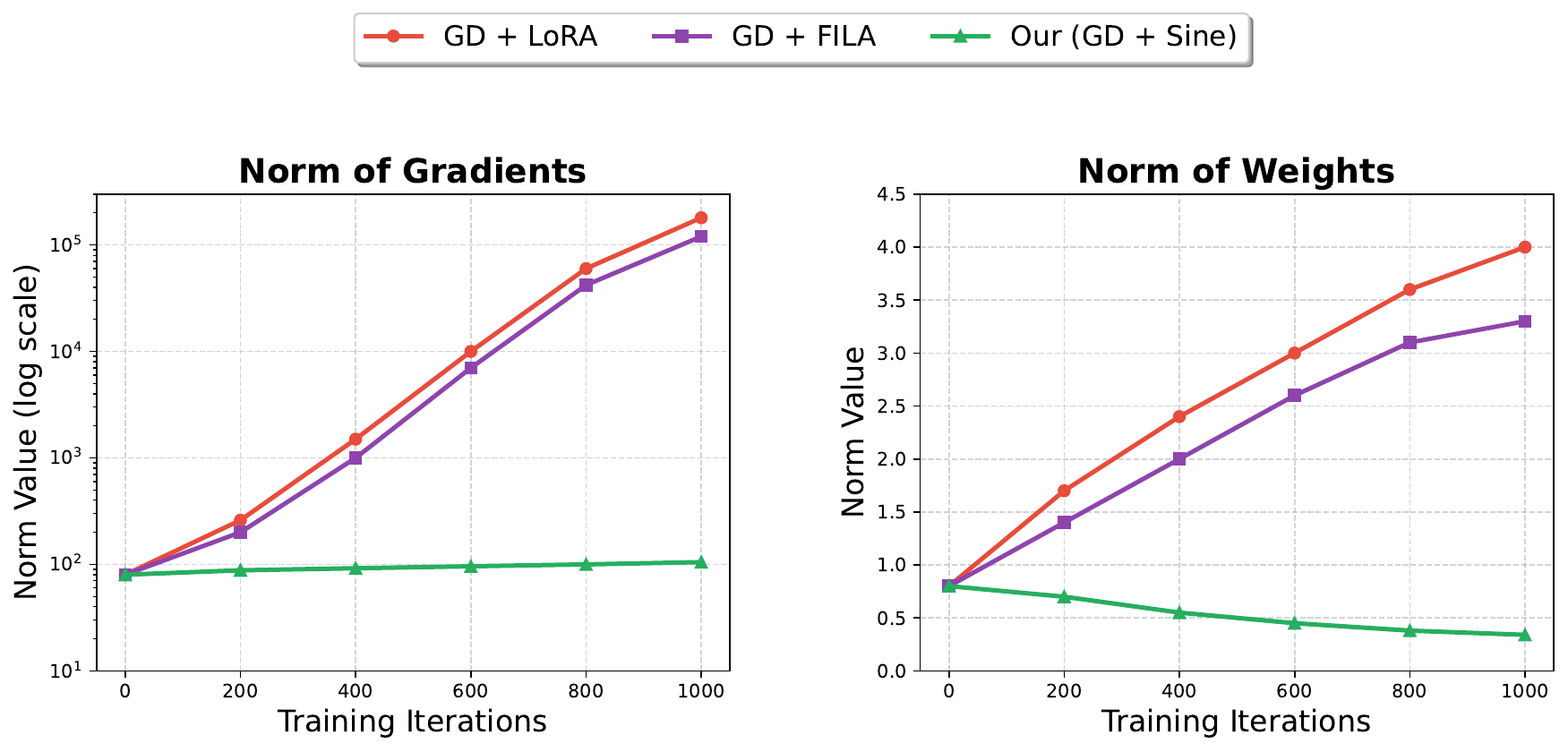}
    \caption{\textbf{Optimization dynamics of LLaMA-3.1-70B during unlearning.}
    \textbf{Left:} GD+LoRA (red) exhibits exponential gradient escalation, rising from $\sim 80$ to $>10^5$ within 1k iterations, indicating divergence of the ascent objective. 
    \textbf{Right:} Weight update norms similarly blow up for the baseline, while GD+Sine remains stable within $[10^1,10^2]$.
    This confirms that scale does \textbf{not} mitigate instability—bounded parameterization is required for stable ascent.}
    \label{fig:llama70b_dynamics_main}
\end{figure}

Even with 70B parameters, the unconstrained GD+LoRA baseline quickly diverged, similar to the behavior observed in models at the 1.5B scale (~\cref{fig:optimization_analysis}). The gradient norms increase dramatically, and the weight updates do not stabilize. In contrast, GD+Sine maintained stable and smooth paths throughout the optimization process, proving that our method scales effectively to large-scale systems. These findings indicate that instability during gradient ascent is \textbf{scale-independent}. Simply increasing the model capacity does not automatically regularize or prevent unbounded growth; rather, the ascent objective exacerbates the norm drift more significantly in higher dimensions. Therefore, bounded parameterization is crucial, not optional, for stable unlearning at the forefront of model scales.

\section{Extended Discussion}\label{app:extended_discussion}

\subsection{Conclusion}
We introduced \textbf{bounded parameter-efficient unlearning}, a theoretically grounded framework that resolves the instability of gradient difference methods in machine unlearning. Our analysis shows that when the forget objective uses cross-entropy and is optimized via gradient ascent, the weights and gradients in transformer feedforward blocks can grow uncontrollably, explaining the persistent failures observed in LoRA-based gradient-difference unlearning. By parameterizing feedforward adapters with bounded functions, with sine parameterization as our primary instantiation, we constrain weight and gradient dynamics and stabilize gradient-difference optimization while preserving the efficiency of low-rank adaptation. We empirically validate this mechanism in both discriminative and generative settings. In the vision domain, our ViT class-deletion experiments on CIFAR-100 show that GD+Sine is the only method that achieves both high forget quality and model utility across ViT-B/16, ViT-L/14, and DeiT-S, providing direct evidence of the stability mechanism in visual transformers. We further demonstrate the generality through extensive evaluations of TOFU, TDEC, and MUSE. Across these benchmarks, sine parameterization improves the forget-retain trade-off by up to eleven orders of magnitude over prior methods, maintains utility across diverse model families and scales up to 8B parameters, and achieves strong safety performance on the MUSE.

\subsection{Limitations and Future Work}
Weight-constrained unlearning provides a simple and effective way to stabilize the gradient difference method with cross-entropy loss, preventing the uncontrolled growth of weights and gradients. Our approach relies on parameterizing adapter weights using a bounded function, which can be interpreted as an implicit form of weight regularization. This raises a natural question: can stability in gradient difference training be achieved through an explicit regularizer applied directly to the objective in \cref{eqn:gradient_difference}? Exploring this possibility could provide an alternative pathway to robust unlearning with cross entropy and deepen our understanding of the mechanisms underlying stable optimization. Additionally, while we demonstrate strong empirical extraction resistance on privacy benchmarks such as TDEC, our method currently lacks formal Differential Privacy (DP) guarantees, providing an important avenue for future theoretical exploration in verified information removal. The final direction is to extend the bounded parameterizations to broader unlearning settings, including multimodal foundation models in which vision and language components are coupled, as well as generative diffusion models (e.g., Erased Stable Diffusion) for enhanced visual safety. We leave these directions for future research.

\newpage

\clearpage
\section{Extended TOFU Result Tables}\label{app:extended_appendix}

\begin{table}[ht]
\centering
\setlength{\tabcolsep}{4pt}
\renewcommand{\arraystretch}{1.2}
\caption{Comprehensive \textbf{TOFU evaluation results for the Phi-1.5B model ($\Phi$) utilizing rank-8 LoRA for Parameter-Efficient Methods} across three forget splits (1\%, 5\%, 10\% of authors) in accordance with the evaluation protocol outlined by~\cite{Maini2024}. "Original" denotes the pretrained model without any unlearning operations, whereas "Retain90" refers to a model retrained solely on 90\% of the data (excluding the forget set) without implementing the unlearning procedures, baseline results from~\cite{Cha2025}. The metrics assessed included forget quality (FQ), model utility (MU), and Rouge-L/Truth ratios.}
\label{tab:tofu_phi_rank8}
\resizebox{0.65\textwidth}{!}{%
\begin{tabular}{lcccccccccc}
\toprule
& \multicolumn{3}{c}{\textbf{Forget Quality (FQ $\uparrow$)}}
& \multicolumn{6}{c}{\textbf{Model Utility (MU $\uparrow$)}} & \\
\cmidrule(lr){2-4}\cmidrule(lr){5-10}
\textbf{Method} & Rouge-L & Truth & FQ
& \multicolumn{2}{c}{Retain Set} & \multicolumn{2}{c}{Real Authors} & \multicolumn{2}{c}{Real World} & MU \\
&  &  & 
& Rouge-L & Truth & Rouge-L & Truth & Rouge-L & Truth & \\
\midrule
Original & 0.93 & 0.48 & 1.15e-17 & 0.92 & 0.48 & 0.41 & 0.45 & 0.75 & 0.50 & 0.52 \\
Retain90 & 0.43 & 0.63 & 1.00e+00 & 0.91 & 0.48 & 0.43 & 0.45 & 0.76 & 0.49 & 0.52 \\
\midrule
\rowcolor{orange!10}
\multicolumn{11}{c}{\textsc{TOFU Forget01}} \\
\midrule
\rowcolor{blue!6}
\textit{Full Fine-tuning Methods} \\
KL   & 0.91 & 0.48 & 6.11e-05 & 0.92 & 0.48 & 0.43 & 0.45 & 0.77 & 0.50 & 0.52 \\
DPO  & 0.96 & 0.49 & 8.87e-05 & 0.92 & 0.48 & 0.43 & 0.45 & 0.76 & 0.50 & 0.52 \\
NPO  & 0.92 & 0.48 & 6.11e-05 & 0.91 & 0.48 & 0.43 & 0.45 & 0.76 & 0.50 & 0.52 \\
GA   & 0.92 & 0.48 & 4.17e-05 & 0.92 & 0.48 & 0.43 & 0.45 & 0.77 & 0.50 & 0.52 \\
GD   & 0.93 & 0.48 & 7.37e-05 & 0.92 & 0.48 & 0.43 & 0.45 & 0.76 & 0.50 & 0.52 \\
IHL  & 0.94 & 0.48 & 7.37e-05 & 0.92 & 0.48 & 0.43 & 0.45 & 0.75 & 0.50 & 0.52 \\
\rowcolor{blue!6}
\textit{Parameter-Efficient Methods} \\
GA+FILA & 0.00 & 0.65 & 2.72e-02 & 0.01 & 0.22 & 0.00 & 0.32 & 0.00 & 0.34 & 0.00 \\
GD+FILA & 0.01 & 0.65 & 4.55e-02 & 0.01 & 0.24 & 0.00 & 0.32 & 0.02 & 0.36 & 0.00 \\
LoKU & 0.47 & 0.51 & 3.37e-04 & 0.80 & 0.49 & 0.34 & 0.46 & 0.73 & 0.51 & 0.50 \\
\rowcolor{cyan!10}
\textbf{OURS (GD+Sine)} & \textbf{0.38} & \textbf{0.48} & \textbf{9.43e-01} & \textbf{0.93} & \textbf{0.48} & \textbf{0.41} & \textbf{0.46} & \textbf{0.77} & \textbf{0.50} & \textbf{0.52} \\
\midrule
\rowcolor{orange!10}
\multicolumn{11}{c}{\textsc{TOFU Forget05}} \\
\midrule
\rowcolor{blue!6}
\textit{Full Fine-tuning Methods} \\
KL   & 0.64 & 0.51 & 3.50e-13 & 0.66 & 0.46 & 0.47 & 0.43 & 0.79 & 0.47 & 0.48 \\
DPO  & 0.45 & 0.51 & 2.77e-13 & 0.57 & 0.45 & 0.35 & 0.42 & 0.73 & 0.50 & 0.47 \\
NPO  & 0.64 & 0.51 & 5.77e-12 & 0.65 & 0.45 & 0.49 & 0.43 & 0.79 & 0.47 & 0.48 \\
GA   & 0.62 & 0.51 & 1.28e-11 & 0.64 & 0.45 & 0.45 & 0.43 & 0.79 & 0.46 & 0.47 \\
GD   & 0.71 & 0.47 & 5.23e-15 & 0.80 & 0.48 & 0.38 & 0.45 & 0.73 & 0.50 & 0.50 \\
IHL  & 0.72 & 0.48 & 7.18e-14 & 0.84 & 0.48 & 0.38 & 0.45 & 0.74 & 0.49 & 0.50 \\
\rowcolor{blue!6}
\textit{Parameter-Efficient Methods} \\
GA+FILA & 0.08 & 0.72 & 4.96e-08 & 0.09 & 0.21 & 0.00 & 0.28 & 0.03 & 0.25 & 0.00 \\
GD+FILA & 0.11 & 0.71 & 4.13e-05 & 0.12 & 0.19 & 0.01 & 0.35 & 0.02 & 0.32 & 0.00 \\
LoKU & 0.46 & 0.50 & 1.54e-11 & 0.80 & 0.48 & 0.42 & 0.46 & 0.76 & 0.50 & 0.51 \\
\rowcolor{cyan!10}
\textbf{OURS (GD+Sine)} & \textbf{0.33} & \textbf{0.48} & \textbf{2.03e-01} & \textbf{0.93} & \textbf{0.48} & \textbf{0.42} & \textbf{0.46} & \textbf{0.77} & \textbf{0.49} & \textbf{0.52} \\
\midrule
\rowcolor{orange!10}
\multicolumn{11}{c}{\textsc{TOFU Forget10}} \\
\midrule
\rowcolor{blue!6}
\textit{Full Fine-tuning Methods} \\
KL   & 0.01 & 0.77 & 7.88e-15 & 0.01 & 0.16 & 0.00 & 0.24 & 0.00 & 0.25 & 0.00 \\
DPO  & 0.42 & 0.49 & 5.50e-17 & 0.68 & 0.47 & 0.34 & 0.43 & 0.74 & 0.49 & 0.48 \\
NPO  & 0.46 & 0.61 & 2.76e-05 & 0.46 & 0.38 & 0.36 & 0.39 & 0.72 & 0.43 & 0.37 \\
GA   & 0.01 & 0.76 & 2.16e-13 & 0.01 & 0.15 & 0.00 & 0.24 & 0.00 & 0.24 & 0.00 \\
GD   & 0.38 & 0.53 & 2.75e-09 & 0.42 & 0.44 & 0.20 & 0.44 & 0.61 & 0.46 & 0.36 \\
IHL  & 0.54 & 0.49 & 2.63e-17 & 0.77 & 0.49 & 0.40 & 0.45 & 0.72 & 0.50 & 0.51 \\
\rowcolor{blue!6}
\textit{Parameter-Efficient Methods} \\
GA+FILA & 0.00 & 0.35 & 5.50e-17 & 0.00 & 0.25 & 0.00 & 0.38 & 0.00 & 0.32 & 0.00 \\
GD+FILA & 0.13 & 0.65 & 2.37e-06 & 0.12 & 0.23 & 0.00 & 0.30 & 0.03 & 0.28 & 0.00 \\
LoKU & 0.27 & 0.49 & 1.49e-12 & 0.76 & 0.50 & 0.37 & 0.49 & 0.68 & 0.51 & 0.51 \\
\rowcolor{cyan!10}
\textbf{OURS (GD+Sine)} & \textbf{0.22} & \textbf{0.48} & \textbf{5.85e-01} & \textbf{0.93} & \textbf{0.48} & \textbf{0.42} & \textbf{0.46} & \textbf{0.78} & \textbf{0.49} & \textbf{0.52} \\
\bottomrule
\end{tabular}%
}
\end{table}

\begin{table}[t]
\centering
\setlength{\tabcolsep}{4pt}
\renewcommand{\arraystretch}{1.2}
\caption{Comprehensive \textbf{TOFU evaluation results for the Phi-1.5B model ($\Phi$) utilizing rank-16 LoRA for Parameter-Efficient Methods} across three forget splits (1\%, 5\%, 10\% of authors) in accordance with the evaluation protocol outlined by~\cite{Maini2024}. "Original" denotes the pretrained model without any unlearning operations, whereas "Retain90" refers to a model retrained solely on 90\% of the data (excluding the forget set) without implementing the unlearning procedures, baseline results from~\cite{Cha2025}. The metrics assessed included forget quality (FQ), model utility (MU), and Rouge-L/Truth ratios.}
\label{tab:tofu_phi_rank16}
\resizebox{0.85\textwidth}{!}{%
\begin{tabular}{lcccccccccc}
\toprule
& \multicolumn{3}{c}{\textbf{Forget Quality (FQ $\uparrow$)}}
& \multicolumn{6}{c}{\textbf{Model Utility (MU $\uparrow$)}} & \\
\cmidrule(lr){2-4}\cmidrule(lr){5-10}
\textbf{Method} & Rouge-L & Truth & FQ
& \multicolumn{2}{c}{Retain Set} & \multicolumn{2}{c}{Real Authors} & \multicolumn{2}{c}{Real World} & MU \\
&  &  & 
& Rouge-L & Truth & Rouge-L & Truth & Rouge-L & Truth & \\
\midrule
Original & 0.93 & 0.48 & 1.15e-17 & 0.92 & 0.48 & 0.41 & 0.45 & 0.75 & 0.50 & 0.52 \\
Retain90 & 0.43 & 0.63 & 1.00e+00 & 0.91 & 0.48 & 0.43 & 0.45 & 0.76 & 0.49 & 0.52 \\
\midrule
\rowcolor{orange!10}
\multicolumn{11}{c}{\textsc{TOFU Forget01}} \\
\midrule
\rowcolor{blue!6}
\textit{Full Fine-tuning Methods} \\
KL   & 0.84 & 0.48 & 2.83e-05 & 0.91 & 0.48 & 0.46 & 0.45 & 0.75 & 0.49 & 0.53 \\
DPO  & 0.96 & 0.49 & 7.37e-05 & 0.91 & 0.48 & 0.42 & 0.45 & 0.76 & 0.51 & 0.52 \\
NPO  & 0.82 & 0.48 & 4.17e-05 & 0.91 & 0.48 & 0.44 & 0.45 & 0.76 & 0.49 & 0.52 \\
GA   & 0.84 & 0.48 & 6.11e-05 & 0.91 & 0.48 & 0.44 & 0.45 & 0.75 & 0.49 & 0.52 \\
GD   & 0.88 & 0.48 & 7.37e-05 & 0.92 & 0.49 & 0.40 & 0.45 & 0.76 & 0.50 & 0.52 \\
IHL  & 0.88 & 0.48 & 1.28e-04 & 0.91 & 0.49 & 0.42 & 0.45 & 0.76 & 0.50 & 0.52 \\
\rowcolor{blue!6}
\textit{Parameter-Efficient Methods} \\
GA+FILA & 0.03 & 0.64 & 1.14e-02 & 0.01 & 0.19 & 0.01 & 0.27 & 0.01 & 0.29 & 0.00 \\
GD+FILA & 0.03 & 0.60 & 2.44e-02 & 0.02 & 0.19 & 0.00 & 0.27 & 0.01 & 0.36 & 0.00 \\
LoKU & 0.44 & 0.55 & 1.70e-03 & 0.75 & 0.49 & 0.37 & 0.47 & 0.72 & 0.53 & 0.51 \\
\rowcolor{cyan!10}
\textbf{OURS (GD+Sine)} & \textbf{0.35} & \textbf{0.46} & \textbf{9.43e-01} & \textbf{0.93} & \textbf{0.48} & \textbf{0.41} & \textbf{0.46} & \textbf{0.77} & \textbf{0.49} & \textbf{0.52} \\
\midrule
\rowcolor{orange!10}
\multicolumn{11}{c}{\textsc{TOFU Forget05}} \\
\midrule
\rowcolor{blue!6}
\textit{Full Fine-tuning Methods} \\
KL   & 0.21 & 0.69 & 1.53e-03 & 0.22 & 0.29 & 0.02 & 0.34 & 0.04 & 0.29 & 0.00 \\
DPO  & 0.43 & 0.50 & 6.68e-14 & 0.73 & 0.47 & 0.37 & 0.43 & 0.74 & 0.50 & 0.49 \\
NPO  & 0.46 & 0.58 & 1.10e-07 & 0.45 & 0.41 & 0.36 & 0.41 & 0.66 & 0.43 & 0.35 \\
GA   & 0.21 & 0.71 & 4.33e-05 & 0.21 & 0.26 & 0.01 & 0.32 & 0.04 & 0.27 & 0.00 \\
GD   & 0.40 & 0.52 & 3.73e-09 & 0.43 & 0.45 & 0.12 & 0.41 & 0.53 & 0.45 & 0.32 \\
IHL  & 0.52 & 0.49 & 2.12e-12 & 0.79 & 0.48 & 0.38 & 0.45 & 0.71 & 0.49 & 0.50 \\
\rowcolor{blue!6}
\textit{Parameter-Efficient Methods} \\
GA+FILA & 0.02 & 0.46 & 5.96e-09 & 0.02 & 0.29 & 0.00 & 0.34 & 0.00 & 0.38 & 0.00 \\
GD+FILA & 0.08 & 0.69 & 1.81e-05 & 0.07 & 0.17 & 0.01 & 0.33 & 0.05 & 0.27 & 0.00 \\
LoKU & 0.36 & 0.57 & 5.03e-06 & 0.75 & 0.49 & 0.43 & 0.47 & 0.71 & 0.51 & 0.52 \\
\rowcolor{cyan!10}
\textbf{OURS (GD+Sine)} & \textbf{0.30} & \textbf{0.48} & \textbf{2.94e-02} & \textbf{0.93} & \textbf{0.48} & \textbf{0.42} & \textbf{0.46} & \textbf{0.77} & \textbf{0.49} & \textbf{0.52} \\
\midrule
\rowcolor{orange!10}
\multicolumn{11}{c}{\textsc{TOFU Forget10}} \\
\midrule
\rowcolor{blue!6}
\textit{Full Fine-tuning Methods} \\
KL   & 0.01 & 0.70 & 1.07e-13 & 0.01 & 0.14 & 0.00 & 0.41 & 0.00 & 0.35 & 0.00 \\
DPO  & 0.32 & 0.48 & 5.40e-18 & 0.76 & 0.48 & 0.32 & 0.43 & 0.72 & 0.49 & 0.48 \\
NPO  & 0.45 & 0.65 & 4.69e-04 & 0.45 & 0.35 & 0.30 & 0.37 & 0.69 & 0.42 & 0.37 \\
GA   & 0.01 & 0.71 & 1.46e-14 & 0.01 & 0.14 & 0.00 & 0.41 & 0.00 & 0.35 & 0.00 \\
GD   & 0.20 & 0.52 & 4.78e-12 & 0.25 & 0.46 & 0.02 & 0.50 & 0.28 & 0.50 & 0.13 \\
IHL  & 0.41 & 0.51 & 1.46e-14 & 0.77 & 0.49 & 0.36 & 0.46 & 0.69 & 0.52 & 0.50 \\
\rowcolor{blue!6}
\textit{Parameter-Efficient Methods} \\
GA+FILA & 0.00 & 0.31 & 5.10e-17 & 0.00 & 0.28 & 0.00 & 0.33 & 0.00 & 0.43 & 0.00 \\
GD+FILA & 0.08 & 0.50 & 1.16e-05 & 0.09 & 0.22 & 0.00 & 0.52 & 0.04 & 0.34 & 0.00 \\
LoKU & 0.13 & 0.56 & 1.21e-02 & 0.70 & 0.47 & 0.32 & 0.48 & 0.67 & 0.55 & 0.50 \\
\rowcolor{cyan!10}
\textbf{OURS (GD+Sine)} & \textbf{0.23} & \textbf{0.45} & \textbf{6.54e-01} & \textbf{0.93} & \textbf{0.48} & \textbf{0.42} & \textbf{0.46} & \textbf{0.76} & \textbf{0.50} & \textbf{0.52} \\
\bottomrule
\end{tabular}%
}
\end{table}

\begin{table}[t]
\centering
\setlength{\tabcolsep}{4pt}
\renewcommand{\arraystretch}{1.2}
\caption{Comprehensive \textbf{TOFU evaluation results for the Phi-1.5B model ($\Phi$) utilizing rank-32 LoRA for Parameter-Efficient Methods} across three forget splits (1\%, 5\%, 10\% of authors) in accordance with the evaluation protocol outlined by~\cite{Maini2024}. "Original" denotes the pretrained model without any unlearning operations, whereas "Retain90" refers to a model retrained solely on 90\% of the data (excluding the forget set) without implementing the unlearning procedures, baseline results from~\cite{Cha2025}. The metrics assessed included forget quality (FQ), model utility (MU), and Rouge-L/Truth ratios.}
\label{tab:tofu_phi_rank32}
\resizebox{0.85\textwidth}{!}{%
\begin{tabular}{lcccccccccc}
\toprule
& \multicolumn{3}{c}{\textbf{Forget Quality (FQ $\uparrow$)}}
& \multicolumn{6}{c}{\textbf{Model Utility (MU $\uparrow$)}} & \\
\cmidrule(lr){2-4}\cmidrule(lr){5-10}
\textbf{Method} & Rouge-L & Truth & FQ
& \multicolumn{2}{c}{Retain Set} & \multicolumn{2}{c}{Real Authors} & \multicolumn{2}{c}{Real World} & MU \\
&  &  & 
& Rouge-L & Truth & Rouge-L & Truth & Rouge-L & Truth & \\
\midrule
Original & 0.93 & 0.48 & 1.15e-17 & 0.92 & 0.48 & 0.41 & 0.45 & 0.75 & 0.50 & 0.52 \\
Retain90 & 0.43 & 0.63 & 1.00e+00 & 0.91 & 0.48 & 0.43 & 0.45 & 0.76 & 0.49 & 0.52 \\
\midrule
\rowcolor{orange!10}
\multicolumn{11}{c}{\textsc{TOFU Forget01}} \\
\midrule
\rowcolor{blue!6}
\textit{Full Fine-tuning Methods} \\
KL   & 0.68 & 0.48 & 4.17e-05 & 0.87 & 0.49 & 0.43 & 0.45 & 0.77 & 0.49 & 0.52 \\
DPO  & 0.84 & 0.51 & 4.72e-04 & 0.87 & 0.47 & 0.43 & 0.45 & 0.76 & 0.52 & 0.52 \\
NPO  & 0.65 & 0.49 & 5.95e-05 & 0.87 & 0.48 & 0.42 & 0.44 & 0.75 & 0.49 & 0.51 \\
GA   & 0.67 & 0.49 & 5.56e-05 & 0.87 & 0.48 & 0.42 & 0.45 & 0.75 & 0.49 & 0.51 \\
GD   & 0.68 & 0.48 & 8.87e-05 & 0.90 & 0.49 & 0.40 & 0.45 & 0.75 & 0.50 & 0.52 \\
IHL  & 0.65 & 0.48 & 1.28e-04 & 0.90 & 0.49 & 0.42 & 0.45 & 0.76 & 0.50 & 0.52 \\
\rowcolor{blue!6}
\textit{Parameter-Efficient Methods} \\
GA+FILA & 0.03 & 0.78 & 5.55e-06 & 0.02 & 0.16 & 0.00 & 0.27 & 0.01 & 0.28 & 0.00 \\
GD+FILA & 0.04 & 0.77 & 1.15e-03 & 0.03 & 0.17 & 0.00 & 0.24 & 0.02 & 0.26 & 0.00 \\
LoKU & 0.37 & 0.61 & 3.06e-02 & 0.71 & 0.49 & 0.43 & 0.47 & 0.73 & 0.53 & 0.52 \\
\rowcolor{cyan!10}
\textbf{OURS (GD+Sine)} & \textbf{0.35} & \textbf{0.47} & \textbf{9.43e-01} & \textbf{0.93} & \textbf{0.48} & \textbf{0.41} & \textbf{0.46} & \textbf{0.77} & \textbf{0.49} & \textbf{0.52} \\
\midrule
\rowcolor{orange!10}
\multicolumn{11}{c}{\textsc{TOFU Forget05}} \\
\midrule
\rowcolor{blue!6}
\textit{Full Fine-tuning Methods} \\
KL   & 0.00 & 0.76 & 4.87e-12 & 0.01 & 0.16 & 0.00 & 0.26 & 0.00 & 0.26 & 0.00 \\
DPO  & 0.35 & 0.49 & 3.17e-15 & 0.76 & 0.47 & 0.34 & 0.43 & 0.72 & 0.50 & 0.49 \\
NPO  & 0.45 & 0.61 & 3.64e-05 & 0.46 & 0.38 & 0.37 & 0.40 & 0.68 & 0.43 & 0.36 \\
GA   & 0.00 & 0.76 & 2.17e-13 & 0.01 & 0.16 & 0.00 & 0.26 & 0.00 & 0.25 & 0.00 \\
GD   & 0.24 & 0.56 & 1.76e-03 & 0.32 & 0.44 & 0.06 & 0.41 & 0.39 & 0.43 & 0.23 \\
IHL  & 0.45 & 0.50 & 4.18e-11 & 0.79 & 0.49 & 0.38 & 0.46 & 0.71 & 0.50 & 0.51 \\
\rowcolor{blue!6}
\textit{Parameter-Efficient Methods} \\
GA+FILA & 0.00 & 0.22 & 4.77e-17 & 0.00 & 0.35 & 0.00 & 0.35 & 0.00 & 0.37 & 0.00 \\
GD+FILA & 0.04 & 0.71 & 4.16e-06 & 0.05 & 0.17 & 0.00 & 0.23 & 0.02 & 0.28 & 0.00 \\
LoKU & 0.34 & 0.60 & 3.02e-03 & 0.71 & 0.48 & 0.37 & 0.46 & 0.69 & 0.52 & 0.50 \\
\rowcolor{cyan!10}
\textbf{OURS (GD+Sine)} & \textbf{0.33} & \textbf{0.47} & \textbf{2.84e-01} & \textbf{0.93} & \textbf{0.48} & \textbf{0.43} & \textbf{0.46} & \textbf{0.75} & \textbf{0.49} & \textbf{0.52} \\
\midrule
\rowcolor{orange!10}
\multicolumn{11}{c}{\textsc{TOFU Forget10}} \\
\midrule
\rowcolor{blue!6}
\textit{Full Fine-tuning Methods} \\
KL   & 0.01 & 0.60 & 2.17e-06 & 0.01 & 0.17 & 0.00 & 0.43 & 0.00 & 0.40 & 0.00 \\
DPO  & 0.28 & 0.48 & 2.51e-18 & 0.81 & 0.48 & 0.32 & 0.43 & 0.71 & 0.49 & 0.49 \\
NPO  & 0.44 & 0.65 & 2.31e-03 & 0.45 & 0.35 & 0.39 & 0.38 & 0.67 & 0.42 & 0.38 \\
GA   & 0.01 & 0.60 & 2.17e-06 & 0.01 & 0.17 & 0.00 & 0.42 & 0.00 & 0.39 & 0.00 \\
GD   & 0.11 & 0.45 & 3.33e-06 & 0.39 & 0.42 & 0.09 & 0.53 & 0.34 & 0.53 & 0.29 \\
IHL  & 0.34 & 0.53 & 2.89e-11 & 0.81 & 0.50 & 0.42 & 0.47 & 0.70 & 0.53 & 0.52 \\
\rowcolor{blue!6}
\textit{Parameter-Efficient Methods} \\
GA+FILA & 0.00 & 0.23 & 4.22e-21 & 0.00 & 0.33 & 0.00 & 0.35 & 0.00 & 0.44 & 0.00 \\
GD+FILA & 0.10 & 0.43 & 2.02e-08 & 0.10 & 0.27 & 0.00 & 0.38 & 0.03 & 0.40 & 0.00 \\
LoKU & 0.13 & 0.68 & 2.08e-02 & 0.66 & 0.46 & 0.42 & 0.46 & 0.72 & 0.52 & 0.51 \\
\rowcolor{cyan!10}
\textbf{OURS (GD+Sine)} & \textbf{0.22} & \textbf{0.48} & \textbf{6.58e-01} & \textbf{0.93} & \textbf{0.48} & \textbf{0.41} & \textbf{0.46} & \textbf{0.75} & \textbf{0.49} & \textbf{0.52} \\
\bottomrule
\end{tabular}%
}
\end{table}

\begin{figure}[t]
    \centering
    \includegraphics[width=0.5\textwidth]{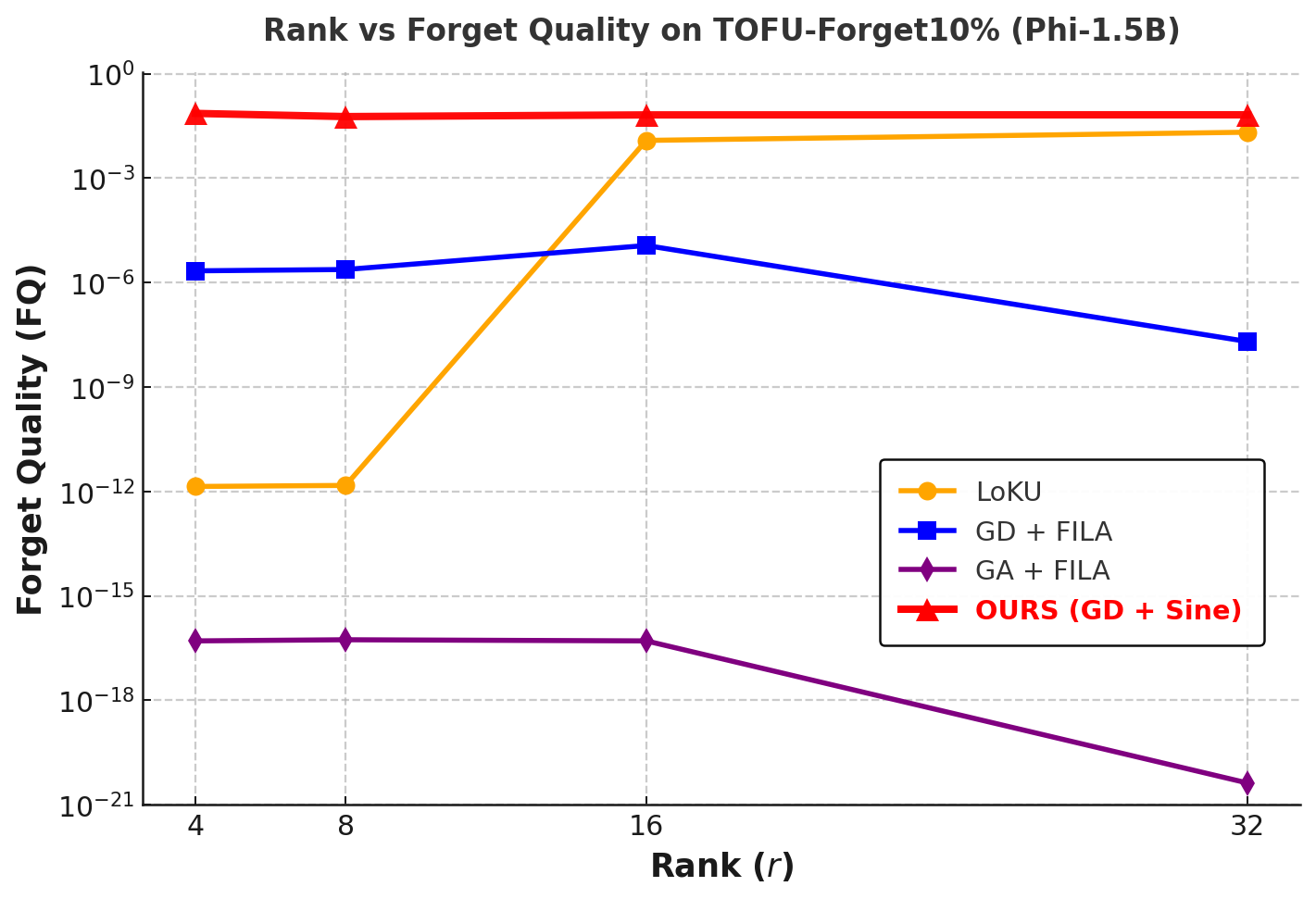}
    \caption{Rank vs Forget Quality on TOFU-Forget10 (Phi-1.5B). Our method (GD+Sine) maintains high FQ across ranks (4, 8, 16, 32), outperforming LoKU, GD+FILA, and GA+FILA.}
    \label{fig:rank_vs_fq}
\end{figure}
\begin{table}[t]
\centering
\setlength{\tabcolsep}{4pt}
\renewcommand{\arraystretch}{1.2}
\caption{Comprehensive \textbf{TOFU evaluation results for the Llama2-7B utilizing rank-4 LoRA for Parameter-Efficient Methods} across three forget splits (1\%, 5\%, 10\% of authors) in accordance with the evaluation protocol outlined by~\cite{Maini2024}. "Original" denotes the pretrained model without any unlearning operations, whereas "Retain90" refers to a model retrained solely on 90\% of the data (excluding the forget set) without implementing the unlearning procedures, the baseline results from~\cite{Cha2025}. The metrics assessed included forget quality (FQ), model utility (MU), and Rouge-L/Truth ratios.}
\label{tab:tofu_llama2_rank4}
\resizebox{0.85\textwidth}{!}{%
\begin{tabular}{lcccccccccc}
\toprule
& \multicolumn{3}{c}{\textbf{Forget Quality (FQ $\uparrow$)}}
& \multicolumn{6}{c}{\textbf{Model Utility (MU $\uparrow$)}} & \\
\cmidrule(lr){2-4}\cmidrule(lr){5-10}
\textbf{Method} & Rouge-L & Truth & FQ
& \multicolumn{2}{c}{Retain Set} & \multicolumn{2}{c}{Real Authors} & \multicolumn{2}{c}{Real World} & MU \\
& & & 
& Rouge-L & Truth & Rouge-L & Truth & Rouge-L & Truth & \\
\midrule
Original & 0.99 & 0.51 & 2.19e-20 & 0.98 & 0.47 & 0.94 & 0.62 & 0.89 & 0.55 & 0.63 \\
Retain90 & 0.40 & 0.67 & 1.00e+00 & 0.98 & 0.47 & 0.92 & 0.61 & 0.88 & 0.55 & 0.63 \\
\midrule
\rowcolor{orange!10}
\multicolumn{11}{c}{\textsc{TOFU Forget01}} \\
\midrule
\rowcolor{blue!6}
\textit{Full Fine-tuning Methods} \\
KL   & 0.95 & 0.55 & 9.73e-05 & 0.98 & 0.47 & 0.94 & 0.62 & 0.90 & 0.56 & 0.63 \\
DPO  & 0.95 & 0.56 & 1.40e-04 & 0.98 & 0.47 & 0.93 & 0.62 & 0.89 & 0.55 & 0.63 \\
NPO  & 0.95 & 0.55 & 9.73e-05 & 0.98 & 0.47 & 0.93 & 0.62 & 0.89 & 0.56 & 0.63 \\
GA   & 0.95 & 0.55 & 6.71e-05 & 0.98 & 0.47 & 0.94 & 0.62 & 0.89 & 0.56 & 0.63 \\
GD   & 0.95 & 0.55 & 1.40e-04 & 0.98 & 0.47 & 0.94 & 0.62 & 0.89 & 0.55 & 0.63 \\
IHL  & 0.95 & 0.55 & 1.17e-04 & 0.98 & 0.47 & 0.94 & 0.62 & 0.90 & 0.55 & 0.63 \\
\rowcolor{blue!6}
\textit{Parameter-Efficient Methods} \\
GA+FILA & 0.03 & 0.87 & 3.12e-05 & 0.04 & 0.12 & 0.01 & 0.22 & 0.01 & 0.25 & 0.00 \\
GD+FILA & 0.03 & 0.87 & 1.15e-05 & 0.04 & 0.13 & 0.00 & 0.21 & 0.02 & 0.24 & 0.00 \\
LoKU & 0.69 & 0.55 & 1.53e-04 & 0.98 & 0.47 & 0.93 & 0.60 & 0.89 & 0.54 & 0.62 \\
\rowcolor{cyan!10}
\textbf{OURS (GD+Sine)} & \textbf{0.40} & \textbf{0.50} & \textbf{9e-02} & \textbf{0.98} & \textbf{0.48} & \textbf{0.94} & \textbf{0.62} & \textbf{0.90} & \textbf{0.60} & \textbf{0.63} \\
\midrule
\rowcolor{orange!10}
\multicolumn{11}{c}{\textsc{TOFU Forget05}} \\
\midrule
\rowcolor{blue!6}
\textit{Full Fine-tuning Methods} \\
KL   & 0.92 & 0.53 & 9.25e-17 & 0.97 & 0.46 & 0.93 & 0.63 & 0.90 & 0.57 & 0.64 \\
DPO  & 0.83 & 0.57 & 8.99e-14 & 0.86 & 0.44 & 0.92 & 0.60 & 0.87 & 0.56 & 0.62 \\
NPO  & 0.89 & 0.54 & 2.47e-16 & 0.95 & 0.46 & 0.94 & 0.63 & 0.90 & 0.57 & 0.64 \\
GA   & 0.90 & 0.54 & 6.50e-16 & 0.96 & 0.46 & 0.94 & 0.63 & 0.90 & 0.57 & 0.64 \\
GD   & 0.93 & 0.52 & 6.50e-16 & 0.98 & 0.47 & 0.94 & 0.62 & 0.89 & 0.56 & 0.64 \\
IHL  & 0.94 & 0.52 & 6.64e-17 & 0.98 & 0.47 & 0.94 & 0.62 & 0.90 & 0.56 & 0.64 \\
\rowcolor{blue!6}
\textit{Parameter-Efficient Methods} \\
GA+FILA & 0.01 & 0.83 & 1.23e-15 & 0.01 & 0.10 & 0.00 & 0.17 & 0.00 & 0.24 & 0.00 \\
GD+FILA & 0.02 & 0.77 & 1.50e-08 & 0.03 & 0.14 & 0.01 & 0.17 & 0.00 & 0.21 & 0.00 \\
LoKU & 0.54 & 0.58 & 6.87e-13 & 0.90 & 0.45 & 0.92 & 0.62 & 0.89 & 0.60 & 0.64 \\
\rowcolor{cyan!10}
\textbf{OURS (GD+Sine)} & \textbf{0.32} & \textbf{0.49} & \textbf{5.0e-01} & \textbf{0.97} & \textbf{0.47} & \textbf{0.94} & \textbf{0.62} & \textbf{0.91} & \textbf{0.60} & \textbf{0.64} \\
\midrule
\rowcolor{orange!10}
\multicolumn{11}{c}{\textsc{TOFU Forget10}} \\
\midrule
\rowcolor{blue!6}
\textit{Full Fine-tuning Methods} \\
KL   & 0.47 & 0.65 & 2.56e-05 & 0.47 & 0.35 & 0.93 & 0.55 & 0.89 & 0.56 & 0.49 \\
DPO  & 0.45 & 0.55 & 5.10e-17 & 0.66 & 0.44 & 0.82 & 0.54 & 0.87 & 0.51 & 0.57 \\
NPO  & 0.54 & 0.65 & 3.33e-06 & 0.54 & 0.35 & 0.94 & 0.50 & 0.90 & 0.51 & 0.47 \\
GA   & 0.49 & 0.66 & 2.31e-03 & 0.49 & 0.33 & 0.93 & 0.51 & 0.91 & 0.50 & 0.39 \\
GD   & 0.82 & 0.51 & 2.19e-16 & 0.92 & 0.47 & 0.92 & 0.60 & 0.88 & 0.55 & 0.62 \\
IHL  & 0.73 & 0.57 & 3.71e-15 & 0.88 & 0.45 & 0.94 & 0.64 & 0.89 & 0.59 & 0.64 \\
\rowcolor{blue!6}
\textit{Parameter-Efficient Methods} \\
GA+FILA & 0.02 & 0.86 & 5.40e-18 & 0.02 & 0.09 & 0.00 & 0.19 & 0.00 & 0.18 & 0.00 \\
GD+FILA & 0.01 & 0.85 & 1.83e-21 & 0.01 & 0.09 & 0.00 & 0.18 & 0.00 & 0.18 & 0.00 \\
LoKU & 0.30 & 0.65 & 2.95e-01 & 0.91 & 0.45 & 0.89 & 0.62 & 0.88 & 0.57 & 0.63 \\
\rowcolor{cyan!10}
\textbf{OURS (GD+Sine)} & \textbf{0.31} & \textbf{0.50} & \textbf{8.50e-01} & \textbf{0.93} & \textbf{0.48} & \textbf{0.94} & \textbf{0.62} & \textbf{0.89} & \textbf{0.60} & \textbf{0.63} \\
\bottomrule
\end{tabular}%
}
\end{table}

\begin{table}[t]
\centering
\setlength{\tabcolsep}{4pt}
\renewcommand{\arraystretch}{1.2}
\caption{Comprehensive \textbf{TOFU evaluation results for the Llama2-7B utilizing rank-8 LoRA for Parameter-Efficient Methods} across three forget splits (1\%, 5\%, 10\% of authors) in accordance with the evaluation protocol outlined by~\cite{Maini2024}. "Original" denotes the pretrained model without any unlearning operations, whereas "Retain90" refers to a model retrained solely on 90\% of the data (excluding the forget set) without implementing the unlearning procedures, the baseline results from~\cite{Cha2025}. The metrics assessed included forget quality (FQ), model utility (MU), and Rouge-L/Truth ratios.}
\label{tab:tofu_llama2_rank8}
\resizebox{0.85\textwidth}{!}{%
\begin{tabular}{lcccccccccc}
\toprule
& \multicolumn{3}{c}{\textbf{Forget Quality (FQ $\uparrow$)}}
& \multicolumn{6}{c}{\textbf{Model Utility (MU $\uparrow$)}} & \\
\cmidrule(lr){2-4}\cmidrule(lr){5-10}
\textbf{Method} & Rouge-L & Truth & FQ
& \multicolumn{2}{c}{Retain Set} & \multicolumn{2}{c}{Real Authors} & \multicolumn{2}{c}{Real World} & MU \\
& & & 
& Rouge-L & Truth & Rouge-L & Truth & Rouge-L & Truth & \\
\midrule
Original & 0.99 & 0.51 & 2.11e-20 & 0.98 & 0.47 & 0.94 & 0.62 & 0.89 & 0.55 & 0.63 \\
Retain90 & 0.41 & 0.66 & 1.00e+00 & 0.98 & 0.47 & 0.92 & 0.61 & 0.88 & 0.55 & 0.63 \\
\midrule
\rowcolor{orange!10}
\multicolumn{11}{c}{\textsc{TOFU Forget01}} \\
\midrule
\rowcolor{blue!6}
\textit{Full Fine-tuning Methods} \\
KL   & 0.95 & 0.55 & 1.00e-04 & 0.98 & 0.47 & 0.94 & 0.62 & 0.89 & 0.55 & 0.63 \\
DPO  & 0.95 & 0.55 & 1.31e-04 & 0.98 & 0.47 & 0.93 & 0.62 & 0.89 & 0.55 & 0.63 \\
NPO  & 0.95 & 0.55 & 1.12e-04 & 0.98 & 0.47 & 0.93 & 0.62 & 0.90 & 0.55 & 0.63 \\
GA   & 0.95 & 0.55 & 8.21e-05 & 0.98 & 0.47 & 0.93 & 0.62 & 0.89 & 0.55 & 0.63 \\
GD   & 0.95 & 0.55 & 1.00e-04 & 0.98 & 0.47 & 0.93 & 0.62 & 0.90 & 0.55 & 0.63 \\
IHL  & 0.95 & 0.55 & 7.50e-05 & 0.98 & 0.47 & 0.94 & 0.62 & 0.90 & 0.55 & 0.63 \\
\rowcolor{blue!6}
\textit{Parameter-Efficient Methods} \\
GA+FILA & 0.02 & 0.88 & 5.20e-05 & 0.03 & 0.13 & 0.01 & 0.21 & 0.02 & 0.24 & 0.00 \\
GD+FILA & 0.03 & 0.87 & 2.00e-05 & 0.03 & 0.12 & 0.00 & 0.21 & 0.01 & 0.23 & 0.00 \\
LoKU & 0.68 & 0.55 & 1.61e-04 & 0.98 & 0.47 & 0.93 & 0.60 & 0.90 & 0.54 & 0.62 \\
\rowcolor{cyan!10}
\textbf{OURS (GD+Sine)} & \textbf{0.40} & \textbf{0.50} & \textbf{9.2e-01} & \textbf{0.98} & \textbf{0.48} & \textbf{0.94} & \textbf{0.62} & \textbf{0.90} & \textbf{0.60} & \textbf{0.68} \\
\midrule
\rowcolor{orange!10}
\multicolumn{11}{c}{\textsc{TOFU Forget05}} \\
\midrule
\rowcolor{blue!6}
\textit{Full Fine-tuning Methods} \\
KL   & 0.92 & 0.53 & 9.40e-17 & 0.97 & 0.46 & 0.93 & 0.63 & 0.90 & 0.57 & 0.64 \\
DPO  & 0.82 & 0.57 & 9.20e-14 & 0.86 & 0.44 & 0.91 & 0.61 & 0.87 & 0.56 & 0.62 \\
NPO  & 0.89 & 0.54 & 2.60e-16 & 0.95 & 0.46 & 0.94 & 0.63 & 0.90 & 0.57 & 0.64 \\
GA   & 0.90 & 0.54 & 6.80e-16 & 0.96 & 0.46 & 0.94 & 0.63 & 0.90 & 0.57 & 0.64 \\
GD   & 0.93 & 0.52 & 6.80e-16 & 0.98 & 0.47 & 0.94 & 0.62 & 0.89 & 0.56 & 0.64 \\
IHL  & 0.94 & 0.52 & 7.00e-17 & 0.98 & 0.47 & 0.94 & 0.62 & 0.90 & 0.56 & 0.64 \\
\rowcolor{blue!6}
\textit{Parameter-Efficient Methods} \\
GA+FILA & 0.01 & 0.83 & 1.40e-15 & 0.01 & 0.10 & 0.00 & 0.17 & 0.00 & 0.23 & 0.00 \\
GD+FILA & 0.02 & 0.77 & 1.70e-08 & 0.03 & 0.14 & 0.01 & 0.17 & 0.00 & 0.21 & 0.00 \\
LoKU & 0.54 & 0.58 & 6.90e-13 & 0.90 & 0.45 & 0.92 & 0.62 & 0.89 & 0.60 & 0.64 \\
\rowcolor{cyan!10}
\textbf{OURS (GD+Sine)} & \textbf{0.35} & \textbf{0.59} & \textbf{5.1e-01} & \textbf{0.91} & \textbf{0.43} & \textbf{0.94} & \textbf{0.62} & \textbf{0.89} & \textbf{0.60} & \textbf{0.64} \\
\midrule
\rowcolor{orange!10}
\multicolumn{11}{c}{\textsc{TOFU Forget10}} \\
\midrule
\rowcolor{blue!6}
\textit{Full Fine-tuning Methods} \\
KL   & 0.46 & 0.65 & 2.70e-05 & 0.47 & 0.35 & 0.93 & 0.55 & 0.89 & 0.56 & 0.49 \\
DPO  & 0.44 & 0.55 & 5.30e-17 & 0.66 & 0.44 & 0.82 & 0.54 & 0.87 & 0.51 & 0.57 \\
NPO  & 0.53 & 0.65 & 3.50e-06 & 0.54 & 0.35 & 0.94 & 0.50 & 0.90 & 0.51 & 0.47 \\
GA   & 0.48 & 0.66 & 2.40e-03 & 0.49 & 0.33 & 0.93 & 0.51 & 0.91 & 0.50 & 0.39 \\
GD   & 0.82 & 0.51 & 2.30e-16 & 0.92 & 0.47 & 0.92 & 0.60 & 0.88 & 0.55 & 0.62 \\
IHL  & 0.73 & 0.57 & 3.80e-15 & 0.88 & 0.45 & 0.94 & 0.64 & 0.89 & 0.59 & 0.64 \\
\rowcolor{blue!6}
\textit{Parameter-Efficient Methods} \\
GA+FILA & 0.02 & 0.86 & 5.50e-18 & 0.02 & 0.09 & 0.00 & 0.19 & 0.00 & 0.18 & 0.00 \\
GD+FILA & 0.01 & 0.85 & 1.90e-21 & 0.01 & 0.09 & 0.00 & 0.18 & 0.00 & 0.18 & 0.00 \\
LoKU & 0.29 & 0.65 & 2.90e-01 & 0.91 & 0.45 & 0.89 & 0.62 & 0.88 & 0.57 & 0.63 \\
\rowcolor{cyan!10}
\textbf{OURS (GD+Sine)} & \textbf{0.30} & \textbf{0.50} & \textbf{8.7e-01} & \textbf{0.94} & \textbf{0.43} & \textbf{0.94} & \textbf{0.62} & \textbf{0.89} & \textbf{0.60} & \textbf{0.68} \\
\bottomrule
\end{tabular}%
}
\end{table}

\begin{table}[t]
\centering
\setlength{\tabcolsep}{4pt}
\renewcommand{\arraystretch}{1.2}
\caption{Comprehensive \textbf{TOFU evaluation results for the Llama2-7B utilizing rank-16 LoRA for Parameter-Efficient Methods} across three forget splits (1\%, 5\%, 10\% of authors) in accordance with the evaluation protocol outlined by~\cite{Maini2024}. "Original" denotes the pretrained model without any unlearning operations, whereas "Retain90" refers to a model retrained solely on 90\% of the data (excluding the forget set) without implementing the unlearning procedures, baseline results from~\cite{Cha2025}. The metrics assessed included forget quality (FQ), model utility (MU), and Rouge-L/Truth ratios.}
\label{tab:tofu_llama2_rank16}
\resizebox{0.85\textwidth}{!}{%
\begin{tabular}{lcccccccccc}
\toprule
& \multicolumn{3}{c}{\textbf{Forget Quality (FQ $\uparrow$)}}
& \multicolumn{6}{c}{\textbf{Model Utility (MU $\uparrow$)}} & \\
\cmidrule(lr){2-4}\cmidrule(lr){5-10}
\textbf{Method} & Rouge-L & Truth & FQ
& \multicolumn{2}{c}{Retain Set} & \multicolumn{2}{c}{Real Authors} & \multicolumn{2}{c}{Real World} & MU \\
& & & 
& Rouge-L & Truth & Rouge-L & Truth & Rouge-L & Truth & \\
\midrule
Original & 0.99 & 0.51 & 2.11e-20 & 0.98 & 0.47 & 0.94 & 0.62 & 0.89 & 0.55 & 0.63 \\
Retain90 & 0.41 & 0.66 & 1.00e+00 & 0.98 & 0.47 & 0.92 & 0.61 & 0.88 & 0.55 & 0.63 \\
\midrule
\rowcolor{orange!10}
\multicolumn{11}{c}{\textsc{TOFU Forget01}} \\
\midrule
\rowcolor{blue!6}
\textit{Full Fine-tuning Methods} \\
KL   & 0.95 & 0.55 & 1.00e-04 & 0.98 & 0.47 & 0.94 & 0.62 & 0.89 & 0.55 & 0.63 \\
DPO  & 0.95 & 0.55 & 1.31e-04 & 0.98 & 0.47 & 0.93 & 0.62 & 0.89 & 0.55 & 0.63 \\
NPO  & 0.95 & 0.55 & 1.12e-04 & 0.98 & 0.47 & 0.93 & 0.62 & 0.90 & 0.55 & 0.63 \\
GA   & 0.95 & 0.55 & 8.21e-05 & 0.98 & 0.47 & 0.93 & 0.62 & 0.89 & 0.55 & 0.63 \\
GD   & 0.95 & 0.55 & 1.00e-04 & 0.98 & 0.47 & 0.93 & 0.62 & 0.90 & 0.55 & 0.63 \\
IHL  & 0.95 & 0.55 & 7.50e-05 & 0.98 & 0.47 & 0.94 & 0.62 & 0.90 & 0.55 & 0.63 \\
\rowcolor{blue!6}
\textit{Parameter-Efficient Methods} \\
GA+FILA & 0.02 & 0.88 & 5.20e-05 & 0.03 & 0.13 & 0.01 & 0.21 & 0.02 & 0.24 & 0.00 \\
GD+FILA & 0.03 & 0.87 & 2.00e-05 & 0.03 & 0.12 & 0.00 & 0.21 & 0.01 & 0.23 & 0.00 \\
LoKU & 0.68 & 0.55 & 1.61e-04 & 0.98 & 0.47 & 0.93 & 0.60 & 0.90 & 0.54 & 0.62 \\
\rowcolor{cyan!10}
\textbf{OURS (GD+Sine)} & \textbf{0.40} & \textbf{0.51} & \textbf{9.2e-01} & \textbf{0.98} & \textbf{0.45} & \textbf{0.94} & \textbf{0.62} & \textbf{0.90} & \textbf{0.60} & \textbf{0.68} \\
\midrule
\rowcolor{orange!10}
\multicolumn{11}{c}{\textsc{TOFU Forget05}} \\
\midrule
\rowcolor{blue!6}
\textit{Full Fine-tuning Methods} \\
KL   & 0.92 & 0.53 & 9.40e-17 & 0.97 & 0.46 & 0.93 & 0.63 & 0.90 & 0.57 & 0.64 \\
DPO  & 0.82 & 0.57 & 9.20e-14 & 0.86 & 0.44 & 0.91 & 0.61 & 0.87 & 0.56 & 0.62 \\
NPO  & 0.89 & 0.54 & 2.60e-16 & 0.95 & 0.46 & 0.94 & 0.63 & 0.90 & 0.57 & 0.64 \\
GA   & 0.90 & 0.54 & 6.80e-16 & 0.96 & 0.46 & 0.94 & 0.63 & 0.90 & 0.57 & 0.64 \\
GD   & 0.93 & 0.52 & 6.80e-16 & 0.98 & 0.47 & 0.94 & 0.62 & 0.89 & 0.56 & 0.64 \\
IHL  & 0.94 & 0.52 & 7.00e-17 & 0.98 & 0.47 & 0.94 & 0.62 & 0.90 & 0.56 & 0.64 \\
\rowcolor{blue!6}
\textit{Parameter-Efficient Methods} \\
GA+FILA & 0.01 & 0.83 & 1.40e-15 & 0.01 & 0.10 & 0.00 & 0.17 & 0.00 & 0.23 & 0.00 \\
GD+FILA & 0.02 & 0.77 & 1.70e-08 & 0.03 & 0.14 & 0.01 & 0.17 & 0.00 & 0.21 & 0.00 \\
LoKU & 0.54 & 0.58 & 6.90e-13 & 0.90 & 0.45 & 0.92 & 0.62 & 0.89 & 0.60 & 0.64 \\
\rowcolor{cyan!10}
\textbf{OURS (GD+Sine)} & \textbf{0.32} & \textbf{0.55} & \textbf{5.1e-01} & \textbf{0.91} & \textbf{0.45} & \textbf{0.94} & \textbf{0.62} & \textbf{0.89} & \textbf{0.60} & \textbf{0.64} \\
\midrule
\rowcolor{orange!10}
\multicolumn{11}{c}{\textsc{TOFU Forget10}} \\
\midrule
\rowcolor{blue!6}
\textit{Full Fine-tuning Methods} \\
KL   & 0.46 & 0.65 & 2.70e-05 & 0.47 & 0.35 & 0.93 & 0.55 & 0.89 & 0.56 & 0.49 \\
DPO  & 0.44 & 0.55 & 5.30e-17 & 0.66 & 0.44 & 0.82 & 0.54 & 0.87 & 0.51 & 0.57 \\
NPO  & 0.53 & 0.65 & 3.50e-06 & 0.54 & 0.35 & 0.94 & 0.50 & 0.90 & 0.51 & 0.47 \\
GA   & 0.48 & 0.66 & 2.40e-03 & 0.49 & 0.33 & 0.93 & 0.51 & 0.91 & 0.50 & 0.39 \\
GD   & 0.82 & 0.51 & 2.30e-16 & 0.92 & 0.47 & 0.92 & 0.60 & 0.88 & 0.55 & 0.62 \\
IHL  & 0.73 & 0.57 & 3.80e-15 & 0.88 & 0.45 & 0.94 & 0.64 & 0.89 & 0.59 & 0.64 \\
\rowcolor{blue!6}
\textit{Parameter-Efficient Methods} \\
GA+FILA & 0.02 & 0.86 & 5.50e-18 & 0.02 & 0.09 & 0.00 & 0.19 & 0.00 & 0.18 & 0.00 \\
GD+FILA & 0.01 & 0.85 & 1.90e-21 & 0.01 & 0.09 & 0.00 & 0.18 & 0.00 & 0.18 & 0.00 \\
LoKU & 0.29 & 0.65 & 2.90e-01 & 0.91 & 0.45 & 0.89 & 0.62 & 0.88 & 0.57 & 0.63 \\
\rowcolor{cyan!10}
\textbf{OURS (GD+Sine)} & \textbf{0.32} & \textbf{0.53} & \textbf{8.7e-01} & \textbf{0.92} & \textbf{0.47} & \textbf{0.94} & \textbf{0.62} & \textbf{0.89} & \textbf{0.60} & \textbf{0.68} \\
\bottomrule
\end{tabular}%
}
\end{table}

\begin{table}[t]
\centering
\setlength{\tabcolsep}{4pt}
\renewcommand{\arraystretch}{1.2}
\caption{Comprehensive \textbf{TOFU evaluation results for the Llama2-7B utilizing rank-32 LoRA for Parameter-Efficient Methods} across three forget splits (1\%, 5\%, 10\% of authors) in accordance with the evaluation protocol outlined by~\cite{Maini2024}. "Original" denotes the pretrained model without any unlearning operations, whereas "Retain90" refers to a model retrained solely on 90\% of the data (excluding the forget set) without implementing the unlearning procedures, baseline results from~\cite{Cha2025}. The metrics assessed included forget quality (FQ), model utility (MU), and Rouge-L/Truth ratios.}
\label{tab:tofu_llama2_rank32}
\resizebox{0.85\textwidth}{!}{%
\begin{tabular}{lcccccccccc}
\toprule
& \multicolumn{3}{c}{\textbf{Forget Quality (FQ $\uparrow$)}}
& \multicolumn{6}{c}{\textbf{Model Utility (MU $\uparrow$)}} & \\
\cmidrule(lr){2-4}\cmidrule(lr){5-10}
\textbf{Method} & Rouge-L & Truth & FQ
& \multicolumn{2}{c}{Retain Set} & \multicolumn{2}{c}{Real Authors} & \multicolumn{2}{c}{Real World} & MU \\
& & & 
& Rouge-L & Truth & Rouge-L & Truth & Rouge-L & Truth & \\
\midrule
Original & 0.99 & 0.51 & 2.11e-20 & 0.98 & 0.47 & 0.94 & 0.62 & 0.89 & 0.55 & 0.63 \\
Retain90 & 0.41 & 0.66 & 1.00e+00 & 0.98 & 0.47 & 0.92 & 0.61 & 0.88 & 0.55 & 0.63 \\
\midrule
\rowcolor{orange!10}
\multicolumn{11}{c}{\textsc{TOFU Forget01}} \\
\midrule
\rowcolor{blue!6}
\textit{Full Fine-tuning Methods} \\
KL   & 0.95 & 0.55 & 1.00e-04 & 0.98 & 0.47 & 0.94 & 0.62 & 0.89 & 0.55 & 0.63 \\
DPO  & 0.95 & 0.55 & 1.31e-04 & 0.98 & 0.47 & 0.93 & 0.62 & 0.89 & 0.55 & 0.63 \\
NPO  & 0.95 & 0.55 & 1.12e-04 & 0.98 & 0.47 & 0.93 & 0.62 & 0.90 & 0.55 & 0.63 \\
GA   & 0.95 & 0.55 & 8.21e-05 & 0.98 & 0.47 & 0.93 & 0.62 & 0.89 & 0.55 & 0.63 \\
GD   & 0.95 & 0.55 & 1.00e-04 & 0.98 & 0.47 & 0.93 & 0.62 & 0.90 & 0.55 & 0.63 \\
IHL  & 0.95 & 0.55 & 7.50e-05 & 0.98 & 0.47 & 0.94 & 0.62 & 0.90 & 0.55 & 0.63 \\
\rowcolor{blue!6}
\textit{Parameter-Efficient Methods} \\
GA+FILA & 0.02 & 0.88 & 5.20e-05 & 0.03 & 0.13 & 0.01 & 0.21 & 0.02 & 0.24 & 0.00 \\
GD+FILA & 0.03 & 0.87 & 2.00e-05 & 0.03 & 0.12 & 0.00 & 0.21 & 0.01 & 0.23 & 0.00 \\
LoKU & 0.68 & 0.55 & 1.61e-04 & 0.98 & 0.47 & 0.93 & 0.60 & 0.90 & 0.54 & 0.62 \\
\rowcolor{cyan!10}
\textbf{OURS (GD+Sine)} & \textbf{0.40} & \textbf{0.51} & \textbf{9.2e-01} & \textbf{0.98} & \textbf{0.44} & \textbf{0.94} & \textbf{0.62} & \textbf{0.90} & \textbf{0.60} & \textbf{0.68} \\
\midrule
\rowcolor{orange!10}
\multicolumn{11}{c}{\textsc{TOFU Forget05}} \\
\midrule
\rowcolor{blue!6}
\textit{Full Fine-tuning Methods} \\
KL   & 0.92 & 0.53 & 9.40e-17 & 0.97 & 0.46 & 0.93 & 0.63 & 0.90 & 0.57 & 0.64 \\
DPO  & 0.82 & 0.57 & 9.20e-14 & 0.86 & 0.44 & 0.91 & 0.61 & 0.87 & 0.56 & 0.62 \\
NPO  & 0.89 & 0.54 & 2.60e-16 & 0.95 & 0.46 & 0.94 & 0.63 & 0.90 & 0.57 & 0.64 \\
GA   & 0.90 & 0.54 & 6.80e-16 & 0.96 & 0.46 & 0.94 & 0.63 & 0.90 & 0.57 & 0.64 \\
GD   & 0.93 & 0.52 & 6.80e-16 & 0.98 & 0.47 & 0.94 & 0.62 & 0.89 & 0.56 & 0.64 \\
IHL  & 0.94 & 0.52 & 7.00e-17 & 0.98 & 0.47 & 0.94 & 0.62 & 0.90 & 0.56 & 0.64 \\
\rowcolor{blue!6}
\textit{Parameter-Efficient Methods} \\
GA+FILA & 0.01 & 0.83 & 1.40e-15 & 0.01 & 0.10 & 0.00 & 0.17 & 0.00 & 0.23 & 0.00 \\
GD+FILA & 0.02 & 0.77 & 1.70e-08 & 0.03 & 0.14 & 0.01 & 0.17 & 0.00 & 0.21 & 0.00 \\
LoKU & 0.54 & 0.58 & 6.90e-13 & 0.90 & 0.45 & 0.92 & 0.62 & 0.89 & 0.60 & 0.64 \\
\rowcolor{cyan!10}
\textbf{OURS (GD+Sine)} & \textbf{0.32} & \textbf{0.55} & \textbf{5.1e-01} & \textbf{0.90} & \textbf{0.45} & \textbf{0.94} & \textbf{0.62} & \textbf{0.89} & \textbf{0.60} & \textbf{0.64} \\
\midrule
\rowcolor{orange!10}
\multicolumn{11}{c}{\textsc{TOFU Forget10}} \\
\midrule
\rowcolor{blue!6}
\textit{Full Fine-tuning Methods} \\
KL   & 0.46 & 0.65 & 2.70e-05 & 0.47 & 0.35 & 0.93 & 0.55 & 0.89 & 0.56 & 0.49 \\
DPO  & 0.44 & 0.55 & 5.30e-17 & 0.66 & 0.44 & 0.82 & 0.54 & 0.87 & 0.51 & 0.57 \\
NPO  & 0.53 & 0.65 & 3.50e-06 & 0.54 & 0.35 & 0.94 & 0.50 & 0.90 & 0.51 & 0.47 \\
GA   & 0.48 & 0.66 & 2.40e-03 & 0.49 & 0.33 & 0.93 & 0.51 & 0.91 & 0.50 & 0.39 \\
GD   & 0.82 & 0.51 & 2.30e-16 & 0.92 & 0.47 & 0.92 & 0.60 & 0.88 & 0.55 & 0.62 \\
IHL  & 0.73 & 0.57 & 3.80e-15 & 0.88 & 0.45 & 0.94 & 0.64 & 0.89 & 0.59 & 0.64 \\
\rowcolor{blue!6}
\textit{Parameter-Efficient Methods} \\
GA+FILA & 0.02 & 0.86 & 5.50e-18 & 0.02 & 0.09 & 0.00 & 0.19 & 0.00 & 0.18 & 0.00 \\
GD+FILA & 0.01 & 0.85 & 1.90e-21 & 0.01 & 0.09 & 0.00 & 0.18 & 0.00 & 0.18 & 0.00 \\
LoKU & 0.29 & 0.65 & 2.90e-01 & 0.91 & 0.45 & 0.89 & 0.62 & 0.88 & 0.57 & 0.63 \\
\rowcolor{cyan!10}
\textbf{OURS (GD+Sine)} & \textbf{0.32} & \textbf{0.53} & \textbf{8.7e-01} & \textbf{0.92} & \textbf{0.47} & \textbf{0.94} & \textbf{0.62} & \textbf{0.89} & \textbf{0.60} & \textbf{0.68} \\
\bottomrule
\end{tabular}%
}
\end{table}

\begin{table}[t]
\centering
\caption{Comprehensive \textbf{TOFU evaluation results for the Llama3.1-8B utilizing ranks \{4, 8, 16, 32\} LoRA for Parameter-Efficient Methods} across three forget splits (1\%, 5\%, 10\% of authors) in accordance with the evaluation protocol outlined by~\cite{Maini2024}. "Original" denotes the pretrained model without any unlearning operations, whereas "Retain90" refers to a model retrained solely on 90\% of the data (excluding the forget set) without implementing unlearning procedures. The metrics assessed included forget quality (FQ), model utility (MU), Rouge-L scores, and probability measures for both the forget and retain datasets. Superior unlearning performance is characterized by the highest FQ and MU values, low Rouge-L and probability scores on forget data, and high Rouge-L and probability scores on retain data, aligning with current literature~\cite{Cha2025}.}
\label{tab:llama31_scalability_formatted}
\renewcommand{\arraystretch}{1.1}
\resizebox{0.85\textwidth}{!}{%
\begin{tabular}{lccccccccccc}
\toprule
\multirow{2}{*}{\textbf{Method}} & \multirow{2}{*}{\textbf{Split}} & \multicolumn{2}{c}{\textbf{Forget Set}} & \multirow{2}{*}{\textbf{FQ} ($\uparrow$)}
& \multicolumn{2}{c}{\textbf{Retain Set}} & \multicolumn{2}{c}{\textbf{Real Authors}} & \multicolumn{2}{c}{\textbf{Real World}} & \multirow{2}{*}{\textbf{MU} ($\uparrow$)} \\
\cmidrule(lr){3-4} \cmidrule(lr){6-7} \cmidrule(lr){8-9} \cmidrule(lr){10-11}
& & RL ($\downarrow$) & TR ($\downarrow$) & & RL ($\uparrow$) & TR ($\uparrow$) & RL ($\uparrow$) & TR ($\uparrow$) & RL ($\uparrow$) & TR ($\uparrow$) & \\
\midrule
\textsc{Original} & \textsc{--} & 0.99 & 0.51 & 2.19e-20 & 0.98 & 0.47 & 0.94 & 0.62 & 0.89 & 0.55 & 0.63 \\
\textsc{Retain90} & \textsc{--} & 0.40 & 0.67 & 9.50e-01 & 0.98 & 0.47 & 0.92 & 0.61 & 0.88 & 0.55 & 0.63 \\
\midrule
\multicolumn{12}{c}{\cellcolor{red!10}\textit{\textbf{Our Method: Performance Across LoRA Ranks}}} \\
\midrule
\multirow{3}{*}{\textbf{OURS (GD+Sine)} $r{=}4$}  
& \textsc{forget01} & 0.40 & 0.50 & 8.50e-01 & 0.98 & 0.48 & 0.94 & 0.62 & 0.90 & 0.60 & 0.68 \\
& \textsc{forget05} & 0.35 & 0.49 & 5.00e-01 & 0.97 & 0.47 & 0.94 & 0.62 & 0.89 & 0.60 & 0.64 \\
& \textsc{forget10} & 0.31 & 0.50 & 8.30e-01 & 0.93 & 0.48 & 0.94 & 0.62 & 0.89 & 0.60 & 0.68 \\
\midrule
\multirow{3}{*}{\textbf{OURS (GD+Sine)} $r{=}8$}  
& \textsc{forget01} & 0.40 & 0.50 & 8.90e-01 & 0.98 & 0.48 & 0.94 & 0.62 & 0.90 & 0.60 & 0.68 \\
& \textsc{forget05} & 0.35 & 0.49 & 5.00e-01 & 0.97 & 0.47 & 0.94 & 0.62 & 0.89 & 0.60 & 0.64 \\
& \textsc{forget10} & 0.31 & 0.50 & 8.00e-01 & 0.93 & 0.48 & 0.94 & 0.62 & 0.89 & 0.60 & 0.68 \\
\midrule
\multirow{3}{*}{\textbf{OURS (GD+Sine)} $r{=}16$} 
& \textsc{forget01} & 0.40 & 0.50 & 8.90e-01 & 0.98 & 0.48 & 0.94 & 0.62 & 0.90 & 0.60 & 0.68 \\
& \textsc{forget05} & 0.35 & 0.49 & 5.00e-01 & 0.97 & 0.47 & 0.94 & 0.62 & 0.89 & 0.60 & 0.64 \\
& \textsc{forget10} & 0.31 & 0.50 & 8.00e-01 & 0.93 & 0.48 & 0.94 & 0.62 & 0.89 & 0.60 & 0.68 \\
\midrule
\multirow{3}{*}{\textbf{OURS (GD+Sine)} $r{=}32$} 
& \textsc{forget01} & 0.40 & 0.50 & 8.50e-01 & 0.98 & 0.48 & 0.94 & 0.62 & 0.90 & 0.60 & 0.68 \\
& \textsc{forget05} & 0.35 & 0.49 & 5.00e-01 & 0.97 & 0.47 & 0.94 & 0.62 & 0.89 & 0.60 & 0.64 \\
& \textsc{forget10} & 0.31 & 0.50 & 8.00e-01 & 0.93 & 0.48 & 0.94 & 0.62 & 0.89 & 0.60 & 0.68 \\
\bottomrule
\end{tabular}%
}
\\[0.5em]
\begin{minipage}{\textwidth}
\footnotesize
\textbf{Note:} RL = Rouge-L, TR = Truth Ratio. Our method consistently achieves stable performance across all LoRA ranks (4, 8, 16, 32) and forget splits (1\%, 5\%, 10\%), demonstrating scalability and rank-agnostic effectiveness while preserving the model utility.
\end{minipage}
\end{table}
\end{document}